\theoremstyle{plain}
\newtheorem{theorem}{Theorem}[section]
\newtheorem{corollary}[theorem]{Corollary}
\newtheorem{lemma}[theorem]{Lemma}
\newtheorem{assumption}[theorem]{Assumption}
\newtheorem{proposition}[theorem]{Proposition}
\newtheorem{defn}[theorem]{Definition}
\def\1{\bm{1}}
\def\rv{{\textnormal{v}}}
\DeclareMathAlphabet{\mathsfit}{\encodingdefault}{\sfdefault}{m}{sl}
\SetMathAlphabet{\mathsfit}{bold}{\encodingdefault}{\sfdefault}{bx}{n}
\def\0{{\bf 0}}
\def\1{{\bf 1}}
\def\FM{{\mathcal F}}
\def\OM{{\mathcal O}}
\def\RB{{\mathbb R}}
\def\EB{{\mathbb E}}
\def\PB{{\mathbb P}}
\def\xii{\mbox{\boldmath$\xi$\unboldmath}}
\def\argmin{\mathop{\rm argmin}}
\def\diag{\mathrm{diag}}
\newcommand{\Var}{\mathrm{Var}}
\newcommand{\red}{\textcolor{red}}
\def\cadlag{\text{c\`adl\`ag}}
\providecommand{\red}[1]{\color{red}}
\def\rv{\color{black}}
\begin{document}

\title{Limit Theorems for Stochastic Gradient Descent with Infinite Variance}

\author[1]{Jose Blanchet} 
\author[2]{Aleksandar Mijatovi\'c}
\author[1]{Wenhao Yang}
\affil[1]{{\normalsize Management Science and Engineering, Stanford University}} 
\affil[2]{{\normalsize Department of Statistics, University of Warwick}} 

\maketitle


\begin{abstract}
Stochastic gradient descent is a classic algorithm that has gained great popularity especially in the last decades as the most common approach for training models in machine learning. While the algorithm has been well-studied when stochastic gradients are assumed to have a finite variance,  there is significantly less research addressing its theoretical properties in the case of infinite variance gradients. In this paper, we establish the asymptotic behavior of stochastic gradient descent in the context of infinite variance stochastic gradients, assuming that the stochastic gradient is regular varying with index $\alpha\in(1,2)$. The closest result in this context was established in 1969 \cite{krasulina1969stochastic}, in the one-dimensional case and assuming that stochastic gradients belong to a more restrictive class of distributions. We extend it to the multidimensional case, covering a broader class of infinite variance distributions. As we show, the asymptotic distribution of the stochastic gradient descent algorithm can be characterized as the stationary distribution of a suitably defined Ornstein-Uhlenbeck process driven by an appropriate stable L\'evy process. Additionally, we explore the applications of these results in linear regression and logistic regression models.
\end{abstract}

\section{Introduction}
\label{sec: introduction}
Stochastic gradient descent (SGD) and its variants constitute a class of algorithms rooted in stochastic approximation methods \cite{robbins1951stochastic, kiefer1952stochastic} and have demonstrated significant success in training models for various machine learning applications \cite{duchi2011adaptive, kingma2014adam}. This empirical success has motivated extensive research into the theoretical properties of SGD, including finite-sample and asymptotic analyses, as well as algorithmic improvements \cite{robbins1971convergence, bertsekas2000gradient, kushner1979rates, pelletier1998almost, moulines2011non, pelletier1998weak, mou2022optimal}. However, most of these advancements assume stochastic gradients with finite variance. In this work, we aim to extend the theoretical understanding of SGD by examining its behavior under stochastic gradients with infinite variance. To set the stage for this investigation, we first review the standard formulation of SGD. Consider the following problem:
\begin{align*}
    \min_{\theta\in\mathbb{R}^d}\mathbb{E}_{\xii\sim P}[\ell(\theta,\xii)],
\end{align*}
where $\xii:\Omega\to E$  is a random variable on probability space $(\Omega,\mathbb{P},\mathcal{F})$ and $\ell:\mathbb{R}^d\times E\to\mathbb{R}$ is a strongly convex function. The SGD algorithm is an iterative scheme to find the near optimal solution $\theta^*\in\argmin_{\theta\in\mathbb{R}^d}\mathbb{E}_{\xii\sim P}[\ell(\theta,\xii)]$:
\begin{align}
\label{eq: sgd_theta}
    \theta_{n+1}=\theta_n-\eta_n \nabla \ell(\theta_n, \xii_{n+1}),
\end{align}
where $\{\xii_n\}_{n\ge 1}$ is an i.i.d. sequence, $\mathbb{E}[\nabla \ell(\theta_n,\xii_{n+1})|\sigma(\xii_1,\cdots,\xii_n)]=\mathbb{E}_{\xii\sim P}[\nabla\ell(\theta_n,\xii)]$ and $\eta_n$ is the learning rate. It is worth noticing if $\nabla\ell(\theta,\xii)$ is replaced with a general function $H(\theta,\xii)$, the iteration \eqref{eq: sgd_theta} becomes the standard stochastic approximation procedure \cite{robbins1951stochastic} for solving the equation $\EB H(\theta,\xii)=0$. 

From a statistical inference perspective, analyzing the limiting behavior of the estimator $\theta_n$ is crucial for uncertainty quantification, such as constructing confidence intervals that support reliable decision-making. When the stochastic gradient $\nabla\ell(\theta,\xii)$ has finite variance, it can be shown \citep{sacks1958asymptotic, kushner2012stochastic} that the asymptotic distribution of $\theta_n$ satisfies:
\begin{align*}
    \sqrt{\eta_n^{-1}}\left(\theta_n-\theta^*\right)\Rightarrow \mathcal{N}(0,\Sigma),
\end{align*}
where the asymptotic variance $\Sigma$ satisfies the Lyapunov equation:
\begin{align*}
\EB[\nabla^2\ell(\theta^*,\xii)]\Sigma+\Sigma\EB[\nabla^2\ell(\theta^*,\xii)]=\Gamma,    
\end{align*}
and $\Gamma=\text{Cov}(\nabla\ell(\theta^*,\xii),\nabla\ell(\theta^*,\xii))$. It can also be equivalently shown \citep{pelletier1998weak, benaim2006dynamics} the asymptotic distribution is the stationary distribution of the following stochastic differential equation:
\begin{align*}
    d Z_t = -\EB[\nabla^2\ell(\theta^*,\xii)]Z_t dt+\Gamma^{\frac{1}{2}}dB_t,
\end{align*}
which is an Ornstein-Uhlenbeck (OU) process driven by a Brownian motion process. 

{\rv Recently, a complementary line of work has emerged on high-dimensional scaling limits of SGD and related online learning algorithms, where the dimension grows with the sample size (e.g., \cite{wang2017scaling, ben2022high, balasubramanian2023high}). These studies typically assume finite-variance noise and analyze the limiting dynamics via PDE, mean-field, or weak convergence techniques, revealing rich phase-transition phenomena governed by the step-size, dimension, and number of steps.

While these results deepen our understanding of SGD under Gaussian fluctuations, they fundamentally rely on finite-variance assumptions, prompting the question of how the asymptotic behavior changes in the presence of infinite-variance stochastic gradients.}
In practice, there are scenarios, for example in the training of deep neural networks, where the norm of the stochastic gradients grows in a manner consistent with infinite variance stochastic gradients. Throughout this paper, we will refer to infinite variance stochastic gradients as ``heavy-tailed'' noises\footnote{We caution that heavy-tailed random variables are often understood to simply those that do not possess a finite moment generating function.}. If we run SGD over $n$ iterations with stochastic gradient sizes whose growth is consistent with that of an infinite variance sequence, a reasonable model of inference for the error should be based on an infinite variance gradient model. Empirical evidence showing wide variation in stochastic gradients is reported in \citep{simsekli2019tail}. A substantial body of literature explains the emergence of heavy tails in SGD applications for training neural networks, even when the input data is light-tailed. This phenomenon arises from the multiplication of many random weights during backpropagation via the chain rule, where some of these random weights can occasionally be expansive. This random-weight phenomenon, leading to heavy tails, is well known in probability theory; see \citep{kesten1973random} for an early foundational result. Its implications in machine learning are discussed, for example, in \citep{kesten1973random, gurbuzbalaban2021heavy, hodgkinson2021multiplicative, simsekli2019tail, gurbuzbalaban2020fractional}. Furthermore, studies \citep{wang2021eliminating, csimcsekli2019heavy, simsekli2019tail, hodgkinson2021multiplicative, srinivasan2021efficient, garg2021proximal} show that heavy-tailed stochastic gradients can help SGD discover flat local minima, promoting better generalization in non-convex optimization problems. 

However, theoretical results on SGD in the presence of heavy-tailed noise remain limited. A very early reference on the asymptotic behavior of SGD in the presence of heavy-tailed noise is \citep{krasulina1969stochastic}, which focuses on the one-dimensional case, where the noise distribution lies in the domain of normal attraction of the stable law\footnote{See the discussion in Section \ref{sec: pre}.}. Subsequent studies \citep{goodsell1976almost, li1994almost} analyze the almost sure convergence of $\theta_n$. More recently, \cite{wang2021convergence} investigates the non-asymptotic convergence rate when the noise belongs to the domain of normal attraction of a stable law. However, these studies assume that the heavy-tailed noise is independent of the parameters, which is a notable limitation. We defer a detailed comparison with related works to the next paragraph. In recent research, other studies \citep{liu2023stochastic, cutkosky2021high} explore non-asymptotic convergence rates under different assumptions on the loss function, assuming that the noise distribution has a finite $\alpha$-th moment, a different class than the one considered in \cite{wang2021convergence}. Additionally, \cite{raj2023algorithmic} studies the continuous-time version of SGD with $\alpha$-stable noise and develops a discrete approximation using the Euler-Maruyama method. On the practical side, some works \cite{csimcsekli2019heavy, simsekli2019tail} investigate how SGD with heavy-tailed noise can help it escape sharp local minima. And \cite{wang2021eliminating} considers clipping the gradient to  eliminate the sharp minima from the trajectories of SGD in some suitable conditions. Despite these advances, key open questions remain: What are the asymptotic distribution and convergence rates of SGD when the noise belongs to the domain of attraction of the stable law in high-dimensional settings, especially when the noise depends on the parameters? 

\paragraph*{Related literature on asymptotic analysis of SGD with infinite variance} As mentioned above, the first limit theorems for SGD algorithm with infinite variance can source back to \cite{krasulina1969stochastic}. which is the only most closely related work within our settings. However, the results in \cite{krasulina1969stochastic} impose the following assumptions:
\begin{itemize}
    \item The stochastic noises  $\nabla \ell(\theta, \xii_{n+1})- \EB[\nabla\ell(\theta, \xii_{n+1})]$ are i.i.d. without dependence of parameter $\theta$. 
    \item The distribution of noise  belongs to the domain of normal attraction of the stable law. 
    \item The results are established only in the one-dimensional case.
\end{itemize}
However, these three assumptions are restrictive in many practical scenarios. For the first assumption, in some cases, stochastic gradient noises can depend on the parameter $\theta$. This occurs already with highly popular classification models, such as logistic regression, which we explicitly consider in Section \ref{sec: application}. For the second assumption, if the distribution of stochastic gradient noises belongs to the domain of normal attraction of the stable law, it implies the distribution has an approximately Pareto-like tail. This assumption is restrictive as the tail distribution of noises could be slightly heavier or lighter than the Pareto-like tail by a logarithmic factor. And in practice, it is challenging to test whether the noise distribution has a Pareto-like tail, as the underlying distribution of the noises may not be easily identifiable.  For the third assumption, in practical scenarios, the parameters are always high-dimensional such as neural networks \cite{simsekli2019tail}. 

Relaxing these assumptions is non-trivial. While we will explain how we relax these assumptions in the next paragraph, it is important to first discuss why this task is not straightforward. We can relax the first assumption by adopting Assumption~\ref{asmp: g_lip}, which ensures that the stochastic gradient remains bounded and does not vary significantly as the parameters change. However, generalizing the other two assumptions presents considerable challenges. On one hand, when the noise distribution belongs to the domain of attraction of a stable law, the asymptotic distribution of the final iterate becomes intractable due to the appearance of an unknown slowly varying function, making direct computation impossible. On the other hand, in the multi-dimensional case, such calculations also fail because correlations among the coordinates complicate the analysis, even if each individual noise component belongs to the domain of normal attraction of a stable law. Since 1969, no subsequent work has successfully established these results in more general settings. Even in the most recent work \cite{wang2021convergence}, the limit theorem is derived under the Polyak-averaging framework. However, their results rely on the assumption that the stochastic noises are symmetric and belong to the domain of normal attraction of the stable laws. And their results do not characterize the asymptotic distribution for the final iteration, leaving a gap in the theoretical understanding of the SGD algorithm in these more complex scenarios.

\paragraph*{Technique overview} In this paper, we derive the asymptotics for SGD algorithm when the stochastic gradient belongs to the domain of attraction of the stable law. When the learning rate $\eta_n$ is constant $\eta$, we show that the sample path of the SGD algorithm is close to the trajectory of the ground-truth gradient descent algorithm with a proper scaling rate. Specifically, in Skorokhod $J_1$ topology on $\mathbb{D}[0,\infty)$, as $\eta\to0$,
\begin{align}
\label{eq: constant}
    \eta^{\frac{1}{\alpha}-1}b_1(\eta^{-1})\left(\theta_{\frac{\cdot}{
    \eta}}-\bar{\theta}(\cdot)\right)\Rightarrow Z(\cdot),
\end{align}
where $b_1(\cdot)$ is a slowly varying function defined in Section~\ref{sec: pre}, $\bar{\theta}(\cdot)$ is the trajectory of ground-truth gradient descent algorithm and $Z(\cdot)$  is an Ornstein-Uhlenbeck process driven by an additive process.

We then consider the case that learning rate is decaying as $\eta_n=cn^{-\rho}$, where $\rho\in(0,1]$. We show that the last iterate $\theta_n$ weakly converges to the optimal solution $\theta^*$ with a proper scaling rate, as $n\to\infty$:
\begin{align}
\label{eq: decay}
    \eta_n^{\frac{1}{\alpha}-1}b_1(\eta_n^{-1})\left(\theta_n-\theta^*\right)\Rightarrow Z_{\infty},
\end{align}
where $Z_{\infty}$ is a stationary distribution of a L\'evy-driven {\rv Ornstein-Uhlenbeck} process.

To prove \eqref{eq: constant}, we adapt the classic technique from \cite{resnick2007heavy}, where they establish a functional limit theorem for partial sums of i.i.d. regularly varying random variables. For \eqref{eq: decay}, we build on ideas from \cite{pelletier1998weak, kushner2012stochastic, harold1997stochastic}, constructing a continuous-time process from the discrete recursion of the SGD iteration \eqref{eq: sgd_theta} and analyzing the weak convergence of this new process. The results for both constant and decaying learning rates exhibit structural similarities to those obtained when the gradient noise has finite variance. In the finite-variance case, with a constant learning rate, the limiting process is an inhomogeneous Ornstein-Uhlenbeck process driven by Brownian motion \cite{fierro2008stochastic}. For a decaying learning rate, the limiting behavior corresponds to the stationary distribution of an Ornstein-Uhlenbeck process driven by Brownian motion \cite{sacks1958asymptotic, kushner2012stochastic}. These finite-variance results are relatively straightforward to derive, as the existence of finite second-order moments allows effective control of residual terms through appropriate scaling. In contrast, the infinite-variance case presents significant challenges, as we generally lack a finite $\alpha$-th order moment. This absence renders many techniques from the finite-variance case inapplicable. {\rv  A straightforward example is the typical use of Markov's inequality cannot yield the correct asymptotic convergence rate of order $1-\frac{1}{\alpha}$. In the meantime}, without a finite $\alpha$-moment condition, establishing the tightness of the scaled random sequence in \eqref{eq: decay} {\rv is non-trivial, which is a key step in proving weak convergence}. To address these difficulties, we first prove the convergence result of \eqref{eq: decay} under symmetric noise conditions. For the asymmetric noise case, we decompose the noise into two components, each of which can be symmetrized independently. This approach allows us to leverage the intermediate convergence result obtained for symmetric noise, enabling a complete analysis of the asymmetric case.

The paper is organized as follows. In the end of Section~\ref{sec: introduction}, we introduce some notations. In Section~\ref{sec: pre}, we introduce some basic concepts and assumptions in this paper. In Section~\ref{sec: constant}, we state the main results for constant learning rate case. In Section~\ref{sec: decay}, we state the main results for decaying learning rate case. In Section~\ref{sec: application}, we discuss the application of our results to classic machine learning models. {\rv Section~\ref{sec: exp} contains the numerical results. }And Section~\ref{sec: proof} contains all the proofs. {\rv We make a conclusion in Section~\ref{sec: conclusion}.}

\paragraph*{Notations}$\bar{\RB}^d:=\RB^d\cup\{\pm\infty\}$ and $\bar{\RB}^d_\varepsilon:=\{x\in\bar{\RB}^d|\|x\|>\varepsilon\}$. $\boldsymbol{M}_p$ is the space of radon point measure. $\|\cdot\|$ is the standard $\ell_2$ norm in $\RB^d$. A stochastic process $\{X_{t}\}_{t\in[0,T]}$ is called a $\cadlag$ process if for every $t\in[0,T]$, $X_t$ is right continuous with left limits almost surely. For a sequence of stochastic process $\{X_t^n\}_{t\in[0,T]}$, we denote $X^n(\cdot)\Rightarrow X(\cdot)$ for $X^n(\cdot)$ weakly converging to $X(\cdot)$ in Skorokhod $J_1$ topology. For a sequence of measure $\mu_n$, we denote $\lim_{n\to\infty}\mu_n\overset{v}{=}\mu$ and $\mu_n\overset{v}{\to}\mu$ for $\mu_n$ vaguely converges to $\mu$. We also denote $f(x)\sim g(x)$ ($x\to\infty$) for $\lim_{x\to\infty}\frac{f(x)}{g(x)}=C$, where $C$ is a non-zero constant. {\rv A $(d-1)$-dimensional sphere is denoted as $\mathbb{S}^{d-1}:=\left\{x\in\RB^{d}|\|x\|=1\right\}$.}

\section{Preliminaries}
\label{sec: pre}
In this section, we will introduce some basic concepts and assumptions occurred in this paper. First, we introduce the definition of additive processes in the following.

\begin{defn}[Additive process]
\label{def: add}
    A stochastic process $X_t$ is called an additive process if it satisfies:
    \begin{itemize}
        \item It has independent increments, i.e. for any $0\le p<r\le s<t$, $X_t-X_s$ is independent with $X_r-X_p$.
        \item It is continuous in probability, i.e. for any $t>0$ and $\varepsilon>0$, $\lim_{s\to 0^+}\PB(|X_{t+s}-X_t|\ge\varepsilon)=0$.
    \end{itemize}
\end{defn}
\begin{defn}[Characteristics of additive process]
\label{def: add_char}
    An additive process $X_t$ has characteristics $(A_t, \gamma_t,\nu_t(\cdot))$ if its characteristic function satisfies:
    \begin{align*}
        \EB[\exp(iu^\top X_t)]=\exp\left(u^\top\gamma_t i-\frac{1}{2}u^\top A_t u+\int\left(e^{iu^\top x}-1-iu^\top x\1(\|x\|<1)\right)\nu_t(dx)\right).
    \end{align*}
    Moreover, if $A_t=A t$, $\gamma_t = \gamma t$ and $\nu_t(\cdot)=\nu(\cdot)t$, which means $X_t$ has a stationary increment, we call $X_t$ is a L\'evy process with characteristics {\rv $(A,\nu(\cdot),\gamma)$}.
\end{defn}
In this paper, we always assume the stochastic process is a $\cadlag$ process as an additive process has a $\cadlag$ modification. It is worth noticing that a Brownian motion process is a special additive process. Besides, an additive process may have countable jump points on its trajectory, such as Poisson process. In Definition~\ref{def: add_char}, there are three parameters that determine the characteristic function of $X_t$. The $A_t$ is the covariance matrix, $\gamma_t$ is the drift term, and $\nu_t(\cdot)$ is the L\'evy measure. {\rv In the following, we introduce some basic concepts in infinite variance analysis. The Definitions~\ref{def: reg_varying} and \ref{def: domain} are both in 1-dimension.}

\begin{defn}[Regular varying]
\label{def: reg_varying}
We say a function $f(\cdot)$ is $\alpha$-regular varying if it satisfies for any $x\in\RB_+$:
\begin{align*}
    \lim_{t\to+\infty}\frac{f(tx)}{f(t)}=x^{\alpha},
\end{align*}
where $\alpha\in\RB$. When $\alpha=0$, we call $f(\cdot)$ is slowly varying. We also call a random variable $X$ is $\alpha$-regular varying if the function $\PB(|X|>t)$ is $(-\alpha)$-regular varying.
\end{defn}

\begin{defn}[Domain of attraction]
\label{def: domain}
    For a distribution $F(\cdot)$, we say it belongs to a domain of attraction of the stable laws with exponent $\alpha\in(0,2)$ as long as there exists a sequence $A_n$ and $B_n$ such that the following holds:
    \begin{align*}
        \frac{\sum_{i=1}^n X_i}{B_n}-A_n\Rightarrow Z,
    \end{align*}
    where $\{X_i\}_{i=1}^n$ are the i.i.d. random variables following distribution $F(\cdot)$ and $Z$ is an $\alpha$-stable random variable. Moreover, if $B_n\sim n^{\frac{1}{\alpha}}$, we say $F(\cdot)$ belongs to a domain of normal attraction of the stable laws.
\end{defn}
In Definition~\ref{def: domain}, if the distribution function $F(\cdot)$ satisfies:
\begin{align*}
    1-F(x)+F(-x)=x^{-\alpha}b(x),
\end{align*}
where $x>0$ and $b(x)$ is a slowly varying function, then it belongs to a domain of attraction of the stable laws. Moreover, if $b(x)=c+o(1)$  as $x\to+\infty$, it can be proved it belongs to the domain of normal attraction of the stable laws \cite{gnedenko1968limit}. In the following, we introduce the core assumptions for the SGD algorithm. The following assumption is indeed a multi-variate version of {\rv 
regular varying}. {\rv The motivation for the following assumption is due to the empirical evidence showing that the stochastic gradient follows a power-law without finite variance in \cite{simsekli2019tail}. To facilitate the limit theorem analysis, the following assumption is required, which is also standard in heavy-tail analysis in \cite{resnick2007heavy}.}

\begin{assumption}[Multi-dimensional regular varying]
    \label{asmp: heavy_tail}
    There exists an $\alpha\in(1,2)$, a bounded Lipschitz function $C:\RB^d\to\RB_+$, and a differentiable slowly varying function $b_0(\cdot,\theta):\RB_+\to\RB_+$, such that the norm of stochastic gradient 
    $\nabla \ell(\theta,\xii)$ satisfies:
    \begin{align}
    \label{eq: tail_norm}
        \PB\left(\left\|\nabla \ell(\theta,\xii)\right\|>x\right)=x^{-\alpha}b_0(x,\theta),
    \end{align}
    where the slowly varying function $b_0(\cdot,\theta)$ also satisfies $\lim_{x\to\infty}\frac{b_0(x,\theta)}{b_0(x,\theta^*)}=C(\theta)$ and $\lim_{x\to\infty}\frac{xb_0'(x,\theta)}{b_0(x,\theta)}=0$. And there exists a Radon measure $\nu(\theta,\cdot)$ on $\RB^d\setminus\{0\}$ and a slowly varying function $b_1:\RB_+\to\RB_+$, such that:
    \begin{align*}
        x\PB\left(\frac{b_1(x)\nabla\ell(\theta,\xii)}{x^{\frac{1}{\alpha}}}\in\cdot\right)\overset{v}{\to}\nu(\theta,\cdot),\text{ $x\to+\infty$},
    \end{align*}
    where $\nu(\theta,\{\|z\|>x\})<\infty$ for any $x>0$ and $\int_{\|z\|\in(0,1]}\|z\|^2\nu(dz)<\infty$. In fact, $\nu(\theta,\cdot)$ can also be decomposed into $\nu(\theta,dz)=\alpha \widetilde{C}(\theta)\mu(\theta,d\omega)r^{-\alpha-1}dr$ with $z=r\cdot\omega$ being the  polar coordinate decomposition, where $\mu(\theta,d\omega)$ is a probability measure on unit sphere $\mathbb{S}^{d-1}$. Moreover, the vague convergence is uniformly in $\theta$ belonging to a compact set. 
\end{assumption}
In Assumption~\ref{asmp: heavy_tail}, we require the slowly varying function $b_0(x,\theta)$ to be differentiable and also $\lim_{x\to\infty}\frac{x b_0'(x,\theta)}{b_0(x,\theta)}=0$, which makes contribution to the existence of $b_1(x)$ and uniform integrability of the measure $x\PB\left(\frac{b_1(x)\nabla\ell(\theta,\xii)}{x^{\frac{1}{\alpha}}}\in\cdot\right)$ when $x$ is large. In fact, this requirement can be relaxed as for any slowly varying function $b_0(x,\theta)$, there exists a smooth regular varying function $\widetilde{b}_0(x,\theta)$ such that $b_0(x,\theta)\sim \widetilde{b}_0(x,\theta)$ as $x\to\infty$ and $\lim_{x\to\infty}\frac{x \widetilde{b}_0'(x,\theta)}{\widetilde{b}_0(x,\theta)}=0$. Then, we can construct the $b_1(x)$ via $\widetilde{b}_0(x,\theta)$. See Theorem 1.3.3 in \cite{Bingham_Goldie_Teugels_1987}. In this paper, we choose $b_1(x)$ such that $\PB(\|\nabla\ell(\theta^*,\xii)\|>x^{\frac{1}{\alpha}}b_1^{-1}(x))=x^{-1}$ while it takes no difference by choosing any $\widetilde{b}_1(x)=b_1(x)+o(1)$ when $x\to\infty$. {\rv Or equivalently, $b_1(x)^{\alpha}b_0\left(x^{\frac{1}{\alpha}}b_1^{-1}(x),\theta^*\right)=1$.}

It has recently been established that the presence of a non-trivial slowly varying function significantly reduces the rate of convergence in the Wasserstein metric in the stable domain of attraction, see~\cite{cázares2024asymptoticallyoptimalwassersteincouplings}.  It is thus important to allow in our results the function $b_0(x,\theta)$ not to be asymptotically equivalent to a positive constant as $x\to\infty$. {\rv In the following, we introduce the basic assumptions for the loss function.}

{\rv
\begin{assumption}[Assumption 1 in \cite{wang2021convergence}]
    \label{asmp: ppd}
    The set $\{\nabla^2\ell(\theta):\theta\in\RB^d\}$ is bounded and uniformly p-positive definite for any $p\in(1,\alpha)$. Equivalently, for any $\theta\in\RB^d$ and $\|u\|_p=1$, $\nabla^2\ell(\theta)$ is bounded and satisfies $u^\top\nabla^2\ell(\theta)\left(\text{sgn}(u_1)|u_1|^{p-1},\cdots,\text{sgn}(u_d)|u_d|^{p-1}\right)^\top>0$ for any $p\in(1,\alpha)$.
\end{assumption}
}

\begin{assumption}
    \label{asmp: g_lip}
    There exists $C_{\text{Lip}}>0$ such that for any $\theta,\theta'$, we have:
    \begin{align*}
        \|\nabla\ell(\theta,\xii)-\nabla\ell(\theta',\xii)\|\le C_{\text{Lip}}\|\theta-\theta'\|, \text{ a.s.}.
    \end{align*}
\end{assumption}

\begin{assumption}
    \label{asmp: q_expansion}
    There exists $q\in(2-\frac{1}{\alpha},\alpha)$ and constant $K>0$, such that:
    \begin{align*}
        \left\|\nabla\ell(\theta_1)-\nabla\ell(\theta_2)-\nabla^2\ell(\theta_2)(\theta_1-\theta_2)\right\|\le K\|\theta_1-\theta_2\|^{q}.
    \end{align*}
\end{assumption}
{\rv Assumption~\ref{asmp: ppd} is first proposed in \cite{wang2021convergence} to obtain the finite-time convergence rate for SGD iterates in multi-dimensional case. In general, Assumption~\ref{asmp: ppd} is a stronger assumption on the strongly convexity of the loss function $\ell(\theta)$. Assumption~\ref{asmp: g_lip} is a traditional assumption on the smoothness of the stochastic gradient.} Assumption~\ref{asmp: q_expansion} is a standard assumption, which is also stated in \cite{wang2021convergence, polyak1992acceleration}. This assumption makes the nonlinear approximation residual term converge to zero almost surely.

\section{Constant step size}
\label{sec: constant}
In this section, we consider the stochastic approximation of SGD to the GD solution path with a small constant stepsize $\eta$. Mathematically, we denote the gradient flow as:
\begin{align*}
    d\bar{\theta}(t)=-\nabla \ell(\bar{\theta}(t))dt,
\end{align*}
while the stochastic gradient flow as:
\begin{align*}
    \theta_{\eta}((n+1)\eta)=\theta_{\eta}(n\eta)-\eta \nabla \ell(\theta_{\eta}(n\eta),\xii_{(n+1)\eta}).
\end{align*}
Instead of studying the limit of $\theta_{\eta}(n\eta)-\theta^*$ as $n\to\infty$ and $\eta\to0$, we consider a more general functional limiting behavior of $\theta_{\eta}(\cdot)-\bar{\theta}(\cdot)$ on every $[0,T]$, which measures the stochastic deviation in the trajectory.
\begin{theorem}
    \label{thm: constant}
    Under Assumption~\ref{asmp: heavy_tail}, \ref{asmp: g_lip} and \ref{asmp: q_expansion}, {\rv if $\nabla^2\ell(\bar{\theta}(t))$ is continuous}, in Skorokhod $J_1$ topology on $\mathbb{D}[0,\infty)$, as $\eta\to0$,
    \begin{align}
    \label{eq: const_error}
        \eta^{\frac{1}{\alpha}-1}b_1(\eta^{-1})\left(\theta_{\eta}(\cdot)-\bar{\theta}(\cdot)\right)\Rightarrow {\rv Z_{\cdot}},
    \end{align}
where $\rv Z_{\cdot}$ satisfies the s.d.e.  $dZ_t = -\nabla^2\ell(\bar{\theta}(t))Z_t dt+dL_t$. And $L_{\cdot}$ is an additive process with characteristics $(0,\nu_t(\cdot),-\gamma_t)$:
\begin{align*}
    &\nu_t(dx) = \int_{[0,t]}\nu(\bar{\theta}(s), dx)ds,\\
    &\gamma_t = \int_{s\in[0,t],\|x\|>1}x\nu(\bar{\theta}(s),dx)ds.
\end{align*}
Or equivalently,
\begin{align*}
    Z_t = \Phi(t)\left(Z_0+\int_0^t\Phi(s)^{-1}dL_s\right),
\end{align*}
and $\Phi(\cdot)$ is the solution of the following o.d.e.:
    \begin{align*}
        \begin{cases}
            &d\Phi(t)=-\nabla^2 \ell(\bar{\theta}(t))\Phi(t)dt,\\
            &\Phi(0)=I_d.
        \end{cases}
    \end{align*}
\end{theorem}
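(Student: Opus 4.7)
My plan is to work directly with the scaled error process $U^{\eta}_n := \eta^{1/\alpha-1}b_1(\eta^{-1})\bigl(\theta_{\eta}(n\eta)-\bar{\theta}(n\eta)\bigr)$, derive a discrete affine recursion for it that mimics the target SDE, and then pass to the limit using a point-process/continuous-mapping argument in the spirit of Resnick. First I would write the one-step SGD-vs.-gradient-flow discrepancy as
\begin{align*}
\theta_{\eta}((n{+}1)\eta)-\bar{\theta}((n{+}1)\eta)
&=\Delta_n-\eta\bigl[\nabla\ell(\theta_{\eta}(n\eta))-\nabla\ell(\bar{\theta}(n\eta))\bigr]-\eta\,\xi_{n+1}'+\rho_{n,1}\\
&=\bigl(I-\eta\nabla^{2}\ell(\bar{\theta}(n\eta))\bigr)\Delta_n-\eta\,\xi_{n+1}'+\rho_{n,1}+\rho_{n,2},
\end{align*}
where $\Delta_n:=\theta_{\eta}(n\eta)-\bar{\theta}(n\eta)$, $\xi_{n+1}':=\nabla\ell(\theta_{\eta}(n\eta),\xii_{n+1})-\nabla\ell(\theta_{\eta}(n\eta))$ is the martingale-difference noise, $\rho_{n,1}=O(\eta^{2})$ comes from the continuous-time integral of the gradient flow (using continuity of $\nabla^{2}\ell\circ\bar{\theta}$), and $\|\rho_{n,2}\|\le K\eta\|\Delta_n\|^{q}$ by Assumption~\ref{asmp: q_expansion}. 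Multiplying by $c_\eta:=\eta^{1/\alpha-1}b_1(\eta^{-1})$ turns this into an affine recursion in $U^{\eta}_n$ with driving noise $\eta c_\eta\xi_{n+1}'=\eta^{1/\alpha}b_1(\eta^{-1})\xi_{n+1}'$ and residual of order $\eta\, c_\eta^{\,1-q}\|U^{\eta}_n\|^{q}$.

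Second I would identify the limiting driver. Set $M^{\eta}(t):=\eta^{1/\alpha}b_1(\eta^{-1})\sum_{k=1}^{\lfloor t/\eta\rfloor}\xi_{k+1}'$. Using Assumption~\ref{asmp: heavy_tail} together with the uniform-in-$\theta$ vague convergence on compacts, I would show that the point process $N^{\eta}:=\sum_k\delta_{(k\eta,\,\eta^{1/\alpha}b_1(\eta^{-1})\xi_{k+1}')}$ converges weakly on $[0,T]\times(\bar{\RB}^{d}\setminus\{0\})$ to the Poisson random measure $N$ with intensity $ds\otimes\nu(\bar{\theta}(s),dx)$; the $\theta$-dependence is handled because $\sup_{n\eta\le T}\|\theta_{\eta}(n\eta)-\bar{\theta}(n\eta)\|\to 0$ in probability by a standard discrete Gronwall using Assumption~\ref{asmp: g_lip}. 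Applying the usual summation functional with small-jump truncation and centering (valid since $\alpha\in(1,2)$ implies $\int_{\|x\|\le1}\|x\|^{2}\nu(dx)<\infty$ and $\int_{\|x\|>1}\|x\|\nu(dx)<\infty$), the continuous-mapping theorem of Resnick yields $M^{\eta}\Rightarrow L$ in $J_1$, where $L$ is the additive process with the characteristics $(0,\nu_t,-\gamma_t)$ stated in the theorem.

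Third I would combine the driver with the affine dynamics. Solving the scaled recursion explicitly gives
\begin{align*}
U^{\eta}(t)=\Phi^{\eta}(t)\Bigl(U^{\eta}(0)+\int_{0}^{t}\Phi^{\eta}(s-)^{-1}\,dM^{\eta}(s)+R^{\eta}(t)\Bigr),
\end{align*}
where $\Phi^{\eta}(t):=\prod_{k<t/\eta}\bigl(I-\eta\nabla^{2}\ell(\bar{\theta}(k\eta))\bigr)$ and $R^{\eta}$ collects the scaled residuals. Since $\nabla^{2}\ell\circ\bar{\theta}$ is continuous, $\Phi^{\eta}\to\Phi$ uniformly on $[0,T]$, where $\Phi$ solves the stated ODE; as $\Phi^{-1}$ is continuous and deterministic, the map $M\mapsto\int_{0}^{\cdot}\Phi(s)^{-1}\,dM(s)$ is $J_1$-continuous on the paths of interest, so continuous mapping delivers the claimed integrated form of the SDE.

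The main obstacle, I expect, is controlling $R^{\eta}$ under infinite variance. The residual is of order $c_\eta^{\,1-q}\|U^{\eta}\|^{q}$ after accumulation; a direct computation shows $c_\eta^{1-q}\to 0$ because $q>2-1/\alpha>1$ and $\alpha\in(1,2)$, but to close the loop one needs an a-priori stochastic control on $\sup_{n\eta\le T}\|U^{\eta}_n\|$. Because moments of order $\alpha$ (let alone $2$) are not available, the customary $L^{p}$-Gronwall is out of reach; I would instead use a truncation-at-level-$M$ device, stopping the process when $\|U^{\eta}\|$ first exceeds $M$, obtaining tightness and the limit identification on the stopped process by continuous mapping, and then sending $M\to\infty$ using tightness of the limit $Z$ (which exists because the stochastic integral $\int_0^\cdot\Phi(s)^{-1}dL_s$ has finite $p$-moments for $p<\alpha$). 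This truncation-plus-point-process device, anchored by the uniform regular variation in Assumption~\ref{asmp: heavy_tail}, is what makes the infinite-variance case substantively harder than the finite-variance one.
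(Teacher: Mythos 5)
Your overall architecture mirrors the paper's: same scaling, same affine recursion, same Resnick-style point-process identification of the driver, same fundamental-matrix solution. However, there is a genuine error in the step you flag as the ``main obstacle.'' You assert that because moments of order $\alpha$ (and $2$) are unavailable, ``the customary $L^p$-Gronwall is out of reach.'' This is false, and it is precisely where the paper's argument is anchored. Since $\alpha\in(1,2)$, moments of order $p$ exist for every $p\in[1,\alpha)$, and the paper's Lemma~\ref{lem: finite_rate} runs a $p$-th-moment Gronwall: writing $H_\eta(t)=\eta^{1/p-1}(\theta_\eta([t]_\eta)-\bar\theta(t))$, bounding $\|H_\eta(t)\|\le C_{\rm Lip}\int_0^t\|H_\eta(s)\|\,ds+\eta^{1/p}\bigl\|\sum_k M(\theta_\eta(k\eta),\xii_{(k+1)\eta})\bigr\|$, taking $p$-th moments, invoking the martingale inequality of Lemma~\ref{lem: moment} (valid for $p\in[1,2]$), and integrating. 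This gives $\EB\|\theta_\eta([t]_\eta)-\bar\theta(t)\|^p=\OM(\eta^{p-1})$ directly, which is then used to kill the $\|\Delta_n\|^q$ residual (Lemma~\ref{lem: hessian_approximate}) and, crucially, to show that the noise evaluated at $\theta_\eta$ can be replaced by the noise evaluated at $\bar\theta$ (Lemma~\ref{lem: error_diff}). Your stopping-time alternative could in principle be made to work, but as you describe it there is a circularity risk: to let $M\to\infty$ you need $\lim_{M\to\infty}\limsup_{\eta\to0}\PB(\tau_M^\eta\le T)=0$, which is itself a uniform tightness statement about the pre-limit processes; tightness of the limit $Z$ alone does not supply it without a further Portmanteau-type argument on the stopped processes. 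The paper's direct $L^p$ bound sidesteps this entirely.

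A second, more minor, divergence: you keep the driving noise $\xi'_{n+1}=\nabla\ell(\theta_\eta(n\eta),\xii_{n+1})-\nabla\ell(\theta_\eta(n\eta))$ centered at the random iterate $\theta_\eta(n\eta)$ throughout the point-process step. These are only conditionally independent given the filtration, so the Laplace-functional computation needs to be done under a conditioning that you do not spell out. The paper instead splits the martingale increment into $M(\theta_\eta(k\eta),\xii)-M(\bar\theta(k\eta),\xii)$ and $M(\bar\theta(k\eta),\xii)$; the latter is a triangular array of (unconditionally) independent random vectors with deterministic tails along the gradient-flow path, for which the Laplace-functional argument is clean (Lemma~\ref{lem: additive_converge}), and the former is shown negligible in $L^p$ using the Lipschitz property (Assumption~\ref{asmp: g_lip}) together with the Lemma~\ref{lem: finite_rate} bound. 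Your version can be repaired, but you should either do the conditioning carefully or adopt the paper's split. In short: the skeleton is right, but the load-bearing a-priori estimate is exactly the $L^p$-Gronwall you claim is unavailable.
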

{\rv In Theorem~\ref{thm: constant}, the continuity assumption for $\nabla^2\ell(\bar{\theta}(t))$ is required for the unique existence of $\Phi(\cdot)$ by \cite{baake2011peano}.} Theorem~\ref{thm: constant} characterizes the limit properties of the scaled parameter errors over the trajectory, which follow an OU-process driven by an additive process. {\rv It is also worth noticing that strongly convexity is not required in Theorem~\ref{thm: constant} as the limit theorem holds along the ground-true path instead of optimal single point. }Comparing with the results in finite variance case in \cite{fierro2008stochastic}, the difference is that the Brownian motion term in \cite{fierro2008stochastic} is replaced with the additive process $L_t$, which is due to the assumption of infinite variance noise. It is worth noticing that the stochastic process $\rv L_{\cdot}$  in Theorem~\ref{thm: constant} is time-inhomogeneus, which is due to the centering parameter $\bar{\theta}(\cdot)$ changes w.r.t. time. In Theorem~\ref{thm: constant}, we also notice that $\rv Z_{\cdot}$ is determined by a stochastic integral w.r.t. an additive process $\rv L_{\cdot}$. By Corollary 2.19 in \cite{sato2006additive}, we can also prove $Z_t$ is an additive process as follows. It is also worth noticing that $\widetilde{\gamma}_t$ measures the skewness of the distribution of $Z_t$. If $\widetilde{\nu}_t(\cdot)$ is symmetric, it is straightforward that $\widetilde{\gamma}_t=0$ and $Z_t$ is thus symmetric.
\begin{proposition}
\label{cor: charac_Z}
    In Theorem~\ref{thm: constant}, $\rv Z_{\cdot}$ is also an additive process with characteristics $(0,\widetilde{\nu}_t(\cdot),\widetilde{\gamma}_t)$:
    \begin{align*}
        \widetilde{\gamma}_t =& -\int_{s\in[0,t],\|x\|>1}\Phi(t)\Phi(s)^{-1}x\nu(\bar{\theta}(s),dx)ds\\
        &+\int_{s\in[0,t]}\Phi(t)\Phi(s)^{-1}x\left(\1(\|\Phi(t)\Phi(s)^{-1}x\|\le 1)-\1(|x|\le1)\right)\nu(\bar{\theta}(s),dx)ds,\notag\\
        \widetilde{\nu}_t(A)=&\int_{s\in[0,t],\Phi(t)\Phi(s)^{-1}x\in A}\nu(\bar{\theta}(s), dx)ds.
    \end{align*}
\end{proposition}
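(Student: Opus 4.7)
The plan is to use the variation-of-parameters representation $Z_t=\int_0^t \Phi(t)\Phi(s)^{-1}\,dL_s$ from Theorem~\ref{thm: constant} (noting that $Z_0=0$ since the scaled error in \eqref{eq: const_error} vanishes at $t=0$) and to read off the L\'evy-Khintchine characteristics of $Z_t$ termwise. For each fixed $t$ the integrand $M(s):=\Phi(t)\Phi(s)^{-1}$ is a continuous, bounded, invertible matrix-valued function of $s\in[0,t]$, thanks to the Peano-type existence/uniqueness for $\Phi$ invoked right after Theorem~\ref{thm: constant}. Since $L_\cdot$ is an additive process with differential characteristics $(0,\nu(\bar\theta(s),dx),-\dot\gamma_s)$, where $\dot\gamma_s=\int_{\|x\|>1}x\,\nu(\bar\theta(s),dx)$, Corollary~2.19 of \cite{sato2006additive} guarantees that $u\mapsto\int_0^u M(s)\,dL_s$ is additive in $u\in[0,t]$ and gives its characteristic exponent as the time-integral of the pointwise exponent of $dL_s$ evaluated at $M(s)^\top u$.

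Concretely, I would substitute $v=M(s)^\top u$ into the differential exponent $\phi_s(v)=-iv^\top\dot\gamma_s+\int(e^{iv^\top x}-1-iv^\top x\1(\|x\|<1))\nu(\bar\theta(s),dx)$ and carry out the change of variables $y=M(s)x$ inside the L\'evy integral. This pushes $\nu(\bar\theta(s),\cdot)$ forward to the measure $A\mapsto\int\1(M(s)x\in A)\,\nu(\bar\theta(s),dx)$, and integrating in $s\in[0,t]$ produces exactly the stated L\'evy measure $\widetilde\nu_t$ with $M(s)=\Phi(t)\Phi(s)^{-1}$. Continuity in probability of $Z_\cdot$ follows from the same routine step used for $\int M\,dL$ in \cite{sato2006additive}.

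The delicate point is the drift bookkeeping. The L\'evy-Khintchine triplet in Definition~\ref{def: add_char} uses the truncation $\1(\|y\|\le 1)$, while the pushforward naturally produces $\1(\|x\|\le 1)=\1(\|M(s)^{-1}y\|\le 1)$; switching between the two introduces the correction $M(s)x\bigl(\1(\|M(s)x\|\le 1)-\1(\|x\|\le 1)\bigr)\nu(\bar\theta(s),dx)$. Adding this to the transformed drift $-M(s)\dot\gamma_s$ and integrating over $s\in[0,t]$ yields the stated $\widetilde\gamma_t$. I expect the main obstacle to be absolute convergence of the two integrals given the small-jump non-integrability $\int_{\|x\|\le 1}\|x\|\nu(dx)=\infty$ for $\alpha<2$: the correction integrand is supported on the bounded annulus where $\|x\|$ and $\|M(s)x\|$ straddle $1$, and the uniform bounds on $\|M(s)\|$ and $\|M(s)^{-1}\|$ over $s\in[0,t]$ confine that annulus to a region on which $\|x\|$ is bounded and the second moment of the L\'evy measure controls the integral, while $\int_{\|x\|>1}\|x\|\nu(\bar\theta(s),dx)<\infty$ (from $\alpha>1$ in Assumption~\ref{asmp: heavy_tail}) handles the $-M(s)\dot\gamma_s$ piece. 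The uniform-in-$\theta$ regular variation of $\nu(\theta,\cdot)$ provided by Assumption~\ref{asmp: heavy_tail} also delivers the joint measurability in $s$ needed to treat the double integrals as iterated integrals.
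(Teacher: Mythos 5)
Your proposal follows exactly the route the paper itself takes, which is to invoke Corollary~2.19 of \cite{sato2006additive} for stochastic integrals of deterministic matrix-valued integrands with respect to additive processes and then read off the pushforward L\'evy measure, the transformed drift, and the truncation-mismatch correction; the paper gives only the one-line citation, so you have fleshed out the L\'evy--Khintchine change of variables and the integrability check that make the citation actually bite. One small point worth keeping in mind (it is a feature of the paper's statement rather than a flaw in your argument): since $\Phi(t)$ sits outside the integral, the matrix $M(s)=\Phi(t)\Phi(s)^{-1}$ depends on the terminal time $t$, so $\widetilde{\nu}_t$ is not monotone in $t$ and the increments $Z_t-Z_r=\bigl(\Phi(t)\Phi(r)^{-1}-I\bigr)Z_r+\int_r^t\Phi(t)\Phi(s)^{-1}\,dL_s$ are not independent of the past; what Sato's corollary (and your computation) really delivers is the infinitely divisible marginal law of $Z_t$ for each fixed $t$, with triplet $(0,\widetilde{\nu}_t,\widetilde{\gamma}_t)$, rather than genuine additivity of $Z_\cdot$ as a process.
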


\section{Decaying step size}
\label{sec: decay}
In this section, we consider the stochastic approximation of SGD to the optimal solution with decaying stepsize. We denote the optimal solution as $\theta^*\in\argmin_{\theta} \EB_{\xi\sim P}\ell(\theta,\xii)$ and the stochastic gradient flow as:
\begin{align}
\label{eq: sgd}
    \theta_{n+1}=\theta_{n}-\eta_n \nabla\ell(\theta_{n},\xii_{n+1}).
\end{align}
In this section, we alter to study the limit behavior of error $\theta_n-\theta^*$ instead of the trajectory error, as the gradient descent with the decaying learning rate will find the optimal solution with linear convergence rate. In the following, we are ready to present the formal result.

\begin{theorem}
    \label{thm: asymp_decay}
    Under {\rv Assumption~\ref{asmp: heavy_tail}, \ref{asmp: ppd}, \ref{asmp: g_lip} and \ref{asmp: q_expansion}}, if $\eta_n = c\cdot n^{-\rho}$ ($\rho\in(0,1)$), we have:
    \begin{align}
    \label{eq: decay_error}
        \eta_n^{\frac{1}{\alpha}-1}b_1(\eta_n^{-1})\left(\theta_n-\theta^*\right)\Rightarrow Z_{\infty},
    \end{align}
    where $Z_{\infty}$ is the stationary distribution of the following s.d.e.:
    \begin{align}
        \label{eqn: decay_sde}
        dZ_t = -\nabla^2 \ell(\theta^*) Z_t dt+dL_t,
    \end{align}
    where $L_{\cdot}$ is a L\'evy process with characteristics $(0,\nu(\theta^*,\cdot),-\gamma)$ and:
    \begin{align*}
        \gamma=\int_{\|x\|>1}x\nu(\theta^*,dx).
    \end{align*}
\end{theorem}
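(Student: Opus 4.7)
The plan is to linearise the SGD recursion around $\theta^*$, rescale by $a_n := \eta_n^{1/\alpha-1}b_1(\eta_n^{-1})$, embed the resulting discrete process into a continuous-time interpolation under the macroscopic clock $t_n:=\sum_{k=1}^n\eta_k$, and identify its weak limit with the L\'evy-driven Ornstein--Uhlenbeck equation \eqref{eqn: decay_sde}. Exponential ergodicity of that SDE, guaranteed by the uniform $p$-positive definiteness of $\nabla^2\ell(\theta^*)$ (Assumption~\ref{asmp: ppd}), produces a unique stationary law $Z_\infty$ to which the last iterate then converges.

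Writing $\Delta_n := \theta_n-\theta^*$ and using Assumption~\ref{asmp: q_expansion} to expand $\nabla\ell(\theta_n)=\nabla^2\ell(\theta^*)\Delta_n+R_n$ with $\|R_n\|\le K\|\Delta_n\|^q$, the iteration \eqref{eq: sgd} in terms of $Z_n := a_n\Delta_n$ reads
\begin{align*}
Z_{n+1}&=\frac{a_{n+1}}{a_n}\bigl(I-\eta_n\nabla^2\ell(\theta^*)\bigr)Z_n - a_{n+1}\eta_n\bigl(\nabla\ell(\theta_n,\xii_{n+1})-\nabla\ell(\theta_n)\bigr)\\
&\quad - a_{n+1}\eta_n R_n.
\end{align*}
Slow variation of $b_1$ together with the polynomial decay of $\eta_n$ gives $a_{n+1}/a_n=1+O(\eta_n)$; combined with the almost-sure convergence $\theta_n\to\theta^*$ (inherited from the non-asymptotic estimates of \cite{wang2021convergence}) and the exponent bound $q>2-1/\alpha$, the residual term is negligible on every macroscopic window. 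The scaled noise satisfies $a_{n+1}\eta_n\bigl(\nabla\ell(\theta_n,\xii_{n+1})-\nabla\ell(\theta_n)\bigr) = \eta_n^{1/\alpha}b_1(\eta_n^{-1})\bigl(\nabla\ell(\theta_n,\xii_{n+1})-\nabla\ell(\theta_n)\bigr)(1+o(1))$, which is exactly the normalisation of Assumption~\ref{asmp: heavy_tail}. A point-process argument in the spirit of Theorem~\ref{thm: constant} and \cite{resnick2007heavy}, using the uniform vague convergence of the L\'evy measures in Assumption~\ref{asmp: heavy_tail}, identifies the driving noise on any macroscopic window as converging to the L\'evy process $L$ with characteristics $(0,\nu(\theta^*,\cdot),-\gamma)$.

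The main obstacle is tightness of $\{Z_n\}$. The absence of a finite $\alpha$-th moment in Assumption~\ref{asmp: heavy_tail} means Markov's inequality cannot attain the target rate $1-1/\alpha$, a difficulty flagged in the technique overview. I would address this in two stages. First, under symmetrised noise, the centred gradient differences form a martingale with symmetric conditional law, so truncation of jumps at a level of order $a_n^{-1}$ together with a L\'evy-type maximal inequality for symmetric sums yields a uniform $L^p$ bound on $Z_n$ for every $p<\alpha$. Second, for general asymmetric noise, decompose $\nabla\ell(\theta_n,\xii_{n+1})-\nabla\ell(\theta_n)$ into two pieces, each coupled with an independent copy of itself to produce a symmetric surrogate retaining the same regular-variation index; Assumption~\ref{asmp: g_lip} bounds the coupling error, so tightness for the symmetric surrogate transfers to $\{Z_n\}$.

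With tightness in hand, any subsequential weak limit of the piecewise interpolation of $Z_n$ in the macroscopic clock solves \eqref{eqn: decay_sde} by the finite-dimensional convergence of drift and noise established above, and uniqueness of this SDE pins down the limit. Finally, strong contractivity of the linear drift $-\nabla^2\ell(\theta^*)$ from Assumption~\ref{asmp: ppd} forces exponential forgetting of the initial condition, so letting the macroscopic clock run to infinity gives $Z_n\Rightarrow Z_\infty$, the unique invariant distribution of \eqref{eqn: decay_sde}.
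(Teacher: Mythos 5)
Your proposal follows the same broad architecture as the paper's proof: linearise around $\theta^*$, rescale by $a_n$, interpolate on the macroscopic clock $t_n=\sum_k\eta_k$, identify the driving noise as the L\'evy process $L$ via a point-process argument, and close with Prohorov extraction plus exponential ergodicity of the limiting OU equation. The $a_{n+1}/a_n = 1+O(\eta_n)$ expansion via slow variation, the use of $q>2-1/\alpha$ to kill the Taylor remainder, and the symmetric-then-asymmetric treatment of tightness all match the paper's Lemma~\ref{lem: X_equi}, Lemma~\ref{lem: H_Y}, Lemma~\ref{lem: Mu_L\'evy}, and Section~\ref{sec: tight}.

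Where you diverge, two points deserve scrutiny. First, in the symmetric case you claim ``a uniform $L^p$ bound on $Z_n$ for every $p<\alpha$'' via truncation at level $a_n^{-1}$ and a L\'evy-type maximal inequality. The paper does not establish any such moment bound; Lemma~\ref{lem: sym_tight} uses the Esseen-type characteristic-function inequality $\PB(|\widetilde{X}_n|>t)\le 7t\int_0^{1/t}(1-\mathrm{Re}\,\phi_{\widetilde{X}_n}(u))\,du$ together with Pitman's Tauberian theorem, Karamata, Potter bounds, and Stolz, yielding only the tail bound $\PB(|\widetilde{X}_n|>t)\lesssim t^{-(\alpha-\delta)}$, which is equivalent to uniform $L^p$ bounds only for $p<\alpha-\delta$. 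Asserting the bound all the way up to $\alpha$ is too strong and is exactly what cannot be obtained by Markov/maximal-inequality machinery here, as the technique overview already warns. Whether truncation plus a martingale maximal inequality reproduces even the weaker, correct tail exponent is not evident and you would need to exhibit the computation; this is the crux of the theorem.

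Second, your transfer from the symmetrised surrogate back to the original noise is asserted (``tightness for the symmetric surrogate transfers to $\{Z_n\}$'') and attributed to Assumption~\ref{asmp: g_lip} controlling a ``coupling error.'' That is a misattribution: Assumption~\ref{asmp: g_lip} (together with the rate of \cite{wang2021convergence}) enters only in Lemma~\ref{lem: X_equi} to replace $M(\theta_n,\xii_{n+1})$ by $M(\theta^*,\xii_{n+1})$, a step you have implicitly folded into the ``linearisation'' but which needs the Lipschitz structure to control. The actual symmetric-to-asymmetric step in the paper decomposes $M_n=M_n^+-M_n^-$, symmetrises each part, and then verifies the two Gnedenko--Kolmogorov conditions (Lemma~\ref{lem: infinitesimal}) for convergence of sums of an infinitesimal triangular array: the tail condition (1) and the truncated-variance condition (2), the latter again requiring Karamata and Potter estimates. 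Without the Gnedenko argument, ``tightness transfers'' is a gap, not a proof, because the scaled increments $\omega_{i,n}\eta_i^{1/\alpha}b_1(\eta_i^{-1})M_i$ are neither identically distributed nor symmetric, so the standard symmetrisation-plus-median argument does not apply directly and must be replaced by the necessary-and-sufficient infinitesimal-array criterion.
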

Theorem~\ref{thm: asymp_decay} characterizes the asymptotic distribution of the scaled final iteration error, which turns out to be the stationary distribution of an OU-process driven by a L\'evy process. Comparing with the results in finite variance case in \cite{pelletier1998weak}, the difference is that the diffusion term is replaced from a Brownian motion with a L\'evy process. Moreover, with a careful time-reverse transformation, we can show the stationary distribution of the s.d.e.~\eqref{eqn: decay_sde} satisfies the following proposition.
\begin{proposition}
\label{prop: z_stat_prop}
    In Theorem~\ref{thm: asymp_decay}, the solution $Z_{\infty}$ admits an explicit solution:
    \begin{align}
    \label{eqn: z_stationary}
        Z_{\infty}=\int_{0}^{\infty}\exp(-\nabla^2 \ell(\theta^*)t)dL_t.
    \end{align}
    And its characteristic function satisfies:
    \begin{align*}
        \EB\exp(iu^\top Z_{\infty})=\exp\left(u^\top\widetilde{\gamma}i+\int_{\RB^d\setminus\{0\}}\left(e^{iu^\top x}-1-iu^\top x\1(\|x\|<1)\right)\widetilde{\nu}(dx)\right),
    \end{align*}
    where
    \begin{align*}
        &\widetilde{\gamma}=-\nabla^2\ell(\theta^*)^{-1}\int_{\|x\|>1}x\nu(\theta^*,dx)\\
        &+\int_{t\ge0}\exp(-\nabla^2\ell(\theta^*)t)x\left(\1(\|\exp(-\nabla^2\ell(\theta^*)t)x\|\le 1)-\1(\|x\|\le 1)\right)\nu(\theta^*,dx)dt,\notag\\
        &\widetilde{\nu}(A)=\int_{t\ge0, \exp(-\nabla^2\ell(\theta^*)t)x\in A}\nu(\theta^*,dx)dt.
    \end{align*}
\end{proposition}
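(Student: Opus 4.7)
The plan is to realize $Z_\infty$ as the time--$t\to\infty$ limit of the explicit solution to the linear SDE \eqref{eqn: decay_sde}, then use the Lévy--Khintchine formula together with a Fubini argument to read off the characteristics.

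First, write $A:=\nabla^2\ell(\theta^*)$, which is positive definite by Assumption~\ref{asmp: ppd} (applied with any $p\in(1,\alpha)$; strict positivity of $u^\top A u$ for $p=2$ follows by continuity). Variation of constants applied to \eqref{eqn: decay_sde} gives, for any initial condition $Z_0$ independent of $L$,
\begin{equation*}
Z_t \;=\; e^{-At}Z_0 \;+\; \int_0^t e^{-A(t-s)}\,dL_s.
\end{equation*}
Since $L$ has stationary independent increments, the time reversal $s\mapsto t-s$ preserves the law of the stochastic integral, so for each fixed $t$,
\begin{equation*}
\int_0^t e^{-A(t-s)}\,dL_s \;\stackrel{d}{=}\; \int_0^t e^{-Au}\,dL_u.
\end{equation*}
Because $A$ has eigenvalues with strictly positive real part, $e^{-At}Z_0\to 0$ a.s., and the improper integral $\int_0^\infty e^{-Au}dL_u$ converges a.s.\ by the Sato criterion $\int \log_+\|x\|\,\nu(\theta^*,dx)<\infty$, which holds since $\nu(\theta^*,\{\|x\|>r\})$ has $\alpha$-regular variation with $\alpha>1$. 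Taking $t\to\infty$ therefore yields the representation \eqref{eqn: z_stationary}. Stationarity then follows because plugging this integral back in as $Z_0$ and using the semigroup identity $e^{-At}\int_0^\infty e^{-Au}dL_u+\int_0^t e^{-A(t-s)}dL_s\stackrel{d}{=}\int_0^\infty e^{-Au}dL_u$ shows $Z_t\stackrel{d}{=}Z_\infty$ for all $t\ge0$.

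For the characteristic function, write $\psi$ for the characteristic exponent of $L_1$, so by Definition~\ref{def: add_char} with characteristics $(0,\nu(\theta^*,\cdot),-\gamma)$,
\begin{equation*}
\psi(v) \;=\; -iv^\top\gamma \;+\; \int\!\bigl(e^{iv^\top x}-1-iv^\top x\,\1(\|x\|<1)\bigr)\,\nu(\theta^*,dx).
\end{equation*}
Approximating $\int_0^T e^{-At}dL_t$ by Riemann--Stieltjes sums along partitions $0=t_0<\cdots<t_n=T$ and invoking independence of the increments $L_{t_{k+1}}-L_{t_k}$ yields
\begin{equation*}
\mathbb{E}\bigl[\exp(iu^\top Z_\infty)\bigr] \;=\; \exp\!\Bigl(\int_0^\infty \psi\bigl(e^{-A^\top t}u\bigr)\,dt\Bigr),
\end{equation*}
so one must rewrite the right-hand side in the canonical Lévy--Khintchine form relative to the truncation $\1(\|y\|<1)$. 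The drift contribution integrates directly: $-i\int_0^\infty (e^{-A^\top t}u)^\top\gamma\,dt=-iu^\top A^{-1}\gamma$. For the jump part, apply Fubini to obtain
\begin{equation*}
\int_0^\infty\!\!\int\!\bigl(e^{iu^\top e^{-At}x}-1-iu^\top e^{-At}x\,\1(\|x\|<1)\bigr)\,\nu(\theta^*,dx)\,dt,
\end{equation*}
and define the pushforward $\widetilde{\nu}(A):=\int_0^\infty\!\int \1(e^{-At}x\in A)\,\nu(\theta^*,dx)\,dt$.

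The key step is swapping the truncation indicator $\1(\|x\|<1)$ for $\1(\|e^{-At}x\|<1)$ so the inner integrand becomes the canonical Lévy--Khintchine integrand in $y=e^{-At}x$. Adding and subtracting the compensator $iu^\top e^{-At}x\,\1(\|e^{-At}x\|<1)$ produces a correction drift
\begin{equation*}
\int_0^\infty\!\!\int e^{-At}x\bigl(\1(\|e^{-At}x\|\le 1)-\1(\|x\|\le 1)\bigr)\,\nu(\theta^*,dx)\,dt,
\end{equation*}
which together with $-A^{-1}\gamma$ equals $\widetilde\gamma$ as stated. The main obstacle is justifying the interchange of integrals and verifying absolute integrability of this correction term: the contractive factor $e^{-At}$ rescales jumps, creating large-jump mass only on a bounded time window, so one splits the integrand into the regimes $\{\|x\|\le 1,\|e^{-At}x\|\le 1\}$ (controlled by $\int\|x\|^2\nu$ near the origin, finite by Assumption~\ref{asmp: heavy_tail}) and $\{\|x\|>1\}$ (controlled by the exponential decay of $e^{-At}$ against the $\alpha$-regularly varying tail). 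Uniqueness of the Lévy--Khintchine representation then identifies $(\widetilde\nu,\widetilde\gamma)$ with the expressions in the statement.
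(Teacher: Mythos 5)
Your proof follows essentially the same strategy as the paper for the representation \eqref{eqn: z_stationary}: variation of constants for the linear SDE, a time-reversal in distribution using stationarity and independence of increments (the paper cites Proposition~2.3 of Lindner and Maller for this), and then convergence of the improper integral under the logarithmic moment condition (the paper cites their Proposition~2.4, i.e.\ the Sato criterion you invoke), with the same verification of $\int_{\|x\|>1}\ln\|x\|\,\nu(\theta^*,dx)<\infty$ via regular variation with index $\alpha>1$. You go further than the paper in one respect: the published proof stops after establishing the representation and does not actually derive the characteristic function formula — it is left implicit, analogously to how Proposition~\ref{cor: charac_Z} cites Sato's Corollary~2.19. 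Your Riemann-sum derivation of $\EB\exp(iu^\top Z_\infty)=\exp(\int_0^\infty\psi(e^{-A^\top t}u)\,dt)$, followed by the Fubini argument, the change of truncation indicator, and the identification of $(\widetilde\nu,\widetilde\gamma)$, fills this gap and is a standard and correct way to obtain the result. One minor point: justifying positive definiteness of $A=\nabla^2\ell(\theta^*)$ from Assumption~\ref{asmp: ppd} ``by continuity as $p\to 2$'' is not quite right since the assumption only covers $p\in(1,\alpha)$ with $\alpha<2$ and the limit would only yield nonnegativity; in the paper this is implicit in the strong-convexity setting (cf.\ the use of $\sigma_{\min}(\nabla^2\ell(\theta^*))>0$ in Corollary~\ref{thm: asymp_decay_lr}), so the conclusion is fine but the cited justification should be tightened.
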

Moreover, in Theorem~\ref{thm: asymp_decay}, we assume the learning rate decays with rate slower than linear stepsize $n^{-1}$. In the following, we show a similar result when the learning rate is chosen by the linear stepsize.
\begin{corollary}
\label{thm: asymp_decay_lr}
    Under Assumption~\ref{asmp: heavy_tail}, \ref{asmp: g_lip} and \ref{asmp: q_expansion}, if $\eta_n = c\cdot n^{-1}$, where $c>\frac{1-\frac{1}{\alpha}}{\sigma_{\min}(\nabla^2\ell(\theta^*))}$, we have:
    \begin{align*}
        \eta_n^{\frac{1}{\alpha}-1}b_1(\eta_n^{-1})\left(\theta_n-\theta^*\right)\Rightarrow Z_{\infty},
    \end{align*}
    where $Z_{\infty}$ is the stationary distribution of the following s.d.e.:
    \begin{align}
        \label{eqn: decay_sde_lr}
        dZ_t = \left(-\nabla^2 \ell(\theta^*)+\frac{1-\frac{1}{\alpha}}{c}I\right) Z_t dt+dL_t,
    \end{align}
    where $L_{\cdot}$ is a L\'evy process with characteristics $(0,\nu(\theta^*,\cdot),-\gamma)$ and:
    \begin{align*}
        \gamma=\int_{\|x\|>1}x\nu(\theta^*,dx).
    \end{align*}
\end{corollary}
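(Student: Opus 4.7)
My plan is to adapt the proof of Theorem~\ref{thm: asymp_decay} and carefully account for the extra contribution arising from the polynomial-in-$n$ growth of the normalizing factor $c_n := \eta_n^{1/\alpha-1}b_1(\eta_n^{-1})$, which is negligible when $\rho<1$ but appears at the dominant order when $\rho=1$. Starting from the SGD recursion, Assumption~\ref{asmp: q_expansion} gives
\[
\theta_{n+1}-\theta^* = \bigl(I-\eta_n\nabla^2\ell(\theta^*)\bigr)(\theta_n-\theta^*) + \eta_n\bigl(\EB[\nabla\ell(\theta_n,\xii_{n+1})\mid\mathcal{F}_n]-\nabla\ell(\theta_n,\xii_{n+1})\bigr) + \eta_n R_n,
\]
where $\|R_n\|\le K\|\theta_n-\theta^*\|^q$. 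Writing $U_n:=c_n(\theta_n-\theta^*)$ and multiplying through by $c_{n+1}$ produces the recursion
\[
U_{n+1}=\frac{c_{n+1}}{c_n}\bigl(I-\eta_n\nabla^2\ell(\theta^*)\bigr)U_n+\eta_n c_{n+1}\Delta M_{n+1}+\text{remainder}.
\]

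The crucial algebraic step is the expansion of $c_{n+1}/c_n$. Since $\eta_n=c/n$, one has $c_n\sim n^{1-1/\alpha}\,\tilde b(n)$ with $\tilde b$ slowly varying, and using the regularity $\lim_{x\to\infty}xb_0'(x,\theta^*)/b_0(x,\theta^*)=0$ together with the construction of $b_1$ from $b_0$, the slowly varying correction contributes $o(1/n)$ while the polynomial part yields
\[
\frac{c_{n+1}}{c_n}=1+\frac{1-1/\alpha}{n}+o\!\left(\frac{1}{n}\right)=1+\frac{1-1/\alpha}{c}\eta_n+o(\eta_n).
\]
Substituting this back, and defining the effective drift matrix
$\widetilde A:=\nabla^2\ell(\theta^*)-\frac{1-1/\alpha}{c}I$,
the recursion becomes $U_{n+1}=(I-\eta_n\widetilde A)U_n+\eta_n c_{n+1}\Delta M_{n+1}+o(\eta_n)$. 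The hypothesis $c>(1-1/\alpha)/\sigma_{\min}(\nabla^2\ell(\theta^*))$ is precisely what guarantees $\widetilde A\succ 0$, so the linear part is contractive and the argument of Theorem~\ref{thm: asymp_decay} remains applicable with $\nabla^2\ell(\theta^*)$ replaced by $\widetilde A$.

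The remainder of the proof follows the continuous-time embedding strategy of Theorem~\ref{thm: asymp_decay}: interpolate $U_n$ on the time scale $t_n=\sum_{k\le n}\eta_k\sim c\log n$, so that the interpolated process approximately satisfies the SDE $dZ^{(n)}_t=-\widetilde A Z^{(n)}_t\,dt+d\widetilde L^{(n)}_t+o(1)$, where $\widetilde L^{(n)}$ is the scaled cumulative martingale noise. Under Assumption~\ref{asmp: heavy_tail}, the same functional heavy-tail limit invoked in Theorem~\ref{thm: asymp_decay} shows $\widetilde L^{(n)}\Rightarrow L$ in Skorokhod $J_1$, where $L$ is the L\'evy process with characteristics $(0,\nu(\theta^*,\cdot),-\gamma)$, $\gamma=\int_{\|x\|>1}x\,\nu(\theta^*,dx)$. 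Tightness of $\{U_n\}$ is obtained via the same symmetrization-and-truncation argument used for Theorem~\ref{thm: asymp_decay}; the symmetric-then-asymmetric decomposition there uses only positive definiteness of the drift matrix, which is preserved for $\widetilde A$. Weak convergence to the stationary law of $dZ_t=-\widetilde A Z_t\,dt+dL_t$ then yields the stated conclusion.

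The main obstacle is the precise identification and justification of the $c_{n+1}/c_n=1+(1-1/\alpha)\eta_n/c+o(\eta_n)$ expansion, since it requires quantitative control on the slowly varying factor $b_1$. This is handled by the regularity condition $\lim_{x\to\infty}xb_0'(x,\theta^*)/b_0(x,\theta^*)=0$ built into Assumption~\ref{asmp: heavy_tail}, together with the defining relation $b_1(x)^\alpha b_0(x^{1/\alpha}b_1^{-1}(x),\theta^*)=1$ noted in the preliminaries, which forces $b_1$ to inherit the same smoothness and makes its logarithmic derivative vanish. Once this refined expansion is in place, the rest of the analysis is a direct transplant of the $\rho\in(0,1)$ argument with the contracting operator $\nabla^2\ell(\theta^*)$ replaced by $\widetilde A$.
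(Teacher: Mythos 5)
Your overall strategy is the same as the paper's: extract the $O(\eta_n)$ correction from the ratio $c_{n+1}/c_n$ using the $\rho=1$ case of Lemma~\ref{lem: eta_expansion}, absorb it into a modified drift $\widetilde A=\nabla^2\ell(\theta^*)-\frac{1-1/\alpha}{c}I$, observe that the hypothesis on $c$ is exactly positive-definiteness of $\widetilde A$, and then rerun the argument of Theorem~\ref{thm: asymp_decay}. The stated limiting SDE and L\'evy characteristics match.

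One place where your write-up understates the work, however, is in the passage from the exact recursion to the idealized one with constant drift. Lemma~\ref{lem: eta_expansion} only gives $c_{n+1}/c_n=1+\widetilde c_n\eta_n$ with $\widetilde c_n\to\widetilde c$, so the drift coefficient appearing in the true recursion is the time-varying $\nabla^2\ell(\theta^*)-\widetilde c_n I$, not $\widetilde A$. You fold the discrepancy into an ``$o(\eta_n)$ remainder,'' which is true per step, but the accumulated contribution $\sum_n \omega_{n,N}\,\eta_n(\widetilde c_n-\widetilde c)\bar X_n$ is a weighted sum of heavy-tailed random variables whose negligibility cannot be dispatched by a moment bound; the paper handles this with a dedicated lemma (comparing $\bar X_n$, driven by $\widetilde c_n$, to $\widetilde X_n$, driven by $\widetilde c$) via a double-sum rearrangement, Potter bounds, and a direct application of the Gnedenko--Kolmogorov infinitesimal-array criterion. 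You correctly identify the $c_{n+1}/c_n$ expansion as requiring care, but the more delicate step is showing the time-varying drift can be replaced by its limit; as written, your ``direct transplant'' silently assumes this. The approach is right, but this step needs to be spelled out rather than lumped into a generic remainder.
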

It is worth noticing that in Eqn~\eqref{eqn: decay_sde_lr}, there is an additional constant in the drift term comparing with Eqn~\eqref{eqn: decay_sde}, which is due to the ratio of scaling rate  is heavier:
\begin{align*}
    \left(\frac{\eta_{n+1}}{\eta_n}\right)^{\frac{1}{\alpha}-1}=
    \begin{cases}
        1+\frac{1-\frac{1}{\alpha}}{c}\eta_n+o(\eta_n),&\text{ when $\eta_n=c\cdot n^{-1}$,}\\
        1+o(\eta_n),&\text{ when $\eta_n=c\cdot n^{-\rho}$.}
    \end{cases}
\end{align*}
Thus, when extracting the error from $\eta_n^{\frac{1}{\alpha}-1}b_1(\eta_n^{-1})(\theta_n-\theta^*)$, the additional term should not be neglected for the case $\eta_n=c\cdot n^{-1}$. Moreover, to guarantee the existence of the stationary distribution $Z_{\infty}$, we also require $c>\frac{1-\frac{1}{\alpha}}{\sigma_{\min}(\nabla^2\ell(\theta^*))}$.

\section{Applications}
\label{sec: application}
In this section, we introduce the classic machine learning models: linear regression model and logistic regression model to see how the heavy tail behavior arises (Assumption~\ref{asmp: heavy_tail}) in each setting.
\subsection{Ordinary Least Squares}
We consider a linear model:
\begin{align*}
    y_i=x_i^\top\theta^*+\varepsilon_i,
\end{align*}
where $\varepsilon_i$ is a mean zero random variables and satisfy:
\begin{align}
\label{eq: eps_pareto}
    \PB\left(|\varepsilon_i|>t\right)=\frac{\1(t>1)}{t^\alpha}+\1(0\le t\le1),
\end{align}
and $x_i$ is a sub-Gaussian random variables. Moreover, it also satisfies $\mathbb{E}[x_ix_i^\top]$ is positive definite and $x_i$ and $\varepsilon_i$ are independent. And $\theta^*\in\RB^d$ is the true parameter. The stochastic loss function $\ell$ is defined as:
\begin{align*}
    \ell(\theta,x_i,\varepsilon_i)=\frac{1}{2}\left(y_i-x_i^\top\theta\right)^2.
\end{align*}
And the stochastic gradient $\nabla_{\theta}\ell$ is:
\begin{align}
\label{eq: ols_grad}
    \nabla\ell(\theta,x_i,\varepsilon_i)=x_ix_i^\top(\theta-\theta^*)-x_i\varepsilon_i.
\end{align}
Under the heavy tail assumption of $\varepsilon_i$ in \eqref{eq: eps_pareto}, we can show that the stochastic gradient $\nabla\ell(\theta,x_i,\varepsilon_i)$ is an $\alpha$-regular varying random variable in the following.
\begin{theorem}
\label{thm: ols_norm}
    If $x_i$ is generated from a compact set and $\varepsilon_i$ satisfies~\eqref{eq: eps_pareto}, the gradient $\nabla\ell(\theta,x_i,\varepsilon_i)$ in~\eqref{eq: ols_grad} is $\alpha$-regular varying as:
    \begin{align*}
    &\lim_{z\to+\infty}\frac{\PB(\|\nabla\ell(\theta,x_i,\varepsilon_i)\|>tz)}{\PB(\|\nabla\ell(\theta,x_i,\varepsilon_i)\|>z)}=t^{-\alpha},\\
    &\lim_{z\to+\infty}\PB\left(\left.\frac{\nabla\ell(\theta,x_i,\varepsilon_i)}{\|\nabla\ell(\theta,x_i,\varepsilon_i)\|}\in\cdot\right|\|\nabla\ell(\theta,x_i,\varepsilon_i)\|>z\right)\overset{v}{=}\frac{\EB\1_{\cdot}(\frac{x_i}{\|x_i\|})\|x_i\|^\alpha+\EB\1_{\cdot}(\frac{-x_i}{\|x_i\|})\|x_i\|^\alpha}{2\EB\|x_i\|^{\alpha}}.\notag
    \end{align*}
\end{theorem}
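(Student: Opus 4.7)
The approach exploits a key structural observation: since $\nabla\ell(\theta,x_i,\varepsilon_i)=x_ix_i^\top(\theta-\theta^*)-x_i\varepsilon_i=x_i\bigl(c_i-\varepsilon_i\bigr)$ where $c_i:=x_i^\top(\theta-\theta^*)$ is a bounded scalar (because $x_i$ lies in a compact set, $|c_i|\le M\|\theta-\theta^*\|$ for some $M<\infty$), the gradient is always a scalar multiple of $x_i$. Hence $\|\nabla\ell(\theta,x_i,\varepsilon_i)\|=\|x_i\|\cdot|c_i-\varepsilon_i|$ and the heavy tail behavior is driven entirely by $\varepsilon_i$, with $\|x_i\|^{\alpha}$ emerging as a natural scaling factor after conditioning on $x_i$. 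The proof then splits into two independent conditioning arguments, one for the magnitude and one for the direction.

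For the first limit, I would condition on $x_i$ and exploit the sandwich bound $\PB(|\varepsilon_i|>u+|c_i|)\le \PB(|c_i-\varepsilon_i|>u)\le \PB(|\varepsilon_i|>u-|c_i|)$ together with the exact tail $\PB(|\varepsilon_i|>t)=t^{-\alpha}$ for $t>1$, which gives the pointwise equivalence $\PB(\|\nabla\ell\|>z\mid x_i)\sim \|x_i\|^{\alpha} z^{-\alpha}$ as $z\to\infty$. Multiplying by $z^{\alpha}$ yields an integrand that is dominated by a constant times $\|x_i\|^{\alpha}\le M^{\alpha}$ for $z$ sufficiently large (uniformly in $x_i$ over the compact support), so dominated convergence gives $z^{\alpha}\PB(\|\nabla\ell\|>z)\to\EB\|x_i\|^{\alpha}$, whence the first display in the statement follows by taking the ratio at $tz$ and $z$.

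For the angular part, note that $\nabla\ell/\|\nabla\ell\|=\operatorname{sgn}(c_i-\varepsilon_i)\cdot x_i/\|x_i\|$, and for $z$ large enough the event $\|\nabla\ell\|>z$ forces $|\varepsilon_i|$ to dominate $|c_i|$, so $\operatorname{sgn}(c_i-\varepsilon_i)=-\operatorname{sgn}(\varepsilon_i)$. Decomposing $\PB(\nabla\ell/\|\nabla\ell\|\in A,\ \|\nabla\ell\|>z\mid x_i)$ into the two disjoint events $\{\varepsilon_i>z/\|x_i\|-c_i\}$ and $\{\varepsilon_i<-z/\|x_i\|+c_i\}$, and using the balanced tails of $\varepsilon_i$ (the tail $\PB(|\varepsilon_i|>t)=t^{-\alpha}$ together with mean zero implicitly requires $\PB(\varepsilon_i>t)\sim\frac{1}{2}t^{-\alpha}\sim\PB(\varepsilon_i<-t)$), each piece contributes asymptotically $\tfrac12 \|x_i\|^{\alpha}z^{-\alpha}\1_A(\pm x_i/\|x_i\|)$. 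Integrating over $x_i$ by dominated convergence and dividing by the total tail $\sim z^{-\alpha}\EB\|x_i\|^{\alpha}$ produces the claimed formula.

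The main technical obstacle is verifying that the asymptotic equivalences survive the conditioning step uniformly in $x_i$, which matters on both fronts: near $\|x_i\|=0$ the conditional tail vanishes as $\|x_i\|^{\alpha}\to 0$, so one must make sure the uniform DCT bound is indeed $O(\|x_i\|^{\alpha})$ and not just $O(1)$, and for large but bounded $\|x_i\|$ one needs $z$ sufficiently large that the sandwich becomes tight. A secondary subtlety is extracting the factor $1/2$ in the angular formula from the tail assumption on $|\varepsilon_i|$ alone; the cleanest route is to invoke the symmetry that is implicit in the two-sided tail specification, but if only the $|\varepsilon_i|$-tail is given, one needs the balanced regular-variation condition $\PB(\varepsilon_i>t)/\PB(|\varepsilon_i|>t)\to 1/2$ as an auxiliary hypothesis.
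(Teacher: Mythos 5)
Your proof is correct and follows essentially the same route as the paper's: condition on $x_i$, observe that $\nabla\ell=x_i(c_i-\varepsilon_i)$ so that $\|\nabla\ell\|=\|x_i\|\,|c_i-\varepsilon_i|$ and the direction is $\pm x_i/\|x_i\|$, then use the exact Pareto tail of $\varepsilon_i$ together with uniform control over the compact support of $x_i$. The paper handles the first limit via a mean-value-theorem-for-integrals step that selects a single $\widehat{x}$ inside the compact support (depending on $z$), whereas you use the explicit sandwich $(u+|c_i|)^{-\alpha}\le\PB(|c_i-\varepsilon_i|>u)\le(u-|c_i|)^{-\alpha}$ plus dominated convergence; these are interchangeable, and yours is arguably the more transparent of the two.

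Your closing caveat about the factor $\tfrac12$ is a legitimate observation and not a mere technicality. Assumption~\eqref{eq: eps_pareto} constrains only $\PB(|\varepsilon_i|>t)$, and together with mean zero this does \emph{not} determine the one-sided tails $\PB(\varepsilon_i>t)$ and $\PB(\varepsilon_i<-t)$ individually. The paper's proof tacitly uses a balanced split $\PB(\varepsilon_i>t)\sim\PB(\varepsilon_i<-t)\sim\tfrac12 t^{-\alpha}$ — for instance in asserting $\lim_{z\to\infty}z^\alpha\PB\bigl(\|x\|(x^\top(\theta-\theta^*)-\varepsilon_i)>z\bigr)=\|x\|^\alpha$ and when equating the ratio formed with $>z$ in the numerator to the one formed with $<-z$ in the denominator. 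If instead $\PB(\varepsilon_i>t)/t^{-\alpha}\to p$ and $\PB(\varepsilon_i<-t)/t^{-\alpha}\to 1-p$ with $p\neq\tfrac12$, the angular measure would carry weights $(1-p,p)$ on the two signed directions rather than $(\tfrac12,\tfrac12)$, and the stated formula would fail. So a symmetry (or tail-balance) hypothesis on $\varepsilon_i$ needs to be made explicit, exactly as you flag.
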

{\rv Applying Theorem~\ref{thm: ols_norm} to Proposition~\ref{prop: z_stat_prop}, the characteristics of limit distribution for OLS can be obtained.}

\subsection{Logistic Regression}
\label{subsec: lr}
In this part, we consider the binary logistic regression model:
\begin{align}
\label{eq: logistic_1}
    &\PB\left(y_i=1|x_i,\theta^*\right)=\frac{1}{1+\exp(-x_i^\top\theta^*)}\\
\label{eq: logistic_2}    
    &\PB\left(y_i=-1|x_i,\theta^*\right)=\frac{\exp(-x_i^\top\theta^*)}{1+\exp(-x_i^\top\theta^*)},
\end{align}
where $y_i\in\{-1,1\}$ is the binary response, $x_i$ is an $\alpha$-regular varying random vector, and $\theta^*\in\RB^d$ is the true parameter. The loss function is:
\begin{align*}
    \ell(\theta,x,y)=\ln\left(1+\exp\left(-y\theta^\top x\right)\right)+\frac{\lambda}{2}\|\theta\|^2,
\end{align*}
where $\lambda>0$ is the regularized coefficient to guarantee the loss function is strongly convex. The stochastic gradient of $\ell(\theta,x_i,y_i)$ is:
\begin{align}
\label{eq: logistic_grad}
    \nabla \ell(\theta,x_i,y_i)=\frac{-y_i\exp(-y_i\theta^\top x_i)}{1+\exp(-y_i\theta^\top x_i)}x_i+\lambda\theta.
\end{align}
Under the heavy tail assumption of $x_i$, we can show that the stochastic gradient $\nabla\ell(\theta,x_i,y_i)$ is also an $\alpha$-regular varying random vector in the following.
\begin{theorem}
\label{thm: logistic_measure}
    If the random vector $x_i$ is $\alpha$-regular varying as:
    \begin{align*}
        \lim_{z\to+\infty}\frac{\PB(\|x_i\|>tz)}{\PB(\|x_i\|>z)}=t^{-\alpha},
        \lim_{z\to+\infty}\PB\left(\left.\frac{x_i}{\|x_i\|}\in\cdot\right|\|x_i\|>z\right)\overset{v}{=}\mu(\cdot),
    \end{align*}
    where there exists $\beta_1,\beta_2>0$ such that $\int_{\omega^\top\theta^*>0}(\omega^\top\theta^*)^{-\beta_1}\mu(d\omega)<+\infty$ and $\int_{\omega^\top\theta^*<0}(-\omega^\top\theta^*)^{-\beta_2}\mu(d\omega)<+\infty$, and $y_i$ satisfies~\eqref{eq: logistic_1} and \eqref{eq: logistic_2}. Then, the norm of gradient in \eqref{eq: logistic_grad} is regular varying:
    \begin{align*}
        \lim_{z\to+\infty}\frac{\PB(\|\nabla\ell(\theta,x_i,y_i)\|>tz)}{\PB(\|\nabla\ell(\theta,x_i,y_i)\|>z)}=t^{-\alpha}.
    \end{align*}
    We also denote the limit measure of the direction of gradient in \eqref{eq: logistic_grad} as:
    \begin{align*}
        {\rv\mu^{\dagger}(\theta,\cdot)}=\lim_{z\to+\infty}\PB\left(\left.\frac{\nabla\ell(\theta,x_i,y_i)}{\|\nabla\ell(\theta,x_i,y_i)\|}\in\cdot\right|\|\nabla\ell(\theta,x_i,y_i)\|>z\right)
    \end{align*}
    If $\theta\not=0, \theta^*\not=0$ and $\theta\not=\theta^*$:
    \begin{align*}
        {\rv\mu^{\dagger}(\theta,\cdot)}=\frac{\mu(\cdot\cap\{x\in\mathbb{S}^{d-1}|x^\top\theta>0,x^\top\theta^*<0\})+\mu((-\cdot)\cap\{x\in\mathbb{S}^{d-1}|x^\top\theta<0,x^\top\theta^*>0\})}{\mu(\{x\in\mathbb{S}^{d-1}|x^\top\theta<0,x^\top\theta^*>0\})+\mu(\{x\in\mathbb{S}^{d-1}|x^\top\theta>0,x^\top\theta^*<0\})}.\notag
    \end{align*}
    If $\theta=0$ and $\theta^*\not=0$:
    \begin{align*}
        {\rv\mu^{\dagger}(\theta,\cdot)}=\mu(\cdot\cap\{x\in\mathbb{S}^{d-1}|x^\top\theta^*<0\})+\mu((-\cdot)\cap\{x\in\mathbb{S}^{d-1}|x^\top\theta^*>0\}).\notag
    \end{align*}
    If $\theta\not=0$ and $\theta^*=0$:
    \begin{align*}
        {\rv\mu^{\dagger}(\theta,\cdot)}=\mu(\cdot\cap\{x\in\mathbb{S}^{d-1}|x^\top\theta>0\})+\mu((-\cdot)\cap\{x\in\mathbb{S}^{d-1}|x^\top\theta<0\}).\notag
    \end{align*}
    Specifically, if $\theta=\theta^*\not=0$, the support of direction measure ${\rv\mu^{\dagger}(\theta,\cdot)}$ will degenerate to $\mathbb{S}^{d-2}$. For example, if $d=3$, the support of ${\rv\mu^{\dagger}(\theta,\cdot)}$ degenerate to a {\rv 1-dimensional sphere/circle}. If $d=2$, the support of ${\rv\mu^{\dagger}(\theta,\cdot)}$ will degenerate to a two-point set. If $d=1$, the support of ${\rv\mu^{\dagger}(\theta,\cdot)}$ will degenerate to a single-point set {\rv or empty set}. If $\theta=\theta^*=0$, ${\rv\mu^{\dagger}(\theta,\cdot)}=\frac{1}{2}\mu(\cdot)+\frac{1}{2}\mu(-\cdot)$.
\end{theorem}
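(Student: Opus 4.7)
The plan is to decompose the stochastic gradient as $\nabla\ell(\theta,x_i,y_i) = c(y_i,\theta^\top x_i)\,x_i + \lambda\theta$, where $c(y,z) := -y/(1+\exp(yz)) \in [-1,1]$. Since the regularization term $\lambda\theta$ is bounded, it is asymptotically negligible for the tail of $\|\nabla\ell\|$, so the analysis reduces to understanding the distribution of $c(y_i,\theta^\top x_i)\,x_i$ on the event $\{\|x_i\|$ large$\}$. Writing $x_i = r\omega$ with $r = \|x_i\|$ and $\omega\in\mathbb{S}^{d-1}$, the logistic model gives $\PB(y_i=1\mid x_i)=\sigma(r\,\omega^\top\theta^*)$, which converges to $\1(\omega^\top\theta^*>0)$ at rate $\exp(-r|\omega^\top\theta^*|)$. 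For a fixed value of $y_i$, the coefficient $c(y_i,r\,\omega^\top\theta)$ tends to $-y_i$ if $y_i\omega^\top\theta<0$ and vanishes exponentially if $y_i\omega^\top\theta>0$; in either surviving case the vector $c\cdot x_i$ is parallel to $\sgn(c)\,\omega$, so the gradient direction equals $-\omega$ when $y_i=1$ and $+\omega$ when $y_i=-1$.

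I would then partition $\mathbb{S}^{d-1}$ into the four open quadrants defined by the signs of $\omega^\top\theta$ and $\omega^\top\theta^*$. Combining the two limits above, only two quadrants contribute non-trivially to the tail: on $R_+ := \{\omega^\top\theta>0,\ \omega^\top\theta^*<0\}$ one has $(y_i,c)\to(-1,+1)$ and the gradient direction is $+\omega$, while on $R_- := \{\omega^\top\theta<0,\ \omega^\top\theta^*>0\}$ one has $(y_i,c)\to(+1,-1)$ and the gradient direction is $-\omega$. Using the polar representation of the vague limit assumed in Assumption~\ref{asmp: heavy_tail} for $x_i$, this yields
\begin{align*}
    \PB(\|\nabla\ell\|>z) \sim \bigl(\mu(R_+)+\mu(R_-)\bigr)\,\PB(\|x_i\|>z),
\end{align*}
proving the $\alpha$-regular variation, and identifies the conditional direction law as the pushforward $\mu(\cdot\cap R_+)+\mu((-\cdot)\cap R_-)$, normalized by $\mu(R_+)+\mu(R_-)$, matching the stated formula. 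The degenerate sub-cases follow by substitution: for instance, $\theta^*=0$ makes $y_i$ independent of $x_i$ and uniform on $\{\pm1\}$, so both the $+\omega$ and $-\omega$ limits survive on the respective half-spheres $\{\omega^\top\theta\gtrless0\}$; the case $\theta=\theta^*$ makes both $R_\pm$ empty and forces the leading tail to concentrate on the equator $\{\omega^\top\theta^*=0\}$, which is a $(d-2)$-dimensional sphere.

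The main technical obstacle is justifying the exchange of the $r\to\infty$ limit with the integration over $\omega$. The subtle point is that the exponentially small contributions of the ``wrong-sign'' event $y_i=-\sgn(\omega^\top\theta^*)$ carry weight $\sigma(-r|\omega^\top\theta^*|)\asymp\exp(-r|\omega^\top\theta^*|)$, which is not uniform in $\omega$ near the hyperplane $\omega^\top\theta^*=0$; an analogous non-uniformity affects the vanishing-$c$ quadrants. After factoring out the $r^{-\alpha}$ from the regular-varying tail of $\|x_i\|$, these residuals reduce to integrals of the form $\int e^{-r|\omega^\top\theta^*|}\mu(d\omega)$, and the hypotheses $\int_{\omega^\top\theta^*>0}(\omega^\top\theta^*)^{-\beta_1}\mu(d\omega)<\infty$ and $\int_{\omega^\top\theta^*<0}(-\omega^\top\theta^*)^{-\beta_2}\mu(d\omega)<\infty$ provide, via a Potter-type envelope, a $\mu$-integrable dominating function. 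Dominated convergence then delivers the claimed limit and guarantees that the result is insensitive to the fine structure of $\mu$ near the critical hyperplanes.
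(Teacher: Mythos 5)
Your proposal is correct and arrives at the same quadrant decomposition and the same formulas, by essentially the same mechanism as the paper. The organizational difference is that the paper introduces the auxiliary vector $h_i = y_ix_i$ and proves an intermediate vague limit $\nu(\cdot) = \mu((-\cdot)\cap\{x^\top\theta^*>0\}) + \mu(\cdot\cap\{x^\top\theta^*<0\})$ for the angular law of $-h_i$ conditioned on $\{\|h_i\|>z\}$ (absorbing the $y_i$ and $\theta^*$ dependence), and only afterwards conditions on the gradient-norm event $\sigma(h_i^\top\theta)\|h_i\|>z$, obtaining $\mu^\dagger(\theta,A) = \nu(A\cap\{x^\top\theta>0\})/\nu(\{x^\top\theta>0\})$. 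Unfolding $\nu$ in that formula gives exactly your one-shot quadrant partition, so the two derivations coincide once expanded; your version is arguably more transparent about which quadrants survive and why, while the paper's two-step structure isolates the label randomness from the sigmoid-attenuation effect, which makes the epsilon--delta bookkeeping cleaner. Both arguments kill the wrong-label and wrong-sign contributions by the same exponential-to-power envelope $e^{-ru}\lesssim r^{-\beta}u^{-\beta}$ combined with the $\beta_1,\beta_2$ moment hypotheses; the paper additionally makes essential use of $\sigma\le 1$ to guarantee $\{\|\nabla\ell\|>z\}\subseteq\{\|h_i\|>z\}$ (up to the bounded $\lambda\theta$ term), which is the clean reason your claimed equivalence $\PB(\|\nabla\ell\|>z)\sim(\mu(R_+)+\mu(R_-))\PB(\|x_i\|>z)$ is legitimate; you should make that inclusion explicit. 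One further point: when you pass from the Potter-type envelope under the conditional angular law $\mu_z$ to the limiting $\mu$-integral, vague convergence alone does not justify convergence against the unbounded integrand $|\omega^\top\theta^*|^{-\beta}$, so a short uniform-integrability argument is needed; the paper asserts this step without proof as well, so it is a shared omission rather than a flaw particular to your route. Finally, your remark that $\theta=\theta^*$ ``forces the leading tail to concentrate on the equator'' is, like the paper's corresponding sentence, a heuristic for the degenerate case and not a proved statement; if you want it rigorous you would need to identify the correct (lighter-than-$\alpha$) tail scale on $\{\omega^\top\theta^*=0\}$, which the paper defers to Corollary~\ref{cor: log_1d} for $d=1$.
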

{\rv Applying Theorem~\ref{thm: logistic_measure} to Proposition~\ref{prop: z_stat_prop}, the characteristics of limit distribution for logistic regression can be obtained.} It is worth noticing that the limit measure of the direction of gradient in~\eqref{eq: logistic_grad} depends not only on the parameter of the true model $\theta^*$ but also depends on the current parameter $\theta$, which is different from the ordinary least squares' case. {\rv Moreover, in Theorem~\ref{thm: logistic_measure} with $d=1$, the support $\mu^\dagger(\cdot)$ may even degenerate to an empty set, which implies the stochastic gradient will be no longer heavy-tailed. 
}
{\rv
\begin{corollary}
\label{cor: log_1d}
    In Theorem~\ref{thm: logistic_measure}, when $d=1$ and $\theta^*>0$ (the result is similar when $\theta^*<0$), the limit angular measure $\mu^\dagger(\cdot)$ satisfies:
    \begin{align*}
        \mu^\dagger(\theta,\cdot)=
        \begin{cases}
            \1_{\{-1\}}\left(\cdot\right),&\text{ if $\theta\le 0$}\\
            0,&\text{ if $\theta>0$}
        \end{cases}
    \end{align*}
\end{corollary}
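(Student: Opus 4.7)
The plan is a direct case analysis on the sign of $\theta$, specialising Theorem~\ref{thm: logistic_measure} to $d=1$ with $\theta^*>0$. Since $\mathbb{S}^0=\{-1,+1\}$, the half-line indicators collapse to elementary sign conditions: $\{x\in\mathbb{S}^0 : x\theta>0\}$ equals $\{+1\}$, $\{-1\}$, or $\emptyset$ according as $\theta>0$, $\theta<0$, or $\theta=0$, while $\{x : x\theta^*>0\}=\{+1\}$ and $\{x : x\theta^*<0\}=\{-1\}$ throughout. This reduces every formula in Theorem~\ref{thm: logistic_measure} to a finite sum over $\{-1,+1\}$.

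For $\theta<0$ I would invoke the first displayed formula of Theorem~\ref{thm: logistic_measure}. The two region sets are $\{x\theta>0,\, x\theta^*<0\}=\{-1\}$ and $\{x\theta<0,\, x\theta^*>0\}=\{+1\}$. Evaluating at $A=\{-1\}$ gives numerator $\mu(\{-1\})+\mu(\{+1\})=1$ and denominator $1$, while at $A=\{+1\}$ the numerator is $0$. Hence $\mu^{\dagger}(\theta,\cdot)=\delta_{-1}=\1_{\{-1\}}(\cdot)$. For $\theta=0$ I would use the second displayed formula: $\mu^{\dagger}(\theta,A)=\mu(A\cap\{-1\})+\mu(-A\cap\{+1\})$, which again collapses to $\delta_{-1}$, independently of how $\mu$ splits its mass between $+1$ and $-1$.

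The remaining range $\theta>0$ decomposes into $\theta\neq\theta^*$ and $\theta=\theta^*$. In the first sub-case both sign regions in the first formula of Theorem~\ref{thm: logistic_measure} are empty in 1D, and in the second sub-case the theorem records that the support of $\mu^{\dagger}$ degenerates to $\mathbb{S}^{d-2}=\mathbb{S}^{-1}=\emptyset$. The reason both sub-cases collapse to the zero measure is that $\nabla\ell(\theta, x_i, y_i)=\frac{-y_i\exp(-y_i\theta x_i)}{1+\exp(-y_i\theta x_i)}x_i+\lambda\theta$ is large only when $y_ix_i<0$ with $|x_i|$ large, but the misclassification probability $\PB(y_i=-\mathrm{sgn}(x_i)\mid x_i)$ is bounded by $\exp(-|x_i\theta^*|)/(1+\exp(-|x_i\theta^*|))$, which decays exponentially and defeats the polynomial tail of $\|x_i\|$. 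I would conclude by showing $z^{\alpha}\PB(\|\nabla\ell(\theta,x_i,y_i)\|>z)\to 0$, so $\mu^{\dagger}(\theta,\cdot)\equiv 0$.

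The hard part is precisely this $\theta>0$ case: Theorem~\ref{thm: logistic_measure} only records an empty limit support, whereas the corollary makes the stronger assertion that $\mu^{\dagger}$ is the zero measure rather than an undefined conditional limit. Making this rigorous requires the direct tail estimate sketched above, namely decomposing by $y_i$, bounding the misclassification probability by an exponential in $|x_i\theta^*|$, and integrating against the regularly varying tail of $\|x_i\|$ using the decomposition of $\mu$ into half-sphere contributions. The other three cases are immediate substitutions into Theorem~\ref{thm: logistic_measure}.
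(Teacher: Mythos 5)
Your proof is correct, and it follows the same route the paper takes (the paper does not give a formal proof of the corollary; the argument lives in the discussion immediately following the statement). For $\theta<0$ and $\theta=0$ your computations are exactly what is needed: in $\mathbb{S}^0=\{-1,+1\}$ the half-spaces collapse to singletons, the denominator $\mu(\{+1\})+\mu(\{-1\})=1$, and both formulas of Theorem~\ref{thm: logistic_measure} return $\delta_{-1}$. (Worth noting, though you omitted it, that the $\beta_1,\beta_2$ integrability hypotheses of Theorem~\ref{thm: logistic_measure} are automatic in $d=1$ with $\theta^*\neq0$, since the integrals are finite sums over $\{-1,+1\}$ at points where $\omega\theta^*\neq0$.)

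For $\theta>0$ your diagnosis is right: the first formula of Theorem~\ref{thm: logistic_measure} is $0/0$ and $\mathbb{S}^{d-2}=\emptyset$ in the degenerate sub-case, so the corollary asserts genuinely new content. Your tail-estimate sketch is the correct and essentially the only way to make it rigorous, and it is in fact \emph{more} careful than the paper's own informal remark that the gradient ``is bounded.'' That remark is not literally true almost surely, because on the misclassification event $\{y_ix_i<0\}$ one has $\sigma(y_i\theta x_i)\to 1$ and hence $|\nabla\ell|\approx|x_i|$, which is unbounded. The right statement is precisely yours: for $z$ larger than $\sup_u u\,e^{-\theta u}+\lambda|\theta|$, the event $\{|\nabla\ell|>z\}$ forces $y_ix_i<0$ and $|x_i|\gtrsim z$, so
\begin{align*}
\PB\bigl(|\nabla\ell(\theta,x_i,y_i)|>z\bigr)\;\le\;\EB\Bigl[e^{-\theta^*|x_i|}\,\1(|x_i|\gtrsim z)\Bigr]\;\le\;e^{-c\theta^* z},
\end{align*}
which kills $z^{-\alpha}$ (and any slowly varying correction). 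One small clarification you may want to add: strictly speaking the conditional measures $\PB(\nabla\ell/\|\nabla\ell\|\in\cdot\mid\|\nabla\ell\|>z)$ live on the compact set $\{-1,+1\}$ and so any limit has total mass one (in fact the limit is $\delta_{+1}$, since only misclassification produces a large gradient and then $\nabla\ell\approx -y_ix_i=|x_i|>0$). The assertion ``$\mu^\dagger(\theta,\cdot)=0$'' in the corollary is therefore shorthand for ``the $\alpha$-spectral measure $\nu(\theta,\cdot)$ of Assumption~\ref{asmp: heavy_tail} vanishes, i.e., $\widetilde{C}(\theta)=0$,'' which is exactly what your estimate $z^\alpha\PB(\|\nabla\ell\|>z)\to 0$ establishes. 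You already flag this tension; naming the reconciliation explicitly would close the loop.
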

The degeneration of $\mu^\dagger(\theta,\cdot)$ is directly due to there is no intersection area between $\{|x|=1|x^\top\theta<0\}$ and $\{|x|=1|x^\top\theta^*>0\}$, which implies the result in Theorem~\ref{thm: logistic_measure} is not applicable in 1-dimensional case with $\theta>0$ and $\theta^*>0$. Equivalently, in this case, $\lim_{z\to\infty}\PB(y_ix_i>0||x_i|>z)=1$. Then, the stochastic gradient satisfies $|\nabla\ell(\theta,x_i,y_i)|\lesssim |x_i|\exp(-|x_i|)$ when $|x_i|$ is large, which implies the stochastic gradient is bounded. For Theorem~\ref{thm: constant}, if $\bar{\theta}(\cdot)>0$ during time $[t_1,t_2]$, the correct scaling rate for the limit theorem is $\eta^{-\frac{1}{2}}$ during time $[t_1,t_2]$ and the limit process is driven by Brownian motion. Similarly, the correct scaling rate for Theorem~\ref{thm: asymp_decay} is $\eta_n^{-\frac{1}{2}}$ and the limit distribution is Gaussian-type in this case.
}

\section{Numerical Simulation}
\label{sec: exp}
\begin{figure}[t!]
    \centering
    \includegraphics[width=1.0\linewidth]{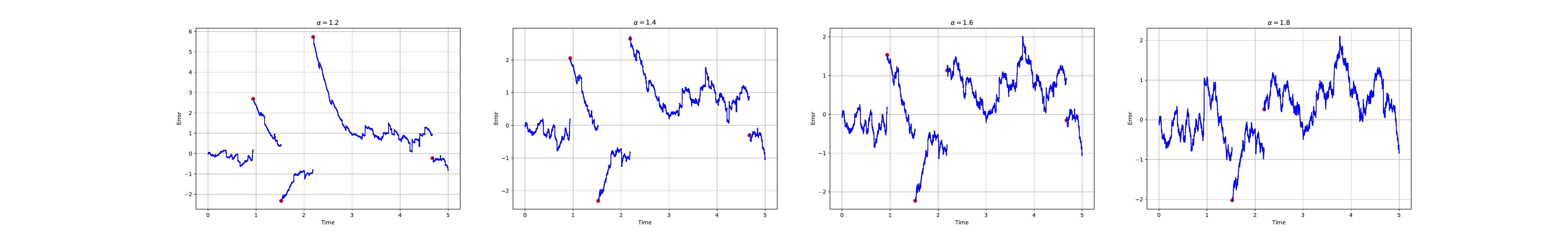}
    \caption{Ordinary Least Squares with constant learning rate $\eta=10^{-3}$ and iterations $n=5\times 10^3$.  The x-axis represents time, defined as the product of the number of iterations and the learning rate. The y-axis represents the scaled error, as defined in \eqref{eq: const_error}. Title $\alpha$ indicates the noise is generated from an $\alpha$-stable distribution. Red points mark ``jump points'', identified where the difference in scaled error exceeds a threshold of 1.}
    \label{fig: constant_alpha}
\end{figure}
In this section, we present small-scale simulations to approximate the limit theorems in this paper. The experiment settings are designed via the proposed models in Section~\ref{sec: application}.
\subsection{Ordinary Least Squares}
For the ordinary least squares problem, we consider solving the following problem:
\begin{align*}
    \min_{\theta\in \RB^2}\frac{1}{2}\theta^\top A\theta+\theta^\top b, 
\end{align*}
where $A=\diag\{2,1\}$ and $b=(1,1)^\top$. The SGD iteration is:
\begin{align*}
    \theta_{i+1}=\theta_i-\eta_i\left(A\theta_i+b+Z_i\right), 
\end{align*}
where $\{Z_i\}_{i\ge1}$ is i.i.d. standard $\alpha$-stable random variable whose characteristic function is $\EB\exp(it Z)=\exp(-|t|^{\alpha})$. In the following, we discuss the simulation results with constant learning rate $\eta_i=\eta$ and decaying learning rate $\eta_i=i^{-\rho}$.

\paragraph*{Constant learning rate simulation results}
In this part, we aim to verify the result of Theorem~\ref{thm: constant}. In Fig.~\ref{fig: constant_alpha}, we test the sample paths with different $\alpha\in[1.2, 1.4, 1.6, 1.8]$ and regard the difference of the scaled errors between two steps as a jump if it exceeds 1. It is worth noticing that the jumps will merge as $\alpha$ enlarges. It matches our expecation as the jump size of an $\alpha$-stable L\'evy process will decrease as $\alpha$ enlarges.

\paragraph*{Decaying learning rate simulation results}
In this part, we aim to verify the result of Theorem~\ref{thm: asymp_decay}. In Fig.~\ref{fig: decay_alpha}, we plot the histogram of the scaled error in~\eqref{eq: decay_error}. We notice that the empirical distribution of the scaled error matches the asymptotic distribution of $Z_{\infty}$ in~\eqref{eqn: z_stationary}. And we also calculate the 
$95\%$ coverage rate of the scaled error and find it reaches the $95\%$ symmetric confidence interval of $Z_{\infty}$ in Fig.~\ref{fig: decay_coverage}. 

\begin{figure}[t!]
    \centering
    \includegraphics[width=1.0\linewidth]{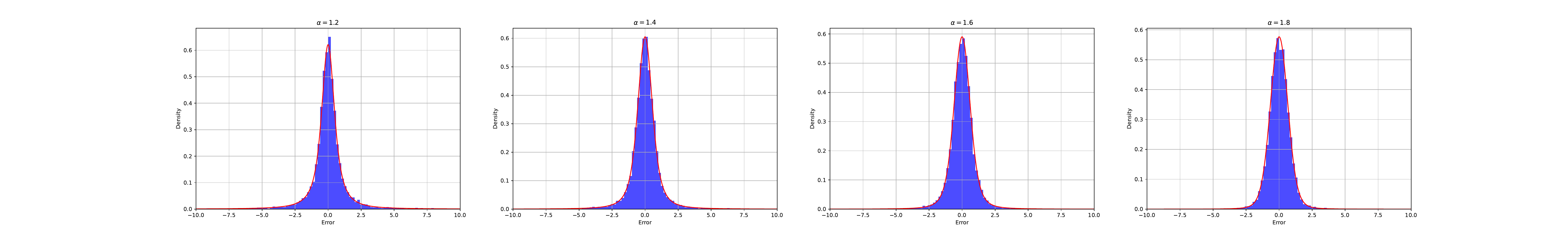}
    \caption{Ordinary Least Squares with decay learning rate $\eta_n=n^{-\rho}$ ($\rho=0.6$) and iterations $n=10^3$ and replications $k=10^4$.  The x-axis represents the scaled error, as defined in \eqref{eq: decay_error}. The y-axis represents the empirical density of the scaled error. Title $\alpha$ indicates the noise is generated from an $\alpha$-stable distribution. Red curve is the density of the stationary distribution in \eqref{eqn: z_stationary}.}
    \label{fig: decay_alpha}
\end{figure}

\begin{figure}[t!]
    \centering
    \includegraphics[width=0.5\linewidth]{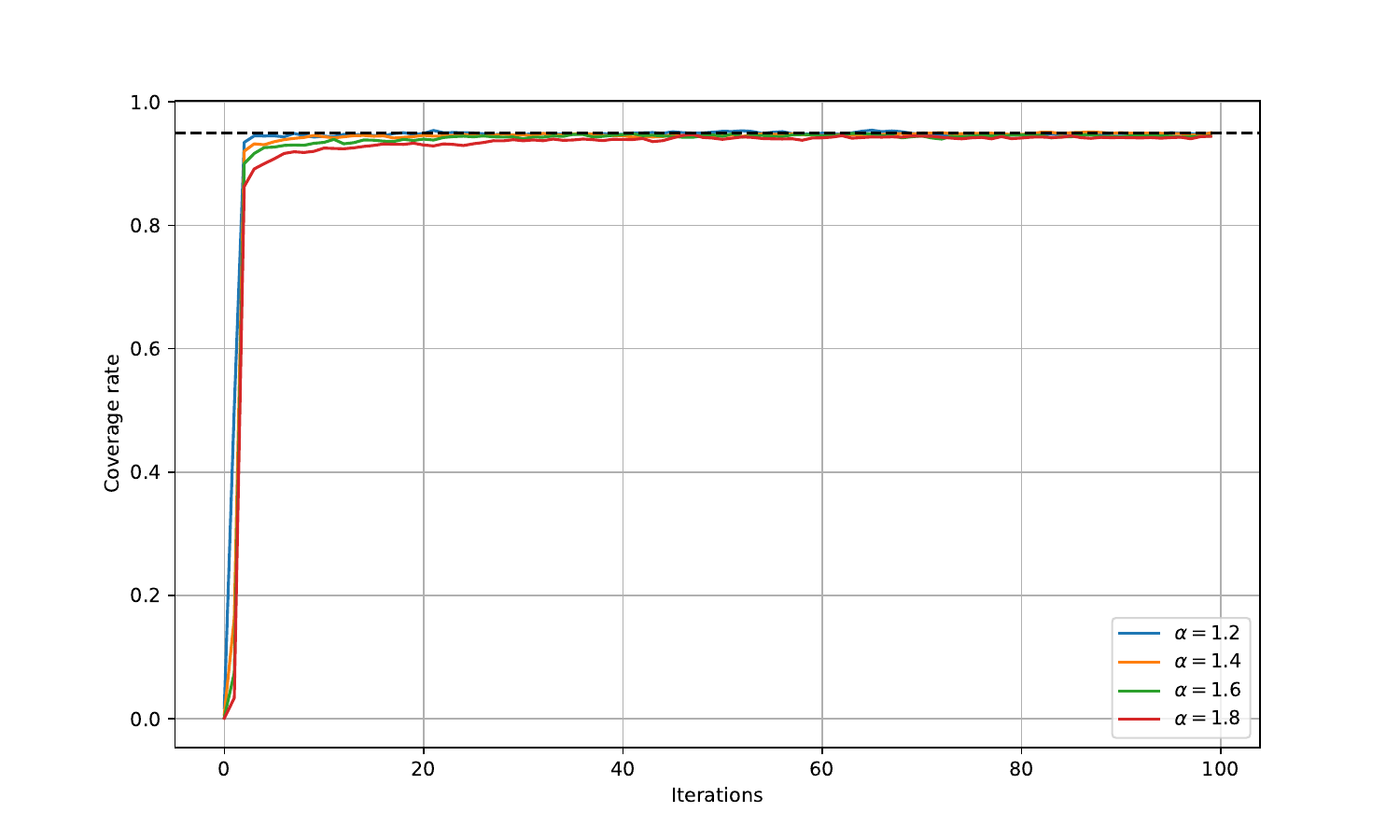}
    \caption{Ordinary Least Squares with decay learning rate $\eta_n=n^{-\rho}$ ($\rho=0.6$) and iterations $n=10^2$ and replications $k=10^4$.  The x-axis represents the number of iterations. The y-axis represents the $95\%$ coverage rate of the scaled error \eqref{eq: decay_error}. The black dashed line represents $y=0.95$.}
    \label{fig: decay_coverage}
\end{figure}

\subsection{Logistic Regression}

\begin{figure}[t!]
    \centering
    \includegraphics[width=1.0\linewidth]{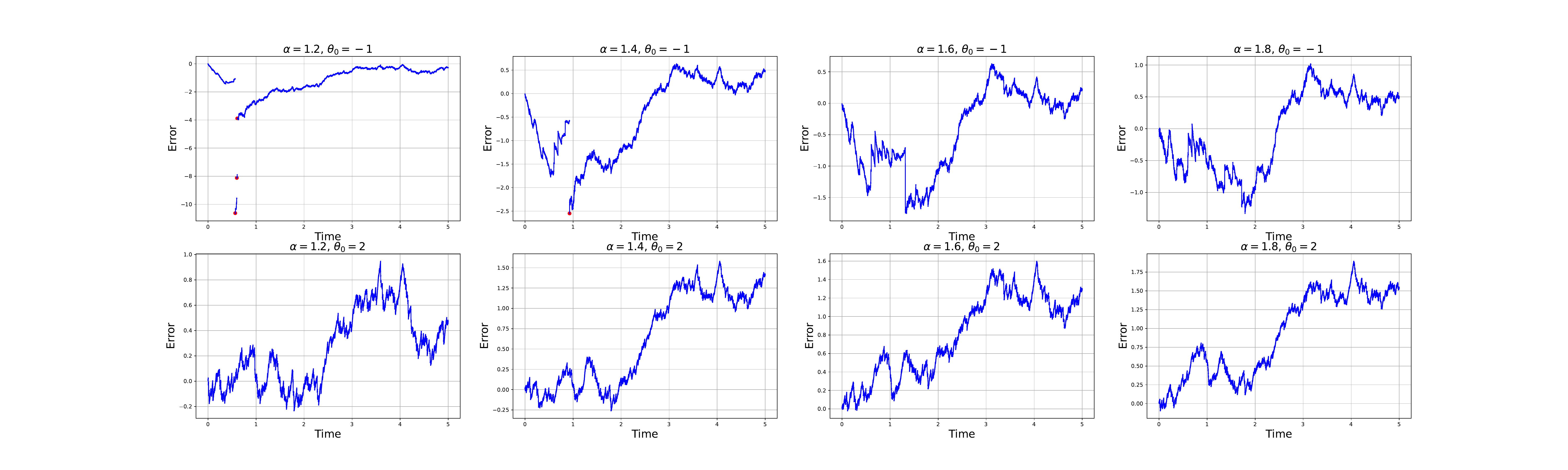}
    \caption{Logistic regression ($\theta^*=1$) with constant learning rate $\eta=10^{-3}$ and iterations $n=5\times 10^3$.  The x-axis represents time, defined as the product of the number of iterations and the learning rate. The y-axis represents the scaled error, as defined in \eqref{eq: const_error}. Title $\alpha$ indicates the covariates $x_i$ are generated from an $\alpha$-stable distribution. The upper figures are the results with initial point $\theta_0=-1$ and the lower figures are the results with initial point $\theta_0=2$. {\rv For the upper figure, we scale the error with rate $\eta^{\frac{1}{\alpha}-1}$ when $\bar{\theta}(t)\le0$ and scale the error with $\eta^{-\frac{1}{2}}$ when $\bar{\theta}(t)>0$. For the lower figure, we scale the error with rate $\eta^{-\frac{1}{2}}$.}}
    \label{fig: constant_alpha_log}
\end{figure}

For the logistic regression problem, we consider solving it in 1 dimension in Section~\ref{subsec: lr}, where we consider $\{x_i\}_{i=1}^n$ are i.i.d. standard $\alpha$-stable random variable whose characteristic function is $\EB\exp(itZ)=\exp(-|t|^{\alpha})$ and model $\theta^*=1$. For $\theta^*=1$, it means the true model has a preference based on the covariates. In the following, we discuss the simulation results with constant learning rate $\eta_i=\eta$ and decaying learning rate $\eta_i=i^{-\rho}$.

\paragraph*{Constant learning rate simulation results}
In this part, we aim to verify the result of Theorem~\ref{thm: constant}. In Fig.~\ref{fig: constant_alpha_log}, we test the sample paths with different $\alpha\in[1.2, 1.4, 1.6, 1.8]$ and regard the difference of the scaled errors between two steps as a jump if it exceeds 1. In the case of $\theta^*=1$, we consider two different initial points $\theta_0=-1$ and $\theta_0=2$. When the initial point $\theta_0=-1$, the solution path will cross the point $0$. By Theorem~\ref{thm: logistic_measure} {\rv and Corollary ~\ref{cor: log_1d}}, we can find the L\'evy measure will evolve from $\nu(\{-1\})=1$ to {\rv $\nu(\{-1,1\})=0$} as the sign of $\theta$ changes. Thus, in the upper line of Fig.~\ref{fig: constant_alpha_log}, we could find there is a {\rv two-stage performance}. {\rv In the first stage, the fluctuations are relatively large due to the jumps induced by heaviness. Later, the fluctuations are relatively small because the stochastic graident is not heavy at this stage. }When the initial point $\theta_0=2$, the solution path will not cross the point $0$ and the sign of $\theta$ does not change. {\rv Thus, according to Corollary~\ref{cor: log_1d}, the trajectory behaves continuously without jumps because of degeneration observation. } The abrupt change is not observed in the lower line of Fig.~\ref{fig: constant_alpha_log}.

\paragraph*{Decaying learning rate simulation results}
In this part, we aim to verify the result of Theorem~\ref{thm: asymp_decay}. In Fig.~\ref{fig: decay_alpha_log}, we plot the histogram of the scaled error in~\eqref{eq: decay_error}. {\rv As $\theta^*=1$, by Corollary~\ref{cor: log_1d}, the stochastic gradient will be light-tailed when $\theta_n\to\theta^*$. Thus, the error should be scaled with $\eta_n^{-\frac{1}{2}}$ and the limit distribution is a Gaussian distribution. In this case, we can use Monte Carlo method to estimate the characteristics of limit distribution and we plot the corresponding density in Figure~\ref{fig: decay_alpha_log} in red. } {\rv 
It is worth noticing that the errors for different $\alpha$ are all scaled with rate $\eta_n^{-\frac{1}{2}}$ because of Corollary~\ref{cor: log_1d}. And the empirical distributions match the expected densities.}

\begin{figure}[t!]
    \centering
    \includegraphics[width=1.0\linewidth]{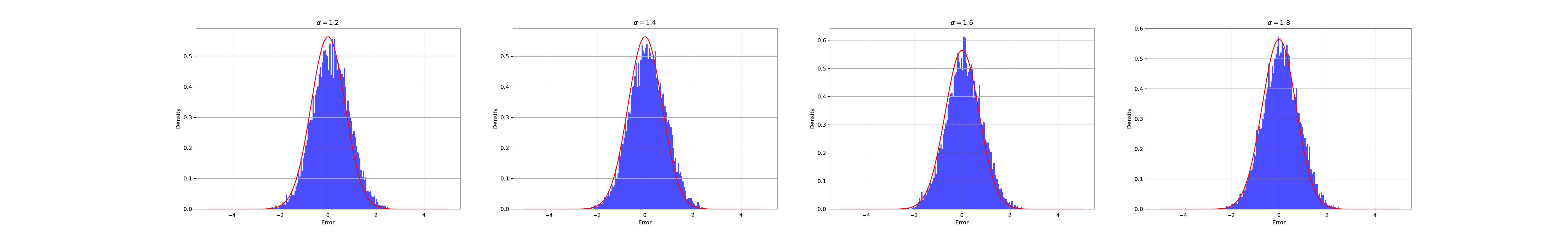}
    \caption{Logistic regression $(\theta^*=1)$ with decay learning rate $\eta_n=n^{-\rho}$ ($\rho=0.6$) and iterations $n=10^3$ and replications $k=10^4$.  The x-axis represents the scaled error, as defined in \eqref{eq: decay_error}. The y-axis represents the empirical density of the scaled error. Title $\alpha$ indicates the covariates $x_i$ are generated from an $\alpha$-stable distribution. {\rv All the errors are scaled with rate $\eta_n^{-\frac{1}{2}}$.}}
    \label{fig: decay_alpha_log}
\end{figure}

\section{Proofs}
\label{sec: proof}

\subsection{\rv Proof Sketch for Theorem~\ref{thm: constant}}
{\rv 
\paragraph*{\textbf{Step 1: Intuition}}
We begin by recalling the deterministic gradient flow:
\begin{align*}
    d\bar{\theta}(t) = -\nabla\ell(\bar{\theta}(t))dt.
\end{align*}
In the stochastic setting, when the stochastic gradient noise is $\alpha$-regularly varying, the dynamics can be heuristically approximated by
\begin{align*}
    d\theta(t) = -\nabla\ell(\theta(t))dt+dL_t,
\end{align*}
where $L_t$ is a L\'evy process indexed by $\alpha$. Let $\Delta(t) = \theta(t)-\bar{\theta}(t)$ denote the error process. Subtracting the two dynamics yields
\begin{align}
\label{eq: error_dynamic}
    d\Delta(t ) &= -\left(\nabla\ell(\theta(t))-\nabla\ell(\bar{\theta}(t))\right)dt+dL_t\notag\\
    &\approx-\nabla^2\ell(\bar{\theta}(t))\Delta(t)dt+dL_t,
\end{align}
where the last line follows from a first-order Taylor expansion. 

\paragraph*{\textbf{Step 2: Approximation}}In the constant stepsize $\eta$ case, we discretize the time interval $[0,T]$ into $\eta, 2\eta, \cdots \lfloor \frac{T}{\eta}\rfloor \eta$ and run both SGD and GD in discrete time. As $\eta\to0$, the discrete trajectories $\theta_{\eta}(\cdot)$ (SGD) and $\bar{\theta}_{\eta}(\cdot)$ (GD) converge to their continuous counterparts $\theta(\cdot)$ and $\bar{\theta}(\cdot)$ above. The proof proceeds by decomposing the error  $\theta_{\eta}(\cdot)-\bar{\theta}_{\eta}(\cdot)$ and controlling two key approximation errors:
\begin{itemize}
    \item[(a)] Drift approximation: the error in approximating $ \nabla\ell(\theta_{\eta}(t))-\nabla\ell(\bar{\theta}_{\eta}(t))$ by $\nabla^2\ell(\bar{\theta}(t))\Delta(t)dt$ via Taylor expansion;
    \item[(b)] Noise approximation: the error in approximating the cumulative noise term by a L\'evy process, which is handled via the functional central limit theorem for $\alpha$-regularly varying random vectors.
\end{itemize}
}

\subsection{\rv Proof Sketch for Theorem~\ref{thm: asymp_decay}}
{\rv
\paragraph*{\textbf{Step 1: Intuition}}
The intuition parallels that of Theorem~\ref{thm: constant}, but here we compare the iterates directly to the true minimizer $\theta^*$. Let $\Delta^\dagger(t) = \theta(t)-\theta^*$. By Taylor expansion around $\theta^*$, when $t$ is large, the dynamics are heuristically
\begin{align*}
    d\Delta^\dagger(t) \approx -\nabla^2\ell(\theta^*)\Delta^\dagger(t) dt +dL_t.
\end{align*}
Thus, in the decaying step-size setting with step sizes $\eta_n$, we expect that, after appropriate scaling, $\theta_n-\theta^*$ and $n\to\infty$, converges to the stationary solution of the SDE:
\begin{align*}
        dZ_t= -\nabla^2\ell(\theta^*)Z_t dt +dL_t.
\end{align*}

\paragraph*{\textbf{Step 2: Time change}}
To compare the discrete-time SGD process to this continuous-time limit, we denote $s_n = \sum_{k=0}^n \eta_k$ and construct a continuous-time interpolated process:
\begin{align*}
    X_t =\eta_n^{\frac{1}{\alpha}}(\theta_n - \theta^*) - (t-s_{n-1})\nabla^2\ell(\theta^*)\eta_n^{\frac{1}{\alpha}}(\theta_n - \theta^*)
\end{align*}
for $t\in[s_{n-1},s_n)$ and a jump exists at each $s_n$. We then show that $X_t$ converges in distribution to the stationary solution of the above SDE as $t\to\infty$.

\paragraph*{\textbf{Step 3: Tightness discussion}}
In step 2, the main technical difficulty to prove the weak convergence lies in establishing the tightness of the sequence $\eta_n^{\frac{1}{\alpha}}(\theta_n - \theta^*)$. We first prove tightness in the special case of symmetric $\alpha$-regularly varying noise, where the characteristic function has only real parts, making the argument straightforward. For the general asymmetric case, we decompose the noise into positive and negative parts, symmetrize each part separately, and apply the symmetric-noise result to each component. Combining these results yields weak convergence in the general case.
}
\subsection{Auxiliary Lemmas}

\begin{lemma}
    \label{lem: dominate_sl}
    For any $u\in\mathbb{S}^{d}$ and $x>0$, we denote:
    \begin{align*}
        b_{0,u}(x,\theta^*)=x^{\alpha}\PB\left(u^\top\nabla\ell(\theta^*,\xii)>x\right).
    \end{align*}
    Then the slowly varying function $b_{0,u}(x,\theta^*)$ satisfies:
    \begin{align*}
        \lim_{x\to+\infty}\frac{b_{0,u}(x,\theta^*)}{b_{0}(x,\theta^*)}=C,
    \end{align*}
    where $C\in[0,+\infty)$.
\end{lemma}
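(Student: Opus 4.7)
The plan is to translate the multi-dimensional vague convergence in Assumption~\ref{asmp: heavy_tail} into a one-dimensional tail statement for the linear functional $u^\top\nabla\ell(\theta^*,\xii)$ and then convert it into the desired ratio asymptotics via the defining identity of $b_1$. First, I would apply the vague convergence to the half-space $A_u := \{z\in\RB^d : u^\top z > 1\}$, which is bounded away from the origin since $u^\top z > 1$ forces $\|z\|\ge 1$ by Cauchy--Schwarz. This gives
\begin{align*}
x\,\PB\!\left(u^\top\nabla\ell(\theta^*,\xii) > \frac{x^{1/\alpha}}{b_1(x)}\right) \longrightarrow \nu(\theta^*, A_u)\quad\text{as }x\to\infty,
\end{align*}
provided $A_u$ is a continuity set of $\nu(\theta^*,\cdot)$.

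I would verify this continuity condition and compute the limit simultaneously using the polar decomposition $\nu(\theta^*,dz) = \alpha\widetilde{C}(\theta^*)\mu(\theta^*,d\omega)\,r^{-\alpha-1}dr$. For each $\omega$ on the unit sphere with $u^\top\omega\neq 0$, the equation $ru^\top\omega=1$ determines a single $r$, which is $dr$-negligible, so $\nu(\theta^*,\partial A_u)=0$. Integrating in polar form,
\begin{align*}
C_u := \nu(\theta^*, A_u) = \widetilde{C}(\theta^*)\int_{\{u^\top\omega>0\}}(u^\top\omega)^{\alpha}\,\mu(\theta^*,d\omega) \in [0,\widetilde{C}(\theta^*)],
\end{align*}
since $(u^\top\omega)^\alpha\le 1$ on $\mathbb{S}^{d-1}$ and $\mu(\theta^*,\cdot)$ is a probability measure.

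To conclude, I would invoke the identity $b_1(x)^\alpha b_0(x^{1/\alpha}/b_1(x),\theta^*)=1$ recorded right after Assumption~\ref{asmp: heavy_tail}. Setting $y:= x^{1/\alpha}/b_1(x)$, this rewrites as $x = y^\alpha/b_0(y,\theta^*)$; since $b_1$ is slowly varying, $y\to\infty$ as $x\to\infty$ and the map $x\mapsto y$ is eventually strictly increasing, so every sufficiently large $y$ is attained. Substituting into the convergence above yields
\begin{align*}
\frac{b_{0,u}(y,\theta^*)}{b_0(y,\theta^*)} = \frac{y^\alpha\,\PB(u^\top\nabla\ell(\theta^*,\xii)>y)}{b_0(y,\theta^*)}\;\longrightarrow\; C_u,
\end{align*}
which is the claim with $C=C_u\in[0,\infty)$. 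The only non-automatic step is checking that $A_u$ is a continuity set of $\nu(\theta^*,\cdot)$ and executing the change of variables cleanly; both rely on elementary facts (single-point Lebesgue null sets and slow variation of $b_1$), so no deeper obstacle is expected.
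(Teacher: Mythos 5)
Your argument is a genuine, self-contained proof of a statement that the paper handles in one line by citing Theorem~1.1 of Basrak, Davis and Mikosch (\cite{basrak2002characterization}); that theorem is precisely the characterization of multivariate regular variation via the ratios $\PB(u^\top X>t)/\PB(\|X\|>t)$, so you are effectively reproving the relevant direction. Your route is sound: the half-space $A_u=\{u^\top z>1\}$ is bounded away from the origin by Cauchy--Schwarz (with $\|u\|=1$), its boundary is $\nu$-null by the polar decomposition, and the polar integral gives $C_u=\widetilde C(\theta^*)\int_{\{u^\top\omega>0\}}(u^\top\omega)^\alpha\,\mu(\theta^*,d\omega)\in[0,\widetilde C(\theta^*)]$, which matches $C\in[0,\infty)$. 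Note also that by the paper's normalization of $b_1$, the denominator is handled automatically: $x\,\PB\bigl(\|\nabla\ell(\theta^*,\xii)\|>x^{1/\alpha}/b_1(x)\bigr)\equiv 1$, so the ratio you compute converges directly to $C_u$.

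The only place you should tighten is the very last step. The assertion that $x\mapsto y(x):=x^{1/\alpha}/b_1(x)$ is ``eventually strictly increasing'' is not automatic: a regularly varying function of positive index is only asymptotic to a monotone function, not necessarily monotone itself, so ``every sufficiently large $y$ is attained'' does not immediately follow. To pass from convergence along the net $y=y(x)$ to convergence as $y\to\infty$, you can instead exploit that vague convergence gives, for every $t>0$,
\begin{align*}
x\,\PB\!\left(u^\top\nabla\ell(\theta^*,\xii)>t\,x^{1/\alpha}/b_1(x)\right)\longrightarrow t^{-\alpha}\,\nu(\theta^*,A_u),
\end{align*}
and since the left side is monotone in $t$ and the right side is continuous, the convergence is locally uniform in $t\in(0,\infty)$. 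Then, given $y\to\infty$, set $x_y:=1/\PB(\|\nabla\ell(\theta^*,\xii)\|>y)$ and write $y=t_y\,x_y^{1/\alpha}/b_1(x_y)$; a short Potter-bound argument using $b_1(x)^\alpha b_0(x^{1/\alpha}/b_1(x),\theta^*)=1$ shows $t_y\to1$, which closes the gap. This is a standard repair and does not affect the structure of your proof, but it is worth making explicit since this lemma underpins the one-dimensional reduction used in the tightness argument.
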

\begin{proof}
    This result is a direct corollary of Theorem 1.1 in \cite{basrak2002characterization}, which shows the slowly varying function of the tail probability of the norm $\|\nabla\ell(\theta^*,\xii)\|$ can dominate any linear combination of the vector $\nabla\ell(\theta^*,\xii)$, which plays a significant role in proving the tightness in {\rv Section~\ref{sec: tight}}.     
\end{proof}

\begin{lemma}
\label{lem: lip_b}
    Under Assumption~\ref{asmp: heavy_tail}, there exists $M, C>0$ such that for any $x>M$ and $\theta_1,\theta_2\in\RB^d$, we have:
    \begin{align*}
        |b_0(x,\theta_1)-b_0(x,\theta_2)|\le C b_0(x,\theta^*)\|\theta_1-\theta_2\|.
    \end{align*}
    \begin{proof}
        By Assumption~\ref{asmp: heavy_tail}, there exists $M>0$, for any $x>M$, such that:
        \begin{align*}
            \left|\frac{b_0(x,\theta_i)}{b_0(x,\theta^*)}-C(\theta_i)\right|\le\|\theta_1-\theta_2\|, \hspace{4pt}i\in\{1,2\}.
        \end{align*}
        Then we have:
        \begin{align*}
            \frac{|b_0(x,\theta_1)-b_0(x,\theta_2)|}{b_0(x,\theta^*)}&\le \left|\frac{b_0(x,\theta_1)}{b_0(x,\theta^*)}-C(\theta_1)\right|+\left|\frac{b_0(x,\theta_2)}{b_0(x,\theta^*)}-C(\theta_2)\right|+|C(\theta_1)-C(\theta_2)|\\
            &\le C \|\theta_1-\theta_2\|.\notag
        \end{align*}
    \end{proof}
\end{lemma}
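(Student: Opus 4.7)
The plan is a three-term triangle inequality that reduces the Lipschitz estimate on $b_0(x,\cdot)$ to (i) the Lipschitz property of the limiting function $C(\cdot)$ supplied in Assumption~\ref{asmp: heavy_tail} and (ii) the pointwise convergence $b_0(x,\theta)/b_0(x,\theta^*) \to C(\theta)$ from the same assumption. Concretely, I would divide by the positive quantity $b_0(x,\theta^*)$, insert $C(\theta_1)$ and $C(\theta_2)$ between the two ratios of interest, and estimate
\begin{align*}
\frac{|b_0(x,\theta_1)-b_0(x,\theta_2)|}{b_0(x,\theta^*)} \le \left|\frac{b_0(x,\theta_1)}{b_0(x,\theta^*)} - C(\theta_1)\right| + \left|\frac{b_0(x,\theta_2)}{b_0(x,\theta^*)} - C(\theta_2)\right| + |C(\theta_1)-C(\theta_2)|.
\end{align*}
The Lipschitz bound on $C$, which Assumption~\ref{asmp: heavy_tail} explicitly guarantees (as $C:\RB^d\to\RB_+$ is bounded Lipschitz), immediately absorbs the last summand into $L_C\|\theta_1-\theta_2\|$ for some Lipschitz constant $L_C$.

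For the first two summands, the plan is to pick $M$ large enough that each is bounded by $\|\theta_1-\theta_2\|$ for every $x>M$; summing the three pieces then yields the desired estimate with $C = L_C + 2$. Given a pair $\theta_1,\theta_2$, the limits $b_0(x,\theta_i)/b_0(x,\theta^*) \to C(\theta_i)$ for $i=1,2$ guarantee the existence of such an $M$, so the decomposition closes as soon as that threshold is selected. Under this scheme the constants $M$ and $C$ depend only on the regularity of $b_0$ and on $L_C$, which are provided by Assumption~\ref{asmp: heavy_tail}.

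The main obstacle I anticipate is the uniformity of $M$ across $\theta_1,\theta_2$, because the limit $b_0(x,\theta)/b_0(x,\theta^*) \to C(\theta)$ is a priori only pointwise in $\theta$, while Lemma~\ref{lem: lip_b} states that $M$ is a single threshold independent of the pair. To obtain such an $M$ it suffices to upgrade the pointwise limit to a locally uniform convergence in $\theta$, which is accessible here because Assumption~\ref{asmp: heavy_tail} already posits uniform vague convergence of the rescaled tail measures $x\PB(b_1(x)\nabla\ell(\theta,\xii)/x^{1/\alpha}\in\cdot)$ in $\theta$ on compacts. Combined with a Potter-type bound for the slowly varying quotient, this transfers to a uniform rate for $b_0(x,\theta)/b_0(x,\theta^*)\to C(\theta)$, thus furnishing a single $M$ valid over any bounded region in parameter space and completing the proof.
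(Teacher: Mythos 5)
Your three-term decomposition is exactly the paper's: divide by $b_0(x,\theta^*)$, insert $C(\theta_1)$ and $C(\theta_2)$, use the Lipschitz property of $C$ for the last term, and claim the first two are small for $x$ large. You have, however, correctly flagged a real weakness in this route that the paper glosses over, namely that the threshold $M$ in the lemma must be independent of the pair $(\theta_1,\theta_2)$, whereas the convergence $b_0(x,\theta)/b_0(x,\theta^*)\to C(\theta)$ only gives an $M$ depending on the size of $\|\theta_1-\theta_2\|$.

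Unfortunately, your proposed repair does not close this gap. Upgrading the pointwise limit to locally uniform convergence in $\theta$ yields a statement of the form: for each $\varepsilon>0$ there is $M_\varepsilon$ with
\begin{align*}
\sup_{\theta\in K}\left|\frac{b_0(x,\theta)}{b_0(x,\theta^*)}-C(\theta)\right|<\varepsilon
\quad\text{for all }x>M_\varepsilon.
\end{align*}
To make each of the first two summands $\le\|\theta_1-\theta_2\|$ you would have to take $\varepsilon=\|\theta_1-\theta_2\|$, so $M=M_{\|\theta_1-\theta_2\|}$, which blows up as $\theta_2\to\theta_1$. No amount of uniformity in $\theta$ converts a bound that tends to zero in $x$ into a bound that scales linearly in $\|\theta_1-\theta_2\|$ at a fixed threshold. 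In the limit $\theta_2\to\theta_1$ the requested inequality forces $b_0(x,\theta_1)=C(\theta_1)b_0(x,\theta^*)$ exactly for $x>M$, which uniform convergence cannot deliver. What the lemma actually needs is a Lipschitz estimate on the map $\theta\mapsto b_0(x,\theta)/b_0(x,\theta^*)$, uniform over $x>M$ --- i.e.\ control on a $\theta$-derivative of $b_0(x,\theta)$, not just on its limit as $x\to\infty$. That is a genuinely different piece of information than (locally) uniform convergence of the ratio, and it has to come either from additional regularity in Assumption~\ref{asmp: heavy_tail} or from a direct argument rather than the insertion of $C(\theta_i)$.
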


\begin{lemma}[Lemma 7 in \cite{wang2021convergence}]
    \label{lem: moment}
    Suppose $p\in[1,2]$ and $\{S_t\}$ is a $d$-dimensional martingale with finite $p$ moment for every $t$, and $S_0=0$. Then:
    \begin{align*}
        \EB\|S_t\|^{p}\le2^{2-p}d^{1-\frac{p}{2}}\sum_{i=1}^{t}\EB\|S_i-S_{i-1}\|^{p}.
    \end{align*}
\end{lemma}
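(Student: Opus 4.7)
The plan is to reduce this multivariate martingale inequality to the classical one-dimensional Chatterji / von Bahr--Esseen bound, coordinate by coordinate, and then reassemble via elementary norm comparisons. Writing $X_i := S_i - S_{i-1}$ and denoting by $S_t^{(j)}$ the $j$th coordinate of $S_t$, each $\{S_t^{(j)}\}_{t\ge 0}$ is a scalar martingale with differences $\{X_i^{(j)}\}$, and $X_i$ has finite $p$th moment by assumption on $S_t$.

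First I would establish the scalar bound $\EB|S_t^{(j)}|^p \le 2^{2-p}\sum_{i=1}^t \EB|X_i^{(j)}|^p$ for each $j\in\{1,\dots,d\}$. This follows from the pointwise convexity estimate
\[
|a+b|^p \le |a|^p + p\,\sgn(a)\,|a|^{p-1} b + 2^{2-p}|b|^p
\qquad(a,b\in\RB,\ p\in[1,2])
\]
applied with $a = S_{i-1}^{(j)}$ and $b = X_i^{(j)}$. Taking conditional expectations with respect to $\mathcal{F}_{i-1}$ kills the linear-in-$b$ middle term because $X_i^{(j)}$ is a martingale difference, and a straightforward induction on $i$ closes the scalar estimate.

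Next I would recombine coordinates. Since $p/2\in(0,1]$, the map $u\mapsto u^{p/2}$ is subadditive on $\RB_+$, giving
\[
\|S_t\|^p = \Bigl(\sum_{j=1}^d |S_t^{(j)}|^2\Bigr)^{p/2} \le \sum_{j=1}^d |S_t^{(j)}|^p.
\]
Summing the scalar bound over $j$ produces $\EB\|S_t\|^p \le 2^{2-p}\sum_{i=1}^t \EB\|X_i\|_p^p$. Finally, the standard finite-dimensional Hölder comparison $\|x\|_p \le d^{1/p - 1/2}\|x\|_2$ for $p\in[1,2]$, raised to the $p$th power, gives $\|X_i\|_p^p \le d^{1-p/2}\|X_i\|^p$, and substituting yields the claim with the announced constant $2^{2-p} d^{1-p/2}$.

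The main obstacle is the sharp scalar convexity inequality with constant $2^{2-p}$. I would verify it by reducing to the one-variable inequality $(1+u)^p \le 1 + pu + 2^{2-p}|u|^p$ on $\RB$, which is proved by elementary calculus after splitting into the cases $u\ge 0$ and $u < 0$ and locating the extremum of the defect. The remaining pieces --- the coordinate-wise martingale reduction, the subadditivity step for $p/2\le 1$, and the $\ell^p$-versus-$\ell^2$ norm comparison --- are all deterministic and algebraic, so essentially all of the analytic content is concentrated in the scalar convexity lemma.
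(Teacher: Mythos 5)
Your proof is correct. Note that the paper itself gives no argument for this lemma --- it is cited verbatim as Lemma 7 of \cite{wang2021convergence} --- so there is no in-paper proof to compare against; the coordinatewise reduction to a scalar Chatterji/von Bahr--Esseen estimate, followed by the subadditivity of $u\mapsto u^{p/2}$ and the $\ell^p$-to-$\ell^2$ norm comparison, is the natural route and yields the stated constant exactly. The one nontrivial ingredient, the scalar convexity inequality with constant $2^{2-p}$, does hold for $p\in[1,2]$: setting $\phi(u)=|1+u|^p-1-pu-2^{2-p}|u|^p$, a short computation shows $\phi''(u)\le 0$ precisely when $|1+u|\ge |u|/2$, i.e.\ on $\{u\ge -2/3\}\cup\{u\le -2\}$; on $[-2/3,\infty)$ concavity together with $\phi(0)=\phi'(0)=0$ gives $\phi\le 0$, on $(-\infty,-2]$ concavity together with $\phi'(-2)=0$ and $\phi(-2)=2p-4\le 0$ gives $\phi\le 0$, and on $[-2,-2/3]$ convexity with nonpositive endpoint values closes the case analysis. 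One small point worth stating explicitly in your write-up: dropping the linear cross term after conditioning on $\mathcal{F}_{i-1}$ requires $\EB\bigl[|S_{i-1}^{(j)}|^{p-1}\,|X_i^{(j)}|\bigr]<\infty$, which follows from H\"older's inequality and the assumed finite $p$th moments.
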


\begin{lemma}
\label{lem: eta_expansion}
    For the sequence $\{\eta_n^{\frac{1}{\alpha}-1}b_1(\eta_n^{-1})\}$, where $\eta_n=c\cdot n^{-\rho}$ and $\rho<1$, it satisfies:
    \begin{align*}
        \frac{\eta_{n+1}^{\frac{1}{\alpha}-1}b_1(\eta_{n+1}^{-1})}{\eta_n^{\frac{1}{\alpha}-1}b_1(\eta_n^{-1})}=1+\OM(n^{-1}).
    \end{align*}
    Moreover, if $\rho=1$, it satisifes:
    \begin{align*}
        \frac{\eta_{n+1}^{\frac{1}{\alpha}-1}b_1(\eta_{n+1}^{-1})}{\eta_n^{\frac{1}{\alpha}-1}b_1(\eta_n^{-1})}=1+\frac{1-\frac{1}{\alpha}}{c}\eta_na_n,
    \end{align*}
    where $\lim_{n\to\infty}a_n=1$.
\end{lemma}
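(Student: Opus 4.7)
The plan is to factor the ratio multiplicatively as
\begin{align*}
    \frac{\eta_{n+1}^{1/\alpha - 1} b_1(\eta_{n+1}^{-1})}{\eta_n^{1/\alpha - 1} b_1(\eta_n^{-1})} = \left(\frac{\eta_{n+1}}{\eta_n}\right)^{1/\alpha - 1} \cdot \frac{b_1(\eta_{n+1}^{-1})}{b_1(\eta_n^{-1})},
\end{align*}
and to analyse the two factors separately. The power factor is elementary: since $\eta_n = c n^{-\rho}$, we have $\eta_{n+1}/\eta_n = (1 + 1/n)^{-\rho}$, and a Taylor expansion yields
\begin{align*}
    \left(\frac{\eta_{n+1}}{\eta_n}\right)^{1/\alpha - 1} = 1 + \frac{(1-1/\alpha)\rho}{n} + O(1/n^2).
\end{align*}
For $\rho < 1$ this is $1 + O(1/n)$. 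For $\rho = 1$ the leading correction equals $(1-1/\alpha)/n = (1-1/\alpha)\eta_n/c$, precisely the term appearing in the claim.

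The slowly varying factor is the main obstacle. The strategy is to first upgrade the slowly varying property of $b_1$ to the smoother de Haan-type condition $\lim_{x\to\infty} x b_1'(x)/b_1(x) = 0$. This follows from implicit differentiation of the defining identity $b_1(x)^{\alpha} b_0(x^{1/\alpha} b_1^{-1}(x),\theta^*) = 1$. Setting $g(x) := x^{1/\alpha} b_1^{-1}(x)$ and $\epsilon_0(u) := u b_0'(u,\theta^*)/b_0(u,\theta^*)$, the chain rule gives
\begin{align*}
    \alpha \cdot x\frac{b_1'(x)}{b_1(x)} + \epsilon_0(g(x))\left(\frac{1}{\alpha} - x\frac{b_1'(x)}{b_1(x)}\right) = 0,
\end{align*}
which solves for $x b_1'(x)/b_1(x) = \epsilon_0(g(x))/(\alpha(\epsilon_0(g(x)) - \alpha))$. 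Since $g(x)\to\infty$ as $x\to\infty$, Assumption~\ref{asmp: heavy_tail} forces $\epsilon_0(g(x))\to 0$, so $x b_1'(x)/b_1(x) \to 0$. With this in hand,
\begin{align*}
    \log\frac{b_1(\eta_{n+1}^{-1})}{b_1(\eta_n^{-1})} = \int_{\eta_n^{-1}}^{\eta_{n+1}^{-1}} \frac{u b_1'(u)/b_1(u)}{u}\,du,
\end{align*}
and using $\int_{\eta_n^{-1}}^{\eta_{n+1}^{-1}} du/u = \log(\eta_{n+1}^{-1}/\eta_n^{-1}) = O(1/n)$ together with $u b_1'(u)/b_1(u)\to 0$ yields $b_1(\eta_{n+1}^{-1})/b_1(\eta_n^{-1}) = 1 + o(1/n)$.

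Combining the two estimates closes the argument. For $\rho < 1$, the product is $(1 + O(1/n))(1 + o(1/n)) = 1 + O(1/n)$. For $\rho = 1$, the $b_1$ correction is $o(1/n) = o(\eta_n)$, which is absorbed into the leading $(1-1/\alpha)\eta_n/c$ term as a multiplicative factor $a_n = 1 + o(1) \to 1$, delivering the precise form in the statement. The implicit differentiation step promoting the regularity from $b_0$ to $b_1$ is the only delicate part; everything else reduces to routine Taylor expansion.
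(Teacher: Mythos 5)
Your proof is correct, and it takes a somewhat different route from the paper's. The paper works directly with the combined function $h(x) = x^{1-1/\alpha}b_1(x)$ and applies the mean value theorem, writing $\frac{h(\eta_{n+1}^{-1})}{h(\eta_n^{-1})} = 1 + \frac{\eta_n^{-1}h'(\xi_n^{-1})}{h(\eta_n^{-1})}\bigl(\frac{\eta_n}{\eta_{n+1}} - 1\bigr)$ for some $\xi_n \in (\eta_{n+1},\eta_n)$, then identifies the limit of the quotient factor as $1-\frac{1}{\alpha}$ through $f(x) = \log h(e^x)$. You instead factor the ratio multiplicatively into the power part $(\eta_{n+1}/\eta_n)^{1/\alpha-1}$ and the slowly-varying part $b_1(\eta_{n+1}^{-1})/b_1(\eta_n^{-1})$, handling the former by elementary Taylor expansion and the latter by an integral estimate. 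Both approaches rest on the same crux: that $b_1$ satisfies the de Haan condition $xb_1'(x)/b_1(x)\to 0$. The paper asserts this with the one-liner ``as $b_1(x)$ is slowly varying'' — which is not true for arbitrary slowly varying functions and therefore a small gap — whereas you supply a genuine justification via implicit differentiation of the defining relation $b_1(x)^\alpha b_0(x^{1/\alpha}b_1^{-1}(x),\theta^*)=1$, transferring the assumed de Haan property of $b_0$ to $b_1$. Your version is thus a bit more self-contained than the published one; the multiplicative factoring is also arguably cleaner, though it does require you to note that the $o(1/n)$ correction from the $b_1$ factor gets absorbed as a multiplicative $a_n\to 1$ in the $\rho=1$ case, which you do correctly.
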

\begin{proof}
    We denote $h(x)=x^{1-\frac{1}{\alpha}}b_1(x)$, which is $\left(1-\frac{1}{\alpha}\right)$-regular varying. Then we have:
    \begin{align}
        \label{eq: eta_0}
        \frac{h(\eta_{n+1}^{-1})}{h(\eta_n^{-1})}=1+\frac{\eta_n^{-1}h'(\xi_{n}^{-1})}{h(\eta_n^{-1})}\left(\frac{\eta_n}{\eta_{n+1}}-1\right),
    \end{align}
    where $\xi_n\in(\eta_{n+1},\eta_n)$. Denote $f(x)=\log h(e^x)$, we have:
    \begin{align}
    \label{eq: eta_1}
        f'(x) = \frac{e^x h'(e^x)}{h(e^x)}=\left(1-\frac{1}{\alpha}\right)+\frac{e^x b_1'(e^x)}{b_1(e^x)}.
    \end{align}
    As $b_1(x)$ is slowly varying, we have $\lim_{x\to\infty}f'(x)=1-\frac{1}{\alpha}$. Moreover, we have:
    \begin{align}
    \label{eq: eta_2}
        \frac{\xi_n^{-1}}{\eta_n^{-1}}\in\left(1,\frac{\eta_{n+1}^{-1}}{\eta_n^{-1}}\right)=\left(1,(1+n^{-1})^\rho\right),
    \end{align}
    which leads to:
    \begin{align}
        \label{eq: eta_3}
        \frac{h(\xi_{n}^{-1})}{h(\eta_n^{-1})}=1+o(1).
    \end{align}
    Thus, when $\rho<1$, plugging \eqref{eq: eta_1}, \eqref{eq: eta_2} and \eqref{eq: eta_3} into \eqref{eq: eta_0}, we have:
    \begin{align*}
        \frac{h(\eta_{n+1}^{-1})}{h(\eta_n^{-1})}=1+\OM(n^{-1}).
    \end{align*}
    When $\rho=1$, we have:
    \begin{align*}
        \frac{h(\eta_{n+1}^{-1})}{h(\eta_n^{-1})}=1+\frac{1-\frac{1}{\alpha}}{c}\eta_n a_n,
    \end{align*}
    where $\lim_{n\to\infty}a_n=1$.
\end{proof}

\begin{lemma}[Lemma 4.2 in \cite{fabian1967stochastic}]
    \label{lem: recursion}
    Supposing $\alpha\in(0,1]$ and $\beta>0$ and $C$ is a  constant and $\lim_{n\to\infty}C_n=c>0$. For a positive sequence satisfying:
    \begin{align*}
        A_{n+1}=(1-C_n n^{-\alpha})A_n+C n^{-\alpha-\beta},
    \end{align*}
    we have $\sup_{n}n^{\beta}A_n<\infty$ when $\alpha<1$. If moreover, we have $c-\beta>0$, then we also have $\sup_{n}n^{\beta}A_n<\infty$ when $\alpha=1$.
\end{lemma}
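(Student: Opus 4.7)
The natural approach is to rescale and work with $B_n := n^{\beta} A_n$, showing its recursion admits an invariant upper bound $[0, M]$ for some finite $M$; the conclusion $\sup_n n^{\beta} A_n < \infty$ then follows by induction. First, I would multiply the given recursion by $(n+1)^{\beta}/n^{\beta}$ to obtain
\begin{align*}
B_{n+1} = \left(1 + \tfrac{1}{n}\right)^{\beta}\bigl(1 - C_n n^{-\alpha}\bigr) B_n + C \left(1 + \tfrac{1}{n}\right)^{\beta} n^{-\alpha},
\end{align*}
and then expand $(1+1/n)^{\beta} = 1 + \beta n^{-1} + O(n^{-2})$ and collect terms to get
\begin{align*}
B_{n+1} = \bigl(1 - \widetilde{C}_n n^{-\alpha}\bigr) B_n + \bigl(C + o(1)\bigr) n^{-\alpha} + O(n^{-2}) B_n,
\end{align*}
where the effective contraction coefficient $\widetilde{C}_n$ absorbs both $C_n$ and the correction induced by $\beta n^{-1}$.

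The precise form of $\widetilde{C}_n$ depends on $\alpha$: when $\alpha < 1$ the correction $\beta n^{-1}$ is of strictly smaller order than $C_n n^{-\alpha}$, so $\widetilde{C}_n = C_n - \beta n^{\alpha-1} + o(1) \to c$; when $\alpha = 1$ the two terms compete on equal footing and $\widetilde{C}_n \to c - \beta$. Writing $\gamma$ for the common limit (so $\gamma = c$ or $\gamma = c - \beta$), the hypotheses guarantee $\gamma > 0$ in both cases. I would then fix $M := 3C/\gamma$, choose $n_0$ so large that $\widetilde{C}_n \ge \gamma/2$ and the $O(n^{-2}) B_n$ remainder is dominated by $(\gamma/4) n^{-\alpha} B_n$ whenever $B_n \le M$ and $n \ge n_0$, and verify directly that $B_n \le M$ implies
\begin{align*}
B_{n+1} \le M - \bigl(\widetilde{C}_n M - C'\bigr) n^{-\alpha} \le M,
\end{align*}
where $C'$ absorbs $C + o(1)$ together with the remainder bound. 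Enlarging $M$ if necessary to dominate the finitely many values $B_1, \ldots, B_{n_0}$ closes the induction.

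The delicate point is precisely the interplay between the polynomial rescaling $n^{\beta}$ and the contraction factor: the prefactor $(1 + 1/n)^{\beta}$ injects an extra $\beta n^{-1}$ into the drift which is negligible when $\alpha < 1$ but subtracts directly from $C_n n^{-1}$ when $\alpha = 1$. This is why the hypothesis $c - \beta > 0$ is both natural and necessary in the boundary regime: without it, $\widetilde{C}_n$ could fail to remain positive and the invariance inequality in Step~2 would reverse. I do not anticipate any other serious obstacle, since the positivity of $A_n$ removes sign issues and every remainder term is of genuinely lower order once $n_0$ is chosen large enough.
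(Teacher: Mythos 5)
The paper cites this result from Fabian (1967) and does not include its own proof, so there is nothing internal to compare against; your task here was effectively to supply a proof from scratch, and you have done so correctly. The rescaling $B_n := n^\beta A_n$ followed by an invariant-interval induction is the natural and standard route for recursions of this type, and your key observation — that the prefactor $(1+1/n)^\beta \approx 1 + \beta n^{-1}$ contributes an extra drift of order $n^{-1}$, which is negligible relative to $C_n n^{-\alpha}$ when $\alpha<1$ but competes head-on and reduces the effective contraction rate to $c-\beta$ when $\alpha=1$ — is exactly the right explanation for why the extra hypothesis $c-\beta>0$ appears only in the boundary case. The induction step is sound: once $\widetilde C_n$ stabilizes near $\gamma>0$, any $M$ exceeding $\sup_{n\ge n_0} r_n/\widetilde C_n$ (with $r_n = C(1+1/n)^\beta \to C$) is invariant for $n\ge n_0$, and enlarging $M$ to cover the initial segment finishes. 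Two minor points you could tighten if writing this out in full: (i) you should note explicitly that $1-\widetilde C_n n^{-\alpha}\in[0,1]$ for $n\ge n_0$, which is what lets you replace $B_n$ by $M$ in the contraction term (this is automatic since $\widetilde C_n n^{-\alpha}\to 0$, but worth stating); and (ii) since $C$ is merely "a constant" and could be nonpositive, $M$ should be taken as something like $\max\{3|C|/\gamma,\,1\}$ — though if $C\le 0$ the bound is even easier since $B_n$ is eventually nonincreasing.
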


\begin{lemma}
    \label{lem: unif_integral}
    Set $x=n$ in Assumption~\ref{asmp: heavy_tail} and denote $\mu_n(\theta,\cdot)=n\PB\left(\frac{\nabla \ell(\theta,\xii)}{a_n}\in \cdot\right)$, where $a_n=n^{\frac{1}{\alpha}}b_1(n)$, we have $\mu_n(\theta,\cdot)$ is uniformly {\rv integrable} w.r.t. $n$:
    \begin{align*}
        \lim_{t\to\infty}\sup_n\EB_{\mu_n}\|X\|\1(\|X\|>t)=0.
    \end{align*}
\end{lemma}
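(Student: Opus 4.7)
The strategy is to reduce the statement about the measures $\mu_n$ to a truncated first-moment estimate for the random vector $\nabla\ell(\theta,\xii)$ itself, and then to invoke Karamata's theorem together with Potter's bounds to get the required uniformity in $n$. Writing $Y:=\|\nabla\ell(\theta,\xii)\|$, a direct change of variables gives
\begin{align*}
    \EB_{\mu_n}\|X\|\1(\|X\|>t)=\frac{n}{a_n}\,\EB\bigl[Y\,\1(Y>t a_n)\bigr],
\end{align*}
so the task reduces to controlling the truncated first moment of $Y$ at level $ta_n$, uniformly in $n$.

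First I would apply the layer-cake identity $\EB[Y\1(Y>s)]=s\PB(Y>s)+\int_s^{\infty}\PB(Y>u)\,du$ and invoke Karamata's integral theorem. Since by Assumption~\ref{asmp: heavy_tail} we have $\PB(Y>x)=x^{-\alpha}b_0(x,\theta)$ with $\alpha\in(1,2)$, Karamata gives $\int_s^{\infty}\PB(Y>u)\,du\sim s\PB(Y>s)/(\alpha-1)$, so that $\EB[Y\1(Y>s)]\sim\tfrac{\alpha}{\alpha-1}\,s\,\PB(Y>s)$ as $s\to\infty$. Substituting $s=ta_n$ together with the normalization $n\,a_n^{-\alpha}=1/b_0(a_n,\theta^*)$ coming from the choice of $a_n$, I obtain the asymptotic identity
\begin{align*}
    \EB_{\mu_n}\|X\|\1(\|X\|>t)\;\sim\;\frac{\alpha}{\alpha-1}\,t^{1-\alpha}\,\frac{b_0(ta_n,\theta)}{b_0(a_n,\theta^*)},
\end{align*}
which already exhibits the correct tail behavior $t^{1-\alpha}$ with exponent strictly negative since $\alpha>1$.

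Next, to promote this pointwise-in-$n$ asymptotic into a uniform-in-$n$ bound, I would invoke Potter's bounds for the slowly varying function $b_0(\cdot,\theta)$: for any $\delta>0$ there exist $M,C>0$ such that $b_0(ta_n,\theta)/b_0(a_n,\theta)\le C\max(t^{\delta},t^{-\delta})$ whenever $a_n\ge M$ and $ta_n\ge M$. Combining this with $\lim_{x\to\infty}b_0(x,\theta)/b_0(x,\theta^*)=C(\theta)$ (which is bounded by the boundedness of $C(\cdot)$ in Assumption~\ref{asmp: heavy_tail}) and Lemma~\ref{lem: lip_b}, the ratio $b_0(ta_n,\theta)/b_0(a_n,\theta^*)$ is bounded by a constant times $t^{\delta}$, uniformly over all sufficiently large $n$ and $t\ge 1$. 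Choosing $\delta<\alpha-1$ yields $\EB_{\mu_n}\|X\|\1(\|X\|>t)\le C' t^{1-\alpha+\delta}\to 0$ as $t\to\infty$, uniformly in $n\ge N$ for some threshold $N$. For the remaining finitely many indices $n<N$, since $\alpha>1$ each $Y$ has a finite first moment and thus $\EB_{\mu_n}\|X\|\1(\|X\|>t)\to 0$ as $t\to\infty$ by dominated convergence; the maximum over a finite set still tends to $0$.

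The main obstacle is the uniform control of the slowly varying ratio $b_0(ta_n,\theta)/b_0(a_n,\theta^*)$ across all large $n$ and $t$, which is precisely where Potter's bounds combined with the uniformity built into Assumption~\ref{asmp: heavy_tail} do the heavy lifting. A secondary technical point is that Karamata's equivalence only gives a multiplicative factor $1+o(1)$ as $s\to\infty$; this is absorbed into the constant $C'$ after passing to the uniform Potter bound, without affecting the conclusion.
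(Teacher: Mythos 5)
Your proposal is correct and follows essentially the same route as the paper: reduce to a truncated first moment of $Y=\|\nabla\ell(\theta,\xii)\|$, apply Karamata to get the $t^{1-\alpha}$ decay, and then argue uniformity in $n$ via control of the slowly varying ratio $b_0(ta_n,\theta)/b_0(a_n,\theta^*)$. The only difference is cosmetic: the paper integrates against the density $-d\PB(Y>x)$ and appeals to the normalized-slow-variation condition $xb_0'(x,\theta)/b_0(x,\theta)=o(1)$ from Assumption~\ref{asmp: heavy_tail} to get the uniform bound, while you use the layer-cake identity and Potter's bounds (which follow from slow variation alone). Your explicit Potter-bound step plus the finite-$n$ dominated-convergence cleanup actually makes the uniformity in $n$ clearer than the paper's rather terse final line, so no gaps.
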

\begin{proof}
    As $\theta$ is fixed and $b_0(x,\theta)\sim b_0(x,\theta^*)$ when $x\to\infty$, we ignore the dependence in the proof. In fact, we have:
    \begin{align*}
        \EB_{\mu_n}\|X\|\1(\|X\|>t)&=\frac{n}{a_n}\int_{ta_n}^{\infty}|x|\PB\left(\|\nabla\ell(\theta,\xii)\|\in dx\right)\\
        &=\frac{n}{a_n}\int_{ta_n}^{\infty}|x|^{-\alpha}(b_0'(x)x-\alpha b_0(x))dx\notag\\
        &\sim\frac{n(b_0'(ta_n)ta_n-\alpha b_0(ta_n))}{t^{\alpha-1}a_n^{\alpha}},\notag
    \end{align*}
    where the last step is due to Karamata’s theorem. We notice that $\frac{n}{a_n^{\alpha}}\sim b_0^{-1}(a_n)$ {\rv by the definition $\PB(\|\nabla\ell(\theta,\xii)\|>t)=\frac{b_0(t)}{t^\alpha}$}. Then, we have:
    \begin{align*}
        \EB_{\mu_n}\|X\|\1(\|X\|>t)\sim\frac{(b_0'(ta_n)ta_n-\alpha b_0(ta_n))}{t^{\alpha-1}b_0(a_n)}.
    \end{align*}
    By the fact that $\frac{b'_0(x)x}{b_0(x)}=o( 1)$ when $x$ is large, we have:
    \begin{align*}
        \sup_n \EB_{\mu_n}\|X\|\1(\|X\|>t)=\OM(t^{1-\alpha}).
    \end{align*}
    Then, our result is concluded.
\end{proof}
\begin{lemma}
    \label{lem: lip_expectation}
    Set $x=n$ in Assumption~\ref{asmp: heavy_tail} and denote $\mu_n(\theta,\cdot)=n\PB\left(\frac{\nabla \ell(\theta,\xii)}{a_n}\in \cdot\right)$, where $a_n=n^{\frac{1}{\alpha}}b_1(n)$. For any $\theta_1$ and $\theta_2$ and $t>0$ , we have:
    \begin{align*}
        \left\|\EB_{\mu_n(\theta_1)}X\1(\|X\|>t)-\EB_{\mu_n(\theta_2)}X\1(\|X\|>t)\right\|=\mathcal{O}(\|\theta_1-\theta_2\|).
    \end{align*}
\end{lemma}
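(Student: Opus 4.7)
The plan is to exploit the almost-sure Lipschitz property of the stochastic gradient (Assumption~\ref{asmp: g_lip}) via a synchronous coupling: representing both expectations through $G_\theta := \nabla\ell(\theta, \xii)$ driven by a common $\xii$, the task reduces to controlling
\begin{align*}
\frac{n}{a_n}\left\|\EB[G_{\theta_1}\1_{A_1} - G_{\theta_2}\1_{A_2}]\right\|,\qquad A_i := \{\|G_{\theta_i}\| > ta_n\}.
\end{align*}
We then split
$
G_{\theta_1}\1_{A_1} - G_{\theta_2}\1_{A_2} = (G_{\theta_1} - G_{\theta_2})\1_{A_1} + G_{\theta_2}(\1_{A_1} - \1_{A_2})
$
and estimate the two pieces separately.

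For the ``drift'' piece, the a.s.\ bound $\|G_{\theta_1} - G_{\theta_2}\| \le C_{\text{Lip}}\|\theta_1 - \theta_2\|$ from Assumption~\ref{asmp: g_lip} produces an upper estimate $C_{\text{Lip}}\|\theta_1 - \theta_2\|\PB(A_1)$. Combining the tail formula in Assumption~\ref{asmp: heavy_tail}, the slowly varying property of $b_0(\cdot, \theta)$, the asymptotic relation $b_0(x, \theta_1) \sim C(\theta_1) b_0(x, \theta^*)$, and the normalization $n b_0(a_n, \theta^*)/a_n^\alpha \to 1$ (already used in the proof of Lemma~\ref{lem: unif_integral}) shows that this contribution is of order $\|\theta_1 - \theta_2\|/(a_n t^\alpha)$, hence $\mathcal{O}(\|\theta_1 - \theta_2\|)$ for $n$ large.

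For the ``boundary'' piece, the a.s.\ Lipschitz bound gives $\bigl|\|G_{\theta_1}\| - \|G_{\theta_2}\|\bigr| \le \delta := C_{\text{Lip}}\|\theta_1 - \theta_2\|$, so the symmetric difference $A_1 \triangle A_2$ sits inside the annulus $\{\|G_{\theta_2}\| \in [ta_n - \delta, ta_n + \delta]\}$, on which $\|G_{\theta_2}\| \le ta_n + \delta$. It thus suffices to estimate the probability of that annulus, which we obtain from the mean-value theorem applied to the tail $F_{\theta_2}(x) := x^{-\alpha}b_0(x, \theta_2)$: the regularity conditions ``$b_0(\cdot, \theta)$ differentiable'' and ``$xb_0'(x, \theta)/b_0(x, \theta) \to 0$'' from Assumption~\ref{asmp: heavy_tail} guarantee $|F_{\theta_2}'(x)| \sim \alpha x^{-\alpha - 1}b_0(x, \theta_2)$ for $x$ large. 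The same normalization as before then delivers another bound of order $\|\theta_1 - \theta_2\|/(a_n t^\alpha)$.

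The main obstacle is that the mean-value estimate is valid only when $\delta \ll ta_n$, so it controls only $\|\theta_1 - \theta_2\|$ that is small relative to $a_n$; for $\|\theta_1 - \theta_2\|$ bounded below by a positive constant we instead invoke Lemma~\ref{lem: unif_integral} together with the triangle inequality, which bound the left-hand side of the lemma by $\EB_{\mu_n(\theta_1)}\|X\|\1(\|X\|>t) + \EB_{\mu_n(\theta_2)}\|X\|\1(\|X\|>t) = \mathcal{O}(t^{1-\alpha})$, and this is trivially $\mathcal{O}(\|\theta_1 - \theta_2\|)$ on that regime. Patching the two regimes produces the uniform Lipschitz estimate asserted by the lemma.
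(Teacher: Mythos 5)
Your argument is correct and takes a genuinely different route from the paper's. The paper estimates the left-hand side by $\frac{n}{a_n}\int_{ta_n}^\infty x\,\bigl|\PB(\|\nabla\ell(\theta_1,\xii)\|\in dx)-\PB(\|\nabla\ell(\theta_2,\xii)\|\in dx)\bigr|$, unwinds the integrand through $b_0$ and $b_0'$, and closes via the $\theta$-Lipschitz bound on the slowly varying function (Lemma~\ref{lem: lip_b}) together with Karamata's theorem. You instead exploit the synchronous coupling that Assumption~\ref{asmp: g_lip} makes available: $G_{\theta_1}=\nabla\ell(\theta_1,\xii)$ and $G_{\theta_2}=\nabla\ell(\theta_2,\xii)$ are driven by the same $\xii$, so the difference splits into a drift piece $(G_{\theta_1}-G_{\theta_2})\1_{A_1}$ controlled a.s.\ by $C_{\text{Lip}}\|\theta_1-\theta_2\|$ and a boundary piece $G_{\theta_2}(\1_{A_1}-\1_{A_2})$ whose support is trapped in an annulus of width $2C_{\text{Lip}}\|\theta_1-\theta_2\|$ about the truncation level, to which the mean-value theorem applies. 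The coupling route handles the vector nature of $\EB_{\mu_n(\theta)}X\1(\|X\|>t)$ more transparently than passing through the scalar variation of the norm tail (the paper's first inequality is not obvious as stated, since two heavy-tailed vectors can share a norm law while differing in angular distribution), and it yields a strictly sharper order $\mathcal{O}(\|\theta_1-\theta_2\|/(a_nt^\alpha))$ on the main regime. Your patch for $\|\theta_1-\theta_2\|$ bounded below, via the triangle inequality and Lemma~\ref{lem: unif_integral}, is the right way to dispose of the range where the annulus estimate fails to be uniform; since $a_n\to\infty$, the two regimes overlap for $n$ large, so the patch does cover the claimed estimate.
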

\begin{proof}
    In fact, we have:
    \begin{align*}
        &\left\|\EB_{\mu_n(\theta_1)}X\1(\|X\|>t)-\EB_{\mu_n(\theta_2)}X\1(\|X\|>t)\right\|\\
        \le&\frac{n}{a_n}\int_{ta_n}^{\infty}|x||\PB\left(\|\nabla\ell(\theta_1,\xii)\|\in dx\right)-\PB\left(\|\nabla\ell(\theta_2,\xii)\|\in dx\right)|\notag\\
        \le&\frac{n}{a_n}\int_{ta_n}^{\infty}|x|^{-\alpha}(|b_0'(x,\theta_1)-b_0'(x,\theta_2)|x+\alpha |b_0(x,\theta_1)-b_0(x,\theta_2)|)dx\notag\\
        \sim&\|\theta_1-\theta_2\|,\notag
    \end{align*}
    where the last step is due to Lemma~\ref{lem: lip_b} and Karamata's theorem.
\end{proof}

\begin{lemma}
\label{lem: ito}
    Suppose $L_{\cdot}$ is a semimartingale, the s.d.e. $dZ_t = - F(t)Z_t dt+dL_t$ admits a solution:
    \begin{align*}
        Z_t = \Phi(t)(Z_0+\int_0^t \Phi(s)^{-1}dL_s),
    \end{align*}
    where $\Phi(t)$ is the solution of the following o.d.e.:
    \begin{align*}
        \begin{cases}
            &d\Phi(t)=-\nabla F(t)\Phi(t)dt,\\
            &\Phi(0)=I_d.
        \end{cases}
    \end{align*}
\end{lemma}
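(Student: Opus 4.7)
The plan is to verify by direct computation that the proposed formula satisfies both the SDE and the initial condition, using the integration-by-parts rule for semimartingales. Uniqueness then follows from a standard argument on the homogeneous linear system.

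First, I would establish that the fundamental matrix $\Phi(t)$ is well-defined and invertible on $[0,\infty)$. Existence and uniqueness of $\Phi$ follow from standard ODE theory applied to the linear time-varying system $d\Phi(t) = -F(t)\Phi(t)\,dt$; the continuity of $F$ that would be needed here is the same hypothesis used in Theorem~\ref{thm: constant}. Liouville's formula gives $\det\Phi(t) = \exp\left(-\int_0^t \tr F(s)\,ds\right) \ne 0$, so $\Phi(t)^{-1}$ exists for every $t \ge 0$ and is continuous. Consequently the process
\begin{align*}
    Y_t := Z_0 + \int_0^t \Phi(s)^{-1}\,dL_s
\end{align*}
is a well-defined semimartingale, since $\Phi(\cdot)^{-1}$ is deterministic, continuous, and locally bounded, and $L$ is a semimartingale by assumption.

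Next, I would apply the integration-by-parts formula (the product rule for semimartingales, in its matrix-vector form) to $Z_t := \Phi(t) Y_t$. Because $\Phi$ has continuous paths of finite variation, the quadratic covariation $[\Phi, Y]_t$ vanishes, so the product rule reduces to
\begin{align*}
    dZ_t = (d\Phi(t))\, Y_t + \Phi(t)\, dY_t = -F(t)\Phi(t) Y_t\,dt + \Phi(t)\Phi(t)^{-1}\,dL_t = -F(t) Z_t\,dt + dL_t,
\end{align*}
which is precisely the SDE. The initial condition $Z_0 = \Phi(0) Y_0 = I_d \cdot Z_0 = Z_0$ is automatic from $\Phi(0)=I_d$.

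For uniqueness, I would take any two solutions $Z_t^{(1)}, Z_t^{(2)}$ with the same initial value and note that their difference $\Delta_t = Z_t^{(1)} - Z_t^{(2)}$ satisfies the homogeneous ODE $d\Delta_t = -F(t)\Delta_t\,dt$ pathwise with $\Delta_0 = 0$, forcing $\Delta_t \equiv 0$ by Gronwall's inequality. The only mildly subtle point, and the main thing worth writing out carefully, is the vanishing of $[\Phi,Y]$ in the matrix-vector product rule; everything else is a direct verification.
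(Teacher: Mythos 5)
Your proof is correct and takes the mirror-image route to the paper's. The paper (after citing Th\'eor\`eme 6.8 of Jacod) applies It\^o's formula to $h(t,u)=\Phi(t)^{-1}u$ evaluated along a supposed solution $Z_t$ of the SDE, noting that the jump-correction terms vanish because $h$ is affine in $u$, and uses $d\Phi(s)^{-1}=\Phi(s)^{-1}(d\Phi(s))\Phi(s)^{-1}$ to land on $\Phi(t)^{-1}Z_t-Z_0=\int_0^t\Phi(s)^{-1}dL_s$. That is a derivation: it assumes a solution exists and pins down its form. You instead start from the explicit candidate $\Phi(t)Y_t$ and verify, via integration by parts for semimartingales, that it satisfies the SDE; the key cancellation is the vanishing of $[\Phi,Y]$ because $\Phi$ is continuous and of finite variation, which is precisely the same structural fact that kills the jump sum in the paper's version of It\^o's formula. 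Both proofs thus hinge on the same observation; yours is the existence direction and is arguably the more natural reading of ``admits a solution,'' and you additionally supply a uniqueness argument via Gronwall that the paper omits. Your preliminary remarks on the invertibility of $\Phi$ (via Liouville) and the well-definedness of $\int\Phi^{-1}dL$ (a deterministic, locally bounded integrand against a semimartingale) are appropriate and implicitly assumed in the paper's treatment. One cosmetic point: the $\nabla$ in the ODE $d\Phi(t)=-\nabla F(t)\Phi(t)dt$ in the lemma statement appears to be a typo for $-F(t)\Phi(t)dt$ (consistent with how the lemma is invoked with $F=\nabla^2\ell$); your reading of it as $-F(t)\Phi(t)dt$ is the intended one.
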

\begin{proof}
    This is a direct result by Th\'eor\`eme 6.8 p.194~\cite{jacod2006calcul}. Or more directly, denote $h(t,u)=\Phi(t)^{-1}u$, by Ito's formula, we have:
    \begin{align*}
        h(t,Z_t)-h(0,Z_0)=&\int_{0}^t d\Phi(s)^{-1}\cdot Z_s+\int_0^t \Phi(s)^{-1}dZ_s\\
        &+\sum_{0<s\le t}\left(h(s,Z_s)-h(s,Z_{s^-})-\frac{\partial h(s,Z_{s^-})}{\partial u}(Z_s-Z_{s^-})\right),\notag
    \end{align*}
    where the last term is 0 as $h(t,u)$ is affine in $u$. By $d\Phi(s)^{-1}=\Phi(s)^{-1}(d\Phi(s)) \Phi(s)^{-1}$, we have:
    \begin{align*}
        h(t,Z_t)-h(0,Z_0)=\int_0^t\Phi(s)^{-1}dL_t.
    \end{align*}
\end{proof}

\begin{lemma}[Theorem 11.6.6 in \cite{whitt2002stochastic}]
\label{lem: cadlag_converge}
    A sequence of $\cadlag$ process $\{X_{t}^n\}_{n=1}^{\infty}$ converge in law to a $\cadlag$ process $X$ w.r.t. $J_1$ topology iff $\{X_{t}^n\}_{n=1}^{\infty}$ is tight and all finite dimensional distributions at times where $\{X_{t}^n\}_{n=1}^{\infty}$ are continuous
    converge to those of X.
\end{lemma}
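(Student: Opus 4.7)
The plan is to prove this standard characterization in two directions, using Prohorov's theorem, the continuous mapping theorem, and the measure-theoretic fact that the law on $D[0,\infty)$ equipped with the $J_1$ topology is uniquely determined by finite-dimensional distributions at a dense set of continuity points of the limit.

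For the forward direction, suppose $X^n \Rightarrow X$ in $J_1$. Tightness of $\{X^n\}$ follows from Prohorov's theorem applied to the convergent (hence relatively compact) family of laws on the Polish space $D[0,\infty)$. For finite-dimensional convergence at times $t_1,\ldots,t_k$ at which $X$ is a.s.\ continuous, I would invoke the continuous mapping theorem for the projection $\pi_{t_1,\ldots,t_k}:D[0,\infty)\to \mathbb{R}^k$. The key observation is that under the $J_1$ topology, the evaluation $\pi_t$ is continuous at each path $\omega$ that is itself continuous at $t$; hence the discontinuity set of $\pi_{t_1,\ldots,t_k}$ has $\mathrm{Law}(X)$-measure zero, yielding $(X^n_{t_1},\ldots,X^n_{t_k}) \Rightarrow (X_{t_1},\ldots,X_{t_k})$.

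For the reverse direction, tightness gives relative compactness of $\{\mathrm{Law}(X^n)\}$, so it suffices to show every subsequential weak limit $Y$ coincides in law with $X$. Applying the forward direction along any convergent subsequence $X^{n_k}\Rightarrow Y$, the finite-dimensional distributions of $Y$ at its own continuity times equal the limits of the corresponding marginals of $X^{n_k}$. Combining with the hypothesis that these marginals converge to those of $X$ at continuity times of $X$, and using that the set of discontinuities of any \cadlag\ process is at most countable (so $X$ and $Y$ share a dense set of common continuity times), one concludes that the laws of $X$ and $Y$ agree on a family of finite-dimensional distributions at a dense set of times. A subsequence-of-subsequences argument then promotes this to the full sequence $X^n\Rightarrow X$.

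The main obstacle is justifying the final uniqueness step: that a probability law on $D[0,\infty)$ under the $J_1$ topology is determined by its finite-dimensional distributions at a dense set of continuity points of the limit. This relies on the fact that the Borel $\sigma$-algebra generated by $J_1$ coincides with the $\sigma$-algebra generated by the evaluation maps $\pi_t$ for $t$ in a dense subset of $[0,\infty)$, combined with right-continuity of paths to extend determination of cylinder probabilities from a countable dense set to the whole positive axis. Once this is established, a $\pi$-$\lambda$ (monotone class) argument on cylinder sets identifies $\mathrm{Law}(Y)=\mathrm{Law}(X)$, completing the proof.
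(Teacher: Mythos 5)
The paper does not prove this lemma; it is stated and cited verbatim as Theorem 11.6.6 of Whitt's \emph{Stochastic-Process Limits}, so there is no proof in the paper to compare against. Your argument is nonetheless a correct self-contained proof of the cited fact along the standard route: in the forward direction, tightness follows because a weakly convergent sequence on a Polish space is tight, and finite-dimensional convergence at a.s.\ continuity times of $X$ follows from the continuous-mapping theorem since the projection $\pi_{t_1,\dots,t_k}$ is $J_1$-continuous at every path that is continuous at each $t_i$; in the reverse direction, Prohorov gives a subsequential limit $Y$, applying the forward direction to that subsequence and using the hypothesis identifies the finite-dimensional laws of $X$ and $Y$ on the co-countable (hence dense) set $T_X\cap T_Y$, and a subsequence-of-subsequences argument finishes. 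The step you flag as the main obstacle is indeed the crux, and your sketch of its resolution is the right one: the evaluation maps at a dense set of times generate the Borel $\sigma$-algebra of $(D,J_1)$ (by right-continuity, $\omega(t)=\lim_{s\downarrow t,\,s\in T}\omega(s)$ for any dense $T$), so the cylinder sets over a dense $T$ form a generating $\pi$-system and Dynkin's lemma yields $\mathrm{Law}(Y)=\mathrm{Law}(X)$; to make this airtight you should also record that $0$ can be included in $T$ if needed, and that the discontinuity set of any \cadlag\ law is at most countable so $T_X\cap T_Y$ is genuinely dense. One small remark on the statement itself: as written in the paper it says ``at times where $\{X^n_t\}$ are continuous,'' but the correct (and the one you implicitly use) set is $T_X=\{t:\PB(X_{t-}=X_t)=1\}$, the a.s.\ continuity times of the limit process $X$.
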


\begin{lemma}[Tightness, Theorem 13.2 in \cite{billingsley2013convergence}]
\label{lem: cadlag_tight}
    A sequence of $\cadlag$ process $\{X_{t}^n\}_{n=1}^{\infty}$ is tight w.r.t. $J_1$ iff:
    \begin{itemize}
        \item $\{X_{t}^n\}_{n=1}^{\infty}$ is uniformly bounded in probability:
        \begin{align*}
            \lim_{C\to \infty}\limsup_{n\to\infty}\PB\left(\sup_{0\le t\le T}\|X_t^n\|\ge C\right)=0.
        \end{align*}
        \item For any $\varepsilon>0$,
        \begin{align*}
            \lim_{\delta\downarrow0}\limsup_{n\to\infty}\PB\left(\omega_{\delta}'(X^n_T)\ge\varepsilon\right)=0,
        \end{align*}
        where:
        \begin{align*}
            \omega_{\delta}'(f) = \inf_{\Pi_n}\sup_{i=1}^n\sup_{s,t\in[t_{i-1},t_i)}\|f(t)-f(s)\|,
        \end{align*}
        and $\Pi_n$ is a finite partition on $[0,T]$.
    \end{itemize}
\end{lemma}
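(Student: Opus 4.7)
The plan is to reduce the claimed characterization to two classical ingredients: Prokhorov's theorem and the Arzela--Ascoli-type description of relatively compact subsets of the Skorokhod space $D([0,T],\RB^d)$ under the $J_1$ topology. Since $D([0,T],\RB^d)$ admits a complete separable metric generating the $J_1$ topology, Prokhorov's theorem asserts that tightness of the laws of $\{X^n\}$ is equivalent to the existence, for every $\varepsilon>0$, of a relatively compact set $K_{\varepsilon}\subset D([0,T],\RB^d)$ satisfying $\PB(X^n\in K_{\varepsilon})\ge 1-\varepsilon$ for all $n$.

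The key non-trivial input I would invoke is the Arzela--Ascoli characterization (Theorem 12.3 in \cite{billingsley2013convergence}): a set $A\subset D([0,T],\RB^d)$ is relatively compact in $J_1$ if and only if $\sup_{f\in A}\|f\|_\infty<\infty$ and $\lim_{\delta\downarrow 0}\sup_{f\in A}\omega_{\delta}'(f)=0$. I expect this to be the main obstacle, since its proof requires careful handling of the admissible time changes underlying the $J_1$ metric together with the observation that a partition nearly realizing the infimum in $\omega_{\delta}'(f)$ induces, after a suitable piecewise linear time change, an equicontinuous family of continuous functions to which the classical Arzela--Ascoli theorem applies.

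For sufficiency, I would fix $\varepsilon>0$ and build $K_{\varepsilon}$ explicitly. The first hypothesis yields a constant $C=C(\varepsilon)$ such that $\PB(\sup_t\|X^n_t\|>C)<\varepsilon/2$ uniformly in $n$. The second hypothesis yields, for each $k\ge 1$, a $\delta_k>0$ with $\PB(\omega'_{\delta_k}(X^n)>1/k)<\varepsilon\cdot 2^{-k-1}$ uniformly in $n$. Setting
\begin{align*}
K_\varepsilon=\bigl\{f\in D([0,T],\RB^d):\ \|f\|_\infty\le C\text{ and }\omega'_{\delta_k}(f)\le 1/k\text{ for every }k\ge 1\bigr\},
\end{align*}
the Arzela--Ascoli characterization ensures that $K_\varepsilon$ is relatively compact, and a union bound gives $\PB(X^n\in K_\varepsilon)\ge 1-\varepsilon$ for all $n$, proving tightness.

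For necessity, I would run the argument in reverse: tightness supplies, for each $\varepsilon>0$, a relatively compact $K_\varepsilon$ with $\PB(X^n\in K_\varepsilon)\ge 1-\varepsilon$, and by the same Arzela--Ascoli characterization the quantity $M_\varepsilon:=\sup_{f\in K_\varepsilon}\|f\|_\infty$ is finite while $\psi_\varepsilon(\delta):=\sup_{f\in K_\varepsilon}\omega'_\delta(f)$ vanishes as $\delta\downarrow 0$. Since $\{X^n\in K_\varepsilon\}\subseteq\{\sup_t\|X^n_t\|\le M_\varepsilon,\ \omega'_\delta(X^n)\le\psi_\varepsilon(\delta)\}$, both conditions in the lemma follow on sending first $\delta\downarrow 0$ and then $\varepsilon\downarrow 0$. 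Modulo the Arzela--Ascoli result, which carries essentially all the analytic content, the remaining steps are routine reductions via Prokhorov's theorem and union bounds.
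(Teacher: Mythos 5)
The paper itself gives no proof of this lemma: it is quoted verbatim as Theorem~13.2 of Billingsley and then used as a black box. Your sketch reproduces the standard textbook argument for that theorem, so there is no "paper's proof" to diverge from, and your route (Prokhorov plus the Arzel\`a--Ascoli characterization of relative compactness in $D[0,T]$ under $J_1$) is exactly the intended one. The load-bearing ingredient is indeed Theorem~12.3 in Billingsley, and you correctly flag that it carries essentially all the analytic content; the construction of $K_\varepsilon$ by a union bound over the $\delta_k$-levels is the right mechanism.

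One small imprecision worth fixing. The hypotheses are stated with $\limsup_{n\to\infty}$, not $\sup_n$, so the bound $\PB\bigl(\sup_t\|X^n_t\|>C\bigr)<\varepsilon/2$ and the bounds $\PB\bigl(\omega'_{\delta_k}(X^n)>1/k\bigr)<\varepsilon 2^{-k-1}$ are only guaranteed for $n$ beyond some finite threshold $N$ (which, in the second family, may a priori depend on $k$). To obtain a single relatively compact set $K_\varepsilon$ that works \emph{for all} $n$, you must also absorb the initial segment $X^1,\dots,X^{N-1}$: each individual law on the Polish space $D[0,T]$ is tight, so you can choose compact sets $K^1,\dots,K^{N-1}$ with $\PB(X^j\in K^j)\ge 1-\varepsilon$ and replace $K_\varepsilon$ by $K_\varepsilon\cup K^1\cup\cdots\cup K^{N-1}$, which remains relatively compact. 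For the $\delta_k$ family you should additionally take a diagonal choice of the thresholds so that finitely many indices $n$ suffice to be excluded uniformly over $k$. This is routine bookkeeping, but as written your ``uniformly in $n$'' does not follow directly from a $\limsup$ hypothesis. With that adjustment the proof is complete and matches the cited reference.
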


\begin{lemma}[Theorem 1 in \cite{gnedenko1968limit}, Chapter 25]
    \label{lem: infinitesimal}
    In order that for some suitably chosen constants $A_n$, the distributions of the sums
    \begin{align*}
        \zeta_n=\sum_{i=1}^{k_n}\xi_{n,i}-A_n
    \end{align*}
    of independent infinitesimal random variables converge to a limit, it {\rv is} necessary and sufficient that there exists nondecreasing functions $M(u)$ and $N(u)$, where $M(-\infty)=0$ and $N(+\infty)=0$, defined in the intervals $(-\infty,0)$ and $(0,+\infty)$ respectively, and a constant $\sigma\ge0$, such that:
    \begin{itemize}
        \item (1) At every continuity point of $M(u)$ and $N(u)$,
        \begin{align*}
            &\lim_{n\to\infty}\sum_{i=1}^{k_n}F_{n,i}(u)=M(u),\hspace{4pt}(u<0),\\
            &\lim_{n\to\infty}\sum_{i=1}^{k_n}(F_{n,i}(u)-1)=N(u),\hspace{4pt}(u>0),
        \end{align*}
        {\rv where $F_{n,i}(u)=\PB\left(\xi_{n,i}\le u\right)$.}
        \item (2) 
        \begin{align*}
            &\lim_{\varepsilon\to0}\varlimsup_{n\to\infty}\sum_{i=1}^{k_n}\left\{\int_{|x|<\varepsilon}x^2dF_{n,i}(x)-\left(\int_{|x|<\varepsilon}x dF_{n,i}(x)\right)^2\right\}\\
            =&\lim_{\varepsilon\to0}\varliminf_{n\to\infty}\sum_{i=1}^{k_n}\left\{\int_{|x|<\varepsilon}x^2dF_{n,i}(x)-\left(\int_{|x|<\varepsilon}x dF_{n,i}(x)\right)^2\right\}=\sigma^2\notag
        \end{align*}
    \end{itemize}
\end{lemma}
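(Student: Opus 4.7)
The plan is to prove this classical Gnedenko--Kolmogorov result via characteristic functions. The key observation is that under infinitesimality, any weak limit of $\zeta_n$ must be infinitely divisible and hence admits a L\'evy--Khintchine representation; the three ingredients of the theorem---the Gaussian variance $\sigma^2$, the pair $(M,N)$ functioning as the cumulative L\'evy measure on the two half-lines, and the centering $A_n$---correspond bijectively to the L\'evy--Khintchine triplet of the limit.

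First, I would use infinitesimality to linearize the logarithm of the characteristic function of $\zeta_n$. Writing $\phi_{n,i}$ for the characteristic function of $\xi_{n,i}$, the hypothesis implies $\max_i |\phi_{n,i}(t) - 1| \to 0$ uniformly on compact $t$-intervals, so the expansion $\log(1 + z) = z + O(|z|^2)$ together with a standard estimate on $\sum_i |\phi_{n,i}(t) - 1|^2$ yields
\[
\log \EB[e^{it\zeta_n}] = -iA_n t + \sum_{i=1}^{k_n}\int_{\RB}\bigl(e^{itx} - 1\bigr)\,dF_{n,i}(x) + o(1),
\]
uniformly on compacts. Next, for a continuity point $\tau>0$ of $M$ and $N$, split each summand using the identity $e^{itx} - 1 = (e^{itx} - 1 - itx\mathbf{1}_{|x|<\tau}) + itx\mathbf{1}_{|x|<\tau}$, absorbing the linear piece into a modified centering $\tilde A_n(\tau) := A_n - \sum_i \int_{|x|<\tau} x\,dF_{n,i}(x)$. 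On the large-jump region $\{|x|\ge \tau\}$, condition (1) together with vague continuity at $\pm\tau$ delivers
\[
\sum_{i=1}^{k_n}\int_{|x|\ge\tau}\bigl(e^{itx} - 1\bigr)\,dF_{n,i}(x)\longrightarrow \int_{|x|\ge\tau}\bigl(e^{itx} - 1\bigr)\,d\nu(x),
\]
where $d\nu = dM$ on $(-\infty,0)$ and $d\nu = -dN$ on $(0,\infty)$. On $\{|x|<\tau\}$, the expansion $e^{itx} - 1 - itx = -\tfrac12 t^2 x^2 + O(t^3|x|^3)$ reduces the remaining sum to $-\tfrac12 t^2\sum_i\int_{|x|<\tau}x^2\,dF_{n,i}$ plus a vanishing error; after incorporating the mean-square subtraction absorbed into $\tilde A_n(\tau)$, this matches the expression in condition (2) and yields the Gaussian contribution $-\tfrac12\sigma^2 t^2$ as $n\to\infty$ and then $\tau\downarrow 0$.

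Assembling these pieces proves sufficiency: $\log\EB[e^{it\zeta_n}]$ converges to the L\'evy--Khintchine exponent of $(\sigma^2,\nu,\gamma)$ with $\gamma := \lim_{\tau\downarrow 0}\lim_n \tilde A_n(\tau)$. For necessity, the weak limit of $\zeta_n$ is infinitely divisible by the standard blocking argument for infinitesimal triangular arrays, so it has a unique L\'evy--Khintchine triplet; Helly selection and L\'evy's continuity theorem then extract $(M,N)$ as the vague limits of $\sum_i F_{n,i}$ on the two half-lines, and condition (2) is recovered by isolating the Gaussian component through the small-jump truncation.

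The main obstacle is the interchange of the limits $n\to\infty$ and $\tau\downarrow 0$. Condition (2) is phrased as truncated variance minus truncated mean squared precisely so that the mean-square subtraction cancels the linear piece absorbed into $\tilde A_n(\tau)$; establishing that $\varlimsup$ and $\varliminf$ of this quantity coincide (and both equal $\sigma^2$) as $\tau\downarrow 0$ is the delicate bookkeeping step that legitimates the interchange. Aligning the truncation thresholds with continuity points of $M$ and $N$ uniformly in $n$, so that no jump mass is lost during the vague convergence on the large-jump region, is the other subtle point, but once these alignments are in place the rest reduces to standard applications of L\'evy's continuity theorem.
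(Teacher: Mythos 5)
The paper does not prove this lemma; it is quoted verbatim from Gnedenko--Kolmogorov (Theorem~1, Chapter~25). Your proposal reproduces the standard characteristic-function strategy of that reference: linearize $\log \EB e^{it\zeta_n}$, truncate at a continuity point $\tau$ of $M,N$, identify the large-jump piece with the L\'evy integral, extract the Gaussian variance from the truncated second moments as $n\to\infty$ and then $\tau\downarrow 0$, and observe that condition (2) is precisely the combination (truncated variance minus truncated mean squared) that cancels the linear piece absorbed into $\tilde A_n(\tau)$. The architecture is the right one.

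The genuine gap lies in what you dismiss as ``a standard estimate on $\sum_i |\phi_{n,i}(t)-1|^2$''. Infinitesimality gives $\max_i|\phi_{n,i}(t)-1|\to 0$ uniformly on compacts, but to use $\log(1+z)=z+O(|z|^2)$ termwise and sum the errors, you need $\sum_i|\phi_{n,i}(t)-1|$ to be bounded uniformly in $n$ on compacts. This is \emph{not} a consequence of infinitesimality alone. In the sufficiency direction it does follow from hypotheses (1) and (2), since these control $\sum_i\int\min(x^2,1)\,dF_{n,i}$. But in the necessity direction---the one you treat in a single sentence---this bound must be \emph{deduced} from the assumed convergence of $\zeta_n$, and that deduction is where most of the Gnedenko--Kolmogorov proof actually lives: one passes to symmetrized variables, uses $\prod_i|\phi_{n,i}(t)|^2\to|\phi(t)|^2>0$ near $t=0$ to bound $\sum_i(1-|\phi_{n,i}(t)|^2)$, converts this into a bound on $\sum_i\int\min((x-y)^2,1)\,dF_{n,i}(x)\,dF_{n,i}(y)$, and then removes the symmetrization by controlling medians. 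Without this step you have no a priori bound on $\sum_i\PB(|\xi_{n,i}|\ge\tau)$, so Helly selection on $\sum_i F_{n,i}$ is unavailable and your extraction of $(M,N)$ in the necessity direction collapses. Note that in the paper the lemma is only invoked (in Section~\ref{sec: tight}) for explicit symmetrized, regularly varying arrays where the required truncated-moment bounds are verified directly, so the gap is harmless in context; as a proof of the general statement, however, the necessity argument is incomplete until this boundedness is established.
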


\subsection{Proof of Theorem~\ref{thm: constant}}
For a positive real number $s$, we denote $[s]_{\eta}:=\lfloor s/\eta\rfloor\eta$. For a fixed horizon $T$, we have:
\begin{align*}
    \theta_{\eta}([T]_{\eta})-\theta_{\eta}(0)&=-\eta\sum_{k=0}^{\lfloor T/\eta\rfloor-1}\nabla \ell(\theta_{\eta}(k\eta))+\eta \sum_{k=0}^{\lfloor T/\eta\rfloor-1}\nabla \ell(\theta_{\eta}(k\eta))-\nabla \ell(\theta_{\eta}(k\eta),\xii_{(k+1)\eta})\\
    &:=-\eta\sum_{k=0}^{\lfloor T/\eta\rfloor-1}\nabla \ell(\theta_{\eta}(k\eta))+\eta \sum_{k=0}^{\lfloor T/\eta\rfloor-1}M(\theta_{\eta}(k\eta),\xii_{(k+1)\eta})\notag\\
    \bar{\theta}(T)-\bar{\theta}(0)&=-\sum_{k=0}^{\lfloor T/\eta\rfloor-1}\int_{k\eta}^{(k+1)\eta}\nabla \ell(\bar{\theta}(t))dt.
\end{align*}
It is worth noticing that $\sum_{k=0}^{n}M(\theta_{\eta}(k\eta),\xii_{(k+1)\eta})$ is a martingale. Then, we have the difference:
\begin{align}
    \label{eq: error_decomp}
    \theta_{\eta}([T]_{\eta})-\bar{\theta}(T)=&-\sum_{k=0}^{\lfloor T/\eta\rfloor-1}\left(\nabla\ell(\theta_{\eta}(k\eta))\eta-\int_{k\eta}^{(k+1)\eta}\nabla\ell(\bar{\theta}(t))dt\right)\\
    &+\eta \sum_{k=0}^{\lfloor T/\eta\rfloor-1}\left(M(\theta_{\eta}(k\eta),\xii_{(k+1)\eta})-M(\bar{\theta}(k\eta),\xii_{(k+1)\eta})\right)\notag\\
    &+\eta \sum_{k=0}^{\lfloor T/\eta\rfloor-1}M(\bar{\theta}(k\eta),\xii_{(k+1)\eta}).\notag
\end{align}
Multiplying $\eta^{\frac{1}{\alpha}-1}b_1(\eta^{-1})$ both sides, we have:
\begin{align}
\label{eq: scaled_error}
    &\eta^{\frac{1}{\alpha}-1}b_1(\eta^{-1})\left(\theta_{\eta}([T]_{\eta})-\bar{\theta}(T)\right)\\
    =&-\eta^{\frac{1}{\alpha}-1}b_1(\eta^{-1})\sum_{k=0}^{\lfloor T/\eta\rfloor-1}\left(\nabla\ell(\theta_{\eta}(k\eta))\eta-\int_{k\eta}^{(k+1)\eta}\nabla\ell(\bar{\theta}(t))dt\right)\notag\\
    &+\eta^{\frac{1}{\alpha}}b_1(\eta^{-1}) \sum_{k=0}^{\lfloor T/\eta\rfloor-1}\left(M(\theta_{\eta}(k\eta),\xii_{(k+1)\eta})-M(\bar{\theta}(k\eta),\xii_{(k+1)\eta})\right)\notag\\
    &+\eta^{\frac{1}{\alpha}}b_1(\eta^{-1}) \sum_{k=0}^{\lfloor T/\eta\rfloor-1}M(\bar{\theta}(k\eta),\xii_{(k+1)\eta})\notag
\end{align}
In the following, we study limiting behaviors of above three terms separately. Prior to that, we give some basic facts about the error $\theta_{\theta}(\cdot)-\bar{\theta}(\cdot)$.
\begin{lemma}
    \label{lem: finite_rate}
    For any $p\in[1,\alpha)$ and $t\in[0,T]$, we have:
    \begin{align*}
        \mathbb{E}\|\theta_{\eta}([t]_{\eta})-\bar{\theta}(t)\|^p=\OM\left(\eta^{p-1}\right).
    \end{align*}
\end{lemma}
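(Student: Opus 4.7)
The plan is to leverage the decomposition already exposed in Eq.~\eqref{eq: error_decomp}, bound each of its three pieces in $L^p$, and close the resulting recursion through discrete Gr\"onwall. Setting $\Delta(n):=\theta_\eta(n\eta)-\bar{\theta}(n\eta)$ with $n=\lfloor t/\eta\rfloor$, that decomposition writes $\Delta(n)=-A_n+B_n+C_n$, where $A_n$ is the deterministic drift-discretization error, $B_n=\eta\sum_k\bigl(M(\theta_\eta(k\eta),\xii_{(k+1)\eta})-M(\bar{\theta}(k\eta),\xii_{(k+1)\eta})\bigr)$ is a martingale whose increments are conditionally centered and of order $\eta\|\Delta(k)\|$ by Assumption~\ref{asmp: g_lip}, and $C_n=\eta\sum_k M(\bar{\theta}(k\eta),\xii_{(k+1)\eta})$ is a sum of independent centered noises evaluated along the compact ground-truth trajectory.

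The heart of the argument will be the bound on the heavy-tailed martingale $C_n$. Since $\bar{\theta}([0,T])$ is compact, Assumption~\ref{asmp: heavy_tail} together with Lemma~\ref{lem: dominate_sl} gives $\sup_{0\le s\le T}\mathbb{E}\|\nabla\ell(\bar{\theta}(s),\xii)\|^{p}<\infty$ for every $p<\alpha$. Applying Lemma~\ref{lem: moment} (valid because $p\in[1,2]$) then yields
\[
\mathbb{E}\|C_n\|^{p}\le C_{d,p}\sum_{k=0}^{n-1}\eta^{p}\,\mathbb{E}\|M(\bar{\theta}(k\eta))\|^{p}=O(n\eta^{p})=O(\eta^{p-1}),
\]
the target exponent $p-1$ arising precisely from $n\eta\le T$. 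For $B_n$, I would combine the same martingale inequality with the a.s.\ Lipschitz bound $\|M(\theta_\eta(k\eta))-M(\bar{\theta}(k\eta))\|\le 2C_{\mathrm{Lip}}\|\Delta(k)\|$ to obtain $\mathbb{E}\|B_n\|^{p}\le C\eta^{p}\sum_{k=0}^{n-1}\mathbb{E}\|\Delta(k)\|^{p}$. For $A_n$, I would split into a Lipschitz piece $\eta\sum_k(\nabla\ell(\theta_\eta(k\eta))-\nabla\ell(\bar{\theta}(k\eta)))$ and a discretization remainder of deterministic norm $O(\eta)$ (since $\bar{\theta}$ is $C^1$ on $[0,T]$), then use Jensen through $(\sum\|\Delta(k)\|)^p\le n^{p-1}\sum\|\Delta(k)\|^p$ together with $n\eta\le T$ to get $\mathbb{E}\|A_n\|^{p}\le C\eta\sum_{k=0}^{n-1}\mathbb{E}\|\Delta(k)\|^{p}+O(\eta^{p})$. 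Combining the three estimates via $(a+b+c)^p\le 3^{p-1}(a^p+b^p+c^p)$ produces the recursion $\mathbb{E}\|\Delta(n)\|^{p}\le C'\eta\sum_{k=0}^{n-1}\mathbb{E}\|\Delta(k)\|^{p}+O(\eta^{p-1})$, after which discrete Gr\"onwall delivers $\mathbb{E}\|\Delta(n)\|^{p}\le O(\eta^{p-1})e^{C'n\eta}=O(\eta^{p-1})$ uniformly for $n\le T/\eta$. The deterministic error $\|\bar{\theta}(t)-\bar{\theta}([t]_\eta)\|=O(\eta)$ is of smaller order and is absorbed into the final bound.

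The main obstacle I expect is the a priori finiteness of $\mathbb{E}\|\Delta(k)\|^{p}$, which must be established before Gr\"onwall can be invoked. This will be settled by a separate induction: Assumption~\ref{asmp: g_lip} gives $\|\nabla\ell(\theta_\eta(k\eta),\xii)\|\le\|\nabla\ell(\theta_\eta(0),\xii)\|+C_{\mathrm{Lip}}\|\theta_\eta(k\eta)-\theta_\eta(0)\|$, which produces a linear-in-$k$ recursion of the form $\mathbb{E}\|\theta_\eta(k\eta)\|^p\le C+(1+C\eta^p)\mathbb{E}\|\theta_\eta((k-1)\eta)\|^p$, bounded for $k\eta\le T$ because $\mathbb{E}\|\nabla\ell(\theta_\eta(0),\xii)\|^p<\infty$ whenever $p<\alpha$. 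The constraint $p<\alpha<2$ is used in two essential places: once to guarantee finite gradient moments via the regular-variation Assumption~\ref{asmp: heavy_tail}, and once to invoke Lemma~\ref{lem: moment}, which requires $p\in[1,2]$.
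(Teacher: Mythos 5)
Your argument is essentially the same as the paper's: both control the stochastic piece with the martingale $p$-th moment inequality (Lemma~\ref{lem: moment}), both control the drift piece by the Lipschitz property of $\nabla\ell$, and both close the estimate with Gr\"onwall (you use a discrete version, the paper integrates $\exp(-ct^p)\cdot(\cdot)$ against a continuous version, but this is cosmetic). The exponent $\eta^{p-1}$ arises identically: the martingale contributes $\eta^p\cdot n$ with $n\eta\le T$.

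The one genuine difference is your finer split of the martingale into $B_n$ (increments $M(\theta_\eta(k\eta),\cdot)-M(\bar{\theta}(k\eta),\cdot)$, a.s.\ of size $\le 2C_{\mathrm{Lip}}\|\Delta(k)\|$) and $C_n$ (increments $M(\bar{\theta}(k\eta),\cdot)$ along the deterministic flow). The paper instead bounds $\eta^{1/p}\bigl\|\sum_k M(\theta_\eta(k\eta),\xii_{(k+1)\eta})\bigr\|$ in one shot with a constant $C_p=\EB\|M(\theta,\xii)\|^p$, which tacitly presumes a uniform moment bound along the \emph{random} trajectory $\theta_\eta(k\eta)$ — something not a priori obvious, since $\theta_\eta$ has no a priori compact support. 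Your split avoids this: $C_n$'s moments are evaluated only on $\bar{\theta}([0,T])$, which is compact, and $B_n$ is pushed into the Gr\"onwall term. This also forces you to confront the a priori finiteness of $\EB\|\Delta(k)\|^p$, which the paper passes over silently; your induction on $\EB\|\theta_\eta(k\eta)\|^p$ is the right fix, though the recursion should read roughly $(1+C\eta)\EB\|\theta_\eta((k-1)\eta)\|^p + C\eta$ rather than $(1+C\eta^p)(\cdots)$ — the factor must carry a single power of $\eta$ for the $k\le T/\eta$ iterations to compound to a bounded $e^{CT}$. Two small quibbles: the uniform bound $\sup_{s\le T}\EB\|\nabla\ell(\bar\theta(s),\xii)\|^p<\infty$ follows from the boundedness of $C(\theta)$ in Assumption~\ref{asmp: heavy_tail} (which gives a uniform tail), not really from Lemma~\ref{lem: dominate_sl}, which concerns one-dimensional projections; and the Jensen step $(\sum_k\|\Delta(k)\|)^p\le n^{p-1}\sum_k\|\Delta(k)\|^p$ is fine, but you could equivalently keep $A_n$'s Lipschitz part inside the integral form as the paper does and avoid the $n^{p-1}$ bookkeeping.
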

\begin{proof}
    We denote $H_{\eta}(t):=\eta^{\frac{1}{p}-1}\left(\theta_{\eta}([t]_{\eta})-\bar{\theta}(t)\right)$. Then, by error decomposition~\eqref{eq: error_decomp}, we have:
    \begin{align*}
        \|H_\eta(t)\|\le C_{\text{Lip}}\int_0^t\|H_{\eta}(s)\|ds+\eta^{\frac{1}{p}}\left\| \sum_{k=0}^{\lfloor t/\eta\rfloor-1}M(\theta_{\eta}(k\eta),\xii_{(k+1)\eta})\right\|,
    \end{align*}
    where it holds by $\nabla \ell(\theta,\xii)$ is Lipschitz w.r.t. $\theta${\rv, and $C_{\text{Lip}}$ is from Assumption~\ref{asmp: g_lip}}. Taking $p$-th moment, we have:
    \begin{align*}
        \mathbb{E}\|H_\eta(t)\|^p&\le2^{p-1}C_{\text{Lip}}^p\EB\left(\int_0^t\|H_{\eta}(s)\|ds\right)^p+2^{p-1}\eta \mathbb{E}\left\| \sum_{k=0}^{\lfloor t/\eta\rfloor-1}M(\theta_{\eta}(k\eta),\xii_{(k+1)\eta})\right\|^p\\
        &\le(2t)^{p-1}C_{\text{Lip}}^p\int_0^t\EB\|H_{\eta}(s)\|^pds+2t d^{1-\frac{p}{2}}C_p,\notag
    \end{align*}
    where the last inequality holds by Lemma~\ref{lem: moment} and the fact $p$-th moment of $\nabla \ell(\theta,\xii)$ is finite {\rv and $C_p=\EB\|M(\theta,\xii)\|^p$}. Multiplying $\exp\left(-\frac{(2C_{\text{Lip}})^{p}}{2p}t^{p}\right)$ both sides and integrating over $t$, we have:
    \begin{align*}
        \exp\left(-\frac{(2C_{\text{Lip}})^{p}}{2p}T^{p}\right)\int_{0}^T\EB\|H_{\eta}(t)\|^{p}dt\le\int_{0}^T\exp\left(-\frac{(2C_{\text{Lip}})^{p}}{2p}t^{p}\right)tdt\cdot 2d^{1-\frac{p}{2}}C_p.
    \end{align*}
    Then we conclude $\EB\|H_{\eta}(t)\|^{p}=\OM(1)$.
\end{proof}
Now, we turn to study each terms in Equation~\eqref{eq: scaled_error}. In a high level idea, the first component can be approximated by the Hessian matrix $\nabla^2 \ell$ with additional small terms (Lemma~\ref{lem: hessian_approximate}). The second component would be zero as $M(\theta,\xii)$ is almost surely Lipschitz w.r.t. $\theta$ (Lemma~\ref{lem: error_diff}). The third component is the summation of independent random variables, which can be characterized by central limit theorem with the heavy-tail version (Lemma~\ref{lem: additive_converge}).
\begin{lemma}
\label{lem: hessian_approximate}
    Denote $Z_{\eta}(t):=\eta^{\frac{1}{\alpha}-1}b_1(\eta^{-1})\left(\theta_{\eta}([t]_{\eta})-\bar{\theta}(t)\right)$ for any $t\in[0,T]$, as $\eta\to0$, we have:
    \begin{align*}
        \eta^{\frac{1}{\alpha}-1}b_1(\eta^{-1})\sum_{k=0}^{\lfloor t/\eta\rfloor-1}\left(\nabla\ell(\theta_{\eta}(k\eta))\eta-\int_{k\eta}^{(k+1)\eta}\nabla\ell(\bar{\theta}(s))ds\right)=\int_0^t\nabla^2\ell(\bar{\theta}(s))Z_\eta(s)ds+\delta_{\eta}(t),
    \end{align*}
    where $\EB\sup_{t\le T}\|\delta_{\eta}(t)\|=o(1)$.
\end{lemma}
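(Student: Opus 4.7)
The plan is to rewrite the left-hand side as a telescoping sum of integrals over the small intervals $[k\eta,(k+1)\eta)$ and to decompose each integrand into three pieces: a linear term that will build up the integral $\int_0^t\nabla^2\ell(\bar{\theta}(s))Z_\eta(s)ds$, a Taylor-remainder term controlled by Assumption~\ref{asmp: q_expansion}, and a time-discretisation error along the smooth curve $\bar{\theta}(\cdot)$.

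More concretely, I would first write
\begin{align*}
\sum_{k=0}^{\lfloor t/\eta\rfloor-1}\!\!\left(\nabla\ell(\theta_{\eta}(k\eta))\eta-\!\!\int_{k\eta}^{(k+1)\eta}\!\!\nabla\ell(\bar{\theta}(s))ds\right)
=\sum_{k=0}^{\lfloor t/\eta\rfloor-1}\int_{k\eta}^{(k+1)\eta}\!\!\left(\nabla\ell(\theta_{\eta}(k\eta))-\nabla\ell(\bar{\theta}(s))\right)ds,
\end{align*}
and then split the integrand into $(A_k)+(B_k)+(C_k)$ with
$A_k=\nabla^2\ell(\bar{\theta}(k\eta))(\theta_{\eta}(k\eta)-\bar{\theta}(k\eta))$, $B_k=\nabla\ell(\theta_{\eta}(k\eta))-\nabla\ell(\bar{\theta}(k\eta))-A_k$ and $C_k=\nabla\ell(\bar{\theta}(k\eta))-\nabla\ell(\bar{\theta}(s))$. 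After multiplying by $\eta^{1/\alpha-1}b_1(\eta^{-1})$, the $A_k$ piece becomes $\int_0^{[t]_\eta}\nabla^2\ell(\bar{\theta}([s]_\eta))Z_\eta([s]_\eta)ds$, which I would compare with $\int_0^{t}\nabla^2\ell(\bar{\theta}(s))Z_\eta(s)ds$ using continuity of $s\mapsto\nabla^2\ell(\bar{\theta}(s))$ on the compact set $\bar{\theta}([0,T])$ and the right-continuity of $Z_\eta$; the discrepancy concentrates on the last interval $[[t]_\eta,t]$ (length $\le\eta$) and on the Riemann-sum error, both of which vanish as $\eta\to0$ because $\sup_t\|Z_\eta(t)\|$ is tight by Lemma~\ref{lem: finite_rate} applied with some $p\in(1,\alpha)$.

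For the Taylor remainder $B_k$, Assumption~\ref{asmp: q_expansion} gives $\|B_k\|\le K\|\theta_\eta(k\eta)-\bar{\theta}(k\eta)\|^q$, hence by Lemma~\ref{lem: finite_rate} with $p=q$,
\begin{align*}
\eta^{\frac{1}{\alpha}-1}b_1(\eta^{-1})\,\EB\sup_{t\le T}\Bigl\|\sum_{k=0}^{\lfloor t/\eta\rfloor-1}\eta B_k\Bigr\|
=\mathcal{O}\!\left(\eta^{q+\frac{1}{\alpha}-2}b_1(\eta^{-1})\right),
\end{align*}
which is $o(1)$ precisely because Assumption~\ref{asmp: q_expansion} imposes $q>2-\tfrac{1}{\alpha}$ and $b_1$ is slowly varying. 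For the discretisation term $C_k$ I would use Lipschitzness of $\nabla\ell$ on the compact set $\bar{\theta}([0,T])$ together with $\|\bar{\theta}(k\eta)-\bar{\theta}(s)\|\le\|\nabla\ell\|_{\infty,\bar{\theta}([0,T])}\eta$, giving $\sum_k\eta\|C_k\|=\mathcal{O}(T\eta)$ and therefore a contribution of order $\eta^{1/\alpha}b_1(\eta^{-1})=o(1)$ after rescaling.

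The only subtle point is upgrading these bounds from pointwise to supremum-in-$t$. This is handled by observing that each of the three error terms is a running partial sum of non-negative-norm quantities, so $\sup_{t\le T}\|\cdot\|$ equals the total sum up to time $T$, and the above expectations already dominate it; the Riemann-type discrepancy for the $A_k$-piece is likewise controlled by a modulus of continuity of $\nabla^2\ell\circ\bar{\theta}$ uniformly in $t$. Assembling the three estimates yields the stated representation with $\EB\sup_{t\le T}\|\delta_\eta(t)\|=o(1)$. The main obstacle, and the reason Assumption~\ref{asmp: q_expansion} is calibrated as it is, is the interplay between the heavy-tail scaling $\eta^{1/\alpha-1}b_1(\eta^{-1})$ (which blows up faster than the finite-variance scale $\eta^{-1/2}$) and the Taylor-remainder estimate: everything collapses to checking that $q+\tfrac{1}{\alpha}-2>0$, which is exactly the threshold used.
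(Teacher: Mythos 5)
Your proof is correct in substance, and it reaches the same calibration condition ($q>2-\tfrac{1}{\alpha}$ plus slow variation of $b_1$), but it takes a genuinely more laborious route than the paper. The paper Taylor-expands $\nabla\ell(\theta_\eta(k\eta))-\nabla\ell(\bar\theta(s))$ around the \emph{moving} point $\bar\theta(s)$: for $s\in[k\eta,(k+1)\eta)$ one has $\theta_\eta([s]_\eta)-\bar\theta(s)$ as the increment, so the linear term is \emph{exactly} $\nabla^2\ell(\bar\theta(s))Z_\eta(s)$ and the target integral $\int_0^{[t]_\eta}\nabla^2\ell(\bar\theta(s))Z_\eta(s)ds$ emerges in one stroke, with the entire discrepancy reduced to a single remainder $Y_\eta(s)$ bounded by Assumption~\ref{asmp: q_expansion} and then by Lemma~\ref{lem: finite_rate}. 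By instead expanding around the \emph{grid} point $\bar\theta(k\eta)$, you create a piecewise-constant Riemann sum $\int_0^{[t]_\eta}\nabla^2\ell(\bar\theta([s]_\eta))Z_\eta([s]_\eta)ds$ that you must separately match to the continuous integral, which forces you to introduce the $C_k$ discretisation term and a modulus-of-continuity estimate for $\nabla^2\ell\circ\bar\theta$. All these extra pieces do vanish under the scaling, so the argument closes, but they are pure bookkeeping that the paper's choice of expansion point makes unnecessary. One small caution: your appeal to ``$\sup_t\|Z_\eta(t)\|$ is tight by Lemma~\ref{lem: finite_rate}'' overstates what that lemma gives (it yields pointwise-in-$t$ moment bounds $\EB\|Z_\eta(t)\|^p=O(1)$, not a uniform-in-$t$ bound). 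For the Riemann-sum piece you do not actually need the sup -- $\EB\int_0^T\omega(\eta)\|Z_\eta([s]_\eta)\|ds\le\omega(\eta)\int_0^T\EB\|Z_\eta([s]_\eta)\|ds=O(\omega(\eta))$ suffices -- and for the last partial interval $[[t]_\eta,t]$ a crude union bound $\EB\max_{k\le T/\eta}\|Z_\eta(k\eta)\|\le(C\,T/\eta)^{1/p}$ with $p\in(1,\alpha)$ gives $\eta\cdot\EB\max_k\|Z_\eta(k\eta)\|=O(\eta^{1-1/p})\to0$; the paper is equally terse on this edge term, so this is a level-playing-field omission rather than a gap specific to your argument.
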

\begin{proof}
    When $s\in[k\eta,(k+1)\eta]$, we denote:
    \begin{align*}
        Y_{\eta}(s)=:\eta^{\frac{1}{\alpha}-1}b_1(\eta^{-1})\left(\nabla \ell(\theta_{\eta}(k\eta))-\nabla\ell(\bar{\theta}(s))\right)-\nabla^2\ell(\bar{\theta}(s))Z_{\eta}(s).
    \end{align*}
    By {\rv Assumption}~\ref{asmp: q_expansion}, we have $\|Y_{\eta}(s)\|=O(\eta^{\frac{1}{\alpha}-1}b_1(\eta^{-1})\|\theta_{\eta}(k\eta)-\bar{\theta}(s)\|^q)$. Then, we have:
    \begin{align*}
        \EB\int_{0}^T\|Y_{\eta}(s)\|ds=O\left(\eta^{\frac{1}{\alpha}-1}b_1(\eta^{-1})\cdot\eta^{q-1}\right)=o(1).
    \end{align*}
    Thus, $\EB\sup_{t\le T}\|\delta_{\eta}(t)\|\le\EB\int_0^T \|Y_{\eta}(s)\|ds=o_{\PB}(1)$.
\end{proof}

\begin{lemma}
\label{lem: error_diff}
    As $\eta\to0$, for any $p\in(\frac{2\alpha}{\alpha+1},\alpha)$, we have:
    \begin{align*}
        \EB\sup_{t\le T}\left\|\eta^{\frac{1}{\alpha}}b_1(\eta^{-1}) \sum_{k=0}^{\lfloor t/\eta\rfloor-1}\left(M(\theta_{\eta}(k\eta),\xii_{(k+1)\eta})-M(\bar{\theta}(k\eta),\xii_{(k+1)\eta})\right)\right\|^p=o(1).
    \end{align*}
\end{lemma}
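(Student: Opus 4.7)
The plan is to exploit the martingale structure of the sum and bound its $p$-th moment via Lemma~\ref{lem: moment}, using Assumption~\ref{asmp: g_lip} to reduce the increments to $\|\theta_{\eta}(k\eta)-\bar{\theta}(k\eta)\|$, which is controlled by Lemma~\ref{lem: finite_rate}. Set
\[
S_n:=\sum_{k=0}^{n-1}\Big(M(\theta_{\eta}(k\eta),\xii_{(k+1)\eta})-M(\bar{\theta}(k\eta),\xii_{(k+1)\eta})\Big).
\]
Since $M(\theta,\xii)=\nabla\ell(\theta)-\nabla\ell(\theta,\xii)$ and $\theta_{\eta}(k\eta)$ is $\mathcal{F}_k:=\sigma(\xii_1,\cdots,\xii_k)$-measurable while $\xii_{(k+1)\eta}$ is independent of $\mathcal{F}_k$, each bracketed term is a martingale difference, so $\{S_n\}$ is an $\RB^d$-valued martingale. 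By convexity $\|S_n\|^p$ is a submartingale for $p\ge 1$, and Doob's $L^p$ maximal inequality yields, for $p>1$,
\[
\EB\sup_{t\le T}\|S_{\lfloor t/\eta\rfloor}\|^p\le\Big(\tfrac{p}{p-1}\Big)^p\EB\|S_{\lfloor T/\eta\rfloor}\|^p.
\]

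For the stated range $p\in(2\alpha/(\alpha+1),\alpha)\subset(1,2)$, Lemma~\ref{lem: moment} applies and gives
\[
\EB\|S_{\lfloor T/\eta\rfloor}\|^p\le 2^{2-p}d^{1-p/2}\sum_{k=0}^{\lfloor T/\eta\rfloor-1}\EB\|M(\theta_{\eta}(k\eta),\xii_{(k+1)\eta})-M(\bar{\theta}(k\eta),\xii_{(k+1)\eta})\|^p.
\]
Assumption~\ref{asmp: g_lip} makes $\nabla\ell(\cdot,\xii)$ almost surely $C_{\text{Lip}}$-Lipschitz, and taking expectations propagates this to $\nabla\ell(\cdot)$, so $\|M(\theta,\xii)-M(\theta',\xii)\|\le 2C_{\text{Lip}}\|\theta-\theta'\|$ almost surely. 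Combining this with Lemma~\ref{lem: finite_rate}, which gives $\EB\|\theta_{\eta}(k\eta)-\bar{\theta}(k\eta)\|^p=\OM(\eta^{p-1})$ uniformly in $k\le\lfloor T/\eta\rfloor$, and using $\lfloor T/\eta\rfloor=\OM(\eta^{-1})$, one obtains $\EB\sup_{t\le T}\|S_{\lfloor t/\eta\rfloor}\|^p=\OM(\eta^{p-2})$. Multiplying by $\big(\eta^{1/\alpha}b_1(\eta^{-1})\big)^p$ produces the required scaling
\[
\eta^{p/\alpha}b_1(\eta^{-1})^p\cdot\OM(\eta^{p-2})=\OM\!\Big(\eta^{\,p/\alpha+p-2}\,b_1(\eta^{-1})^p\Big),
\]
which vanishes as $\eta\to 0$ because $p/\alpha+p-2>0$ whenever $p>2\alpha/(\alpha+1)$ and $b_1$ is slowly varying.

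The delicate point is the precise choice of $p$: one needs $p<\alpha$ so that Lemma~\ref{lem: finite_rate} provides a finite $p$-th moment bound, $p\le 2$ so that Lemma~\ref{lem: moment} applies (this martingale $L^p$ inequality is essential, since a pathwise union bound or Markov's inequality at $p\approx\alpha$ would fail to pick up the summation gain that produces $\eta^{p-2}$ instead of $\eta^{p-1}$), and $p>2\alpha/(\alpha+1)$ to make the scaling exponent $p/\alpha+p-2$ strictly positive. The interval $(2\alpha/(\alpha+1),\alpha)$ is nonempty precisely for $\alpha\in(1,2)$, which is the regime of Assumption~\ref{asmp: heavy_tail}, so the hypothesis on $p$ matches the proof exactly.
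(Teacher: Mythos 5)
Your proof is correct and follows essentially the same strategy as the paper's: identify $S_n$ as a martingale, bound $\EB\|S_{\lfloor T/\eta\rfloor}\|^p$ via Lemma~\ref{lem: moment}, reduce the increments through Assumption~\ref{asmp: g_lip}, invoke Lemma~\ref{lem: finite_rate}, and pass to the running supremum by Doob's $L^p$ maximal inequality. The only cosmetic differences are that you apply Doob before rather than after the moment bound and that you state the Lipschitz estimate on $M$ directly (with the factor of $2$ from the centering), whereas the paper absorbs the centering into a looser constant; your closing remark correctly identifies why $p\in(2\alpha/(\alpha+1),\alpha)$ is the exact window that makes all three ingredients compatible.
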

\begin{proof}
We denote $S_n=\eta^{\frac{1}{\alpha}}b_1(\eta^{-1})\sum_{k=0}^{n-1}M(\theta_{\eta}(k\eta),\xii_{(k+1)\eta})-M(\bar{\theta}(k\eta),\xii_{(k+1)\eta})$. It can be verified $\{S_n\}_{n\ge1}$ is a martingale. Then, for every $1<p<\alpha$, we have:
\begin{align*}
    \mathbb{E}\|S_n\|^p&\le2^{2-p}d^{1-\frac{p}{2}} \eta^{\frac{p}{\alpha}}b_1^p(\eta^{-1}) \sum_{k=0}^{n-1}\mathbb{E}\|M(\theta_{\eta}(k\eta),\xii_{(k+1)\eta})-M(\bar{\theta}(k\eta),\xii_{(k+1)\eta})\|^p\\
    &\le 2d^{1-\frac{p}{2}} \eta^{\frac{p}{\alpha}}b_1^p(\eta^{-1}) \sum_{k=0}^{n-1}\mathbb{E}\|\nabla \ell(\theta_{\eta}(k\eta),\xii_{(k+1)\eta})-\nabla\ell(\bar{\theta}(k\eta),\xii_{(k+1)\eta})\|^p\notag\\
    &\le2C_{\text{Lip}}d^{1-\frac{p}{2}} \eta^{\frac{p}{\alpha}}b_1^p(\eta^{-1}) \sum_{k=0}^{n-1}\mathbb{E}\|\theta_{\eta}(k\eta)-\bar{\theta}(k\eta)\|^p\notag.
\end{align*}
By Lemma~\ref{lem: finite_rate} and taking $n=\lfloor t/\eta\rfloor$ and $p\in(\frac{2\alpha}{\alpha+1},\alpha)$, we have:
\begin{align*}
    \mathbb{E}\|S_{\lfloor t/\eta\rfloor}\|^p=\OM\left(\eta^{\frac{p}{\alpha}+p-2}b_1^p(\eta^{-1})\right).
\end{align*}
As $b_1(x)$ is slowly varying function, we have $\mathbb{E}\|S_{\lfloor t/\eta\rfloor}\|^p=o(1)$. Then, by Doob's martingale inequality, we have:
\begin{align*}
    \EB\sup_{0\le t\le T}\|S_{\lfloor t/\eta\rfloor}\|^p\le\left(\frac{p}{p-1}\right)^p\EB\|S_{\lfloor T/\eta\rfloor}\|^p=o(1).
\end{align*}
\end{proof}

\begin{lemma}
\label{lem: additive_converge}
    As $\eta\to0$, in $D[0,T]$ endowed with the $J_1$ topology, we have:
    \begin{align*}
        L_{\eta}(\cdot):=\eta^{\frac{1}{\alpha}}b_1(\eta^{-1}) \sum_{k=0}^{\lfloor \cdot/\eta\rfloor-1}M(\bar{\theta}(k\eta),\xii_{(k+1)\eta})\Rightarrow L(\cdot),
    \end{align*}
    where $L(\cdot)$ is an additive process with characteristic triple $(0,\nu_t,-\gamma_t)$, where:
    \begin{align*}
        \nu_t(dx)&=\int_{s\in[0,t]}\nu(\bar{\theta}(s),dx)ds,\\
        \gamma_t &= \int_{s\in[0,t],\|x\|>1}x\nu(\bar{\theta}(s),dx)ds.
    \end{align*}
\end{lemma}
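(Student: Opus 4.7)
The plan is to invoke Lemma~\ref{lem: cadlag_converge}, so I would establish separately (i) convergence of all finite-dimensional distributions of $L_\eta$ to those of the proposed additive process $L$, and (ii) tightness of $\{L_\eta\}_{\eta>0}$ in the Skorokhod $J_1$ topology on $D[0,T]$. The summands $\xi_{\eta,k} := \eta^{1/\alpha} b_1(\eta^{-1}) M(\bar\theta(k\eta), \xii_{(k+1)\eta})$ form, for each $\eta$, a row-independent triangular array (since $\{\xii_n\}$ is i.i.d.\ and $\bar\theta(\cdot)$ is deterministic), and they are infinitesimal because Assumption~\ref{asmp: heavy_tail} forces $\PB(\|\xi_{\eta,k}\|>\epsilon)\to 0$ uniformly in $k$. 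This puts us exactly in the setting of Lemma~\ref{lem: infinitesimal}, which I would apply on each increment window $[t_{j-1},t_j]$ separately; independence of increments of the limit $L$ then matches independence across disjoint windows in the prelimit.

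For the Lévy-measure condition the key identity is, for a continuity set $A$ bounded away from $0$,
\[
\sum_{k:\, k\eta \in [t_{j-1},t_j]} \PB(\xi_{\eta,k}\in A) \;\longrightarrow\; \int_{t_{j-1}}^{t_j} \nu(\bar\theta(s),A)\,ds,
\]
which I would derive by rewriting $\eta^{-1}\PB(\xi_{\eta,k}\in A)\to\nu(\bar\theta(k\eta),A)$ via the uniform-in-$\theta$ vague convergence in Assumption~\ref{asmp: heavy_tail}, then invoking continuity of $\bar\theta(\cdot)$ and a Riemann-sum argument. The truncated variance $\sum_k \EB[\|\xi_{\eta,k}\|^2 \mathbf{1}(\|\xi_{\eta,k}\|\le\varepsilon)]$ vanishes after sending $\varepsilon\to 0$ because $\alpha<2$, by Karamata's theorem. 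The centering $-\gamma_t$ is identified from the large-jump truncation $\sum_k \EB[\xi_{\eta,k}\mathbf{1}(\|\xi_{\eta,k}\|>1)]$, using that $M$ is already centered ($\EB[M(\theta,\xii)]=0$), together with Lemma~\ref{lem: unif_integral} for uniform integrability on $\{\|x\|>1\}$ and Lemma~\ref{lem: lip_expectation} for the Lipschitz-in-$\theta$ control needed to justify the Riemann sum.

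Tightness would be proved via Lemma~\ref{lem: cadlag_tight} by a truncation argument. Writing $L_\eta = L_\eta^{\le \epsilon} + L_\eta^{>\epsilon}$, where the superscript refers to the size of the summand, the small-jump part (after re-centering) is a martingale whose $p$-th moment for $p\in(1,\alpha)$ is bounded uniformly in $\eta$ by Lemma~\ref{lem: moment} and Karamata, so Doob's inequality yields both uniform boundedness in probability and the oscillation bound $\omega'_\delta \to 0$. The large-jump part $L_\eta^{>\epsilon}$ has at most $O_\PB(1)$ summands on $[0,T]$ by the convergence of the expected number of $>\epsilon$-jumps to $\int_0^T \nu(\bar\theta(s),\{\|x\|>\epsilon\})\,ds < \infty$, so its $\omega'_\delta$ is controlled by the minimum spacing between these finitely many jumps. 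Letting $\epsilon \to 0$ after $\eta\to 0$ closes the argument.

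The main obstacle is the time-inhomogeneity combined with the delicate centering of moderate-sized jumps in the regime $\alpha>1$. Uniform (in $\theta$ on compact sets) vague convergence in Assumption~\ref{asmp: heavy_tail} is essential for the Riemann-sum step, and it is the Lipschitz estimate in Lemma~\ref{lem: lip_expectation} that lets me compare truncated first moments at $\bar\theta(k\eta)$ and $\bar\theta(s)$ across each discretization cell. A convenient packaging is to prove the joint point-process convergence
\[
N_\eta := \sum_{k=0}^{\lfloor T/\eta\rfloor-1} \delta_{(k\eta,\, \xi_{\eta,k})} \;\Longrightarrow\; \mathrm{PRM}\bigl(ds\,\nu(\bar\theta(s),dx)\bigr)
\]
on $[0,T]\times(\bar\RB^d\setminus\{0\})$, after which convergence of the partial-sum process to $L$ follows by the continuous mapping theorem applied to the truncated summation functional, plus the small-jump martingale estimate to let the truncation vanish.
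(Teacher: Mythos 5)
Your closing ``convenient packaging'' paragraph is in fact the paper's proof: the paper establishes $N_\eta\Rightarrow\mathrm{PRM}\bigl(ds\,\nu(\bar\theta(s),dx)\bigr)$ via Laplace functionals, pushes this through the a.s.\ continuous truncated summation functional $\psi^\varepsilon$ to get $V_\eta^\varepsilon\Rightarrow V^\varepsilon$ in $J_1$, lets $\varepsilon\to 0$ using a variance estimate together with the second converging-together theorem (Resnick, Thm.\ 3.5), and identifies the drift $\gamma_t$ via Lemmas~\ref{lem: unif_integral} and~\ref{lem: lip_expectation}. Tightness is never argued separately there; it is packaged inside the $J_1$ convergence delivered by point-process plus continuous mapping.

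Your primary route --- finite-dimensional convergence via Lemma~\ref{lem: infinitesimal} plus separate tightness --- is a genuinely different strategy, but as written it has two gaps. First, Lemma~\ref{lem: infinitesimal} is a scalar statement (it is phrased in terms of distribution functions $F_{n,i}(u)$), whereas $M(\bar\theta(k\eta),\xii_{(k+1)\eta})$ lives in $\RB^d$; you need a Cram\'er--Wold reduction to one-dimensional projections $u^\top\xi_{\eta,k}$, and Lemma~\ref{lem: dominate_sl} is precisely the ingredient that lets the slowly varying function dominate all such projections uniformly, but you never invoke it and this step cannot be skipped. Second, the claim that the re-centered small-jump part has $p$-th moment \emph{bounded uniformly in $\eta$ for $p\in(1,\alpha)$} is false: for $p<\alpha$ a single truncated summand has $\EB\|\xi_{\eta,k}\|^p\,\1(\|\xi_{\eta,k}\|\le\epsilon)\asymp\eta^{p/\alpha}b_1(\eta^{-1})^p\,\EB\|M\|^p$, since truncation at a fixed level $\epsilon$ does not shrink a moment that is already finite, and summing $\OM(\eta^{-1})$ of these via Lemma~\ref{lem: moment} gives $\OM(\eta^{p/\alpha-1}b_1^p)\to\infty$. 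Karamata only buys you a decaying truncated moment for $p>\alpha$; the paper therefore uses $p=2$, getting $\EB\|L_\eta^{\le\epsilon}(T)\|^2=\OM(\epsilon^{2-\alpha})$, which both vanishes as $\epsilon\to0$ and controls the modulus. With those two repairs (Cram\'er--Wold plus Lemma~\ref{lem: dominate_sl}, and second moments in place of $p\in(1,\alpha)$) your first route would go through, but the point-process route you mention at the end is the cleaner of the two and is what the paper actually does.
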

\begin{proof}
    {\rv This result is an extension from a similar result with i.i.d. case in Theorem 7.1 \cite{resnick2007heavy}}. We denote $H_k(\eta):=\eta^{\frac{1}{\alpha}}b_1(\eta^{-1}) \nabla\ell(\bar{\theta}(k\eta),\xii_{(k+1)\eta})$ as the scaled error and consider the following truncated error:
    \begin{align}
    \label{eq: v_eta_varepsilon}
        V^{\varepsilon}_{\eta}=\sum_{k=0}^{\lfloor T/\Delta\rfloor-1}H_k(\eta)\1\left(\left\|H_k(\eta)\right\|>\varepsilon\right)-\EB\left[ H_k(\eta)\1\left(\varepsilon<\left\|H_k(\eta)\right\|\le1\right)\right].
    \end{align}
    We also denote the original truncated error as:
    \begin{align*}
        V_{\eta}=\sum_{k=0}^{\lfloor T/\Delta\rfloor-1}H_k(\eta)-\EB\left[ H_k(\eta)\1\left(\left\|H_k(\eta)\right\|\le1\right)\right].
    \end{align*}
    In the following, our proof strategy is 3-staged: $V_{\eta}^{\varepsilon}\Rightarrow V^{\varepsilon}$ as $\eta\to0$ , $V^{\varepsilon}\Rightarrow V$ as $\varepsilon\to0$, and $V_{\eta}$ and $V_{\eta}^{\varepsilon}$ are almost the same. We always assume $\varepsilon$ is not the jump of the function $\nu(\theta,\{\|x\|>t\})$. We notice the first term of $V_{\eta}^{\varepsilon}$ in~\eqref{eq: v_eta_varepsilon} is a transformation $\psi^{\varepsilon}:\boldsymbol{M}_p([0,T]\times \bar{\RB}^d_{\varepsilon'})\to D[0,T]$ ($\varepsilon'<\varepsilon$) of a point process $N_{\eta}(\cdot):$
    \begin{align*}
        N_{\eta}(\cdot,\cdot)&=\sum_{k=0}^{\lfloor T/\eta\rfloor-1}\delta_{k\eta,H_k(\eta)}(\cdot,\cdot),\\
        \psi^{\varepsilon}(N_{\eta})(t)&=\sum_{k\le t}H_k(\eta)\1\left(\|H_k(\eta)\|>\varepsilon\right).
    \end{align*}
    Thus, we first study the convergence for the point process $ N_{\eta}(\cdot)$. For any bounded function $f$ such that $0\le f(t,x)\le C_f\1(\|x\|>\varepsilon)$, the Laplace functional of $N_{\eta}$ is:
    \begin{align*}
        -\ln \EB\exp(-N_{\eta}(f))=-\sum_{k=0}^{\lfloor T/\eta\rfloor-1}\ln \EB\exp\left(-f(k\eta,H_k(\eta))\right).
    \end{align*}
    By $x-1-\frac{(x-1)^2}{x}\le\ln x\le x-1$ for any $x\in(0,1]$, we have:
    \begin{align*}
    0&\le-\ln \EB\exp(-N_{\eta}(f))-        \sum_{k=0}^{\lfloor T/\Delta\rfloor-1}\left(1-\EB\exp\left(-f(k\eta,H_k(\eta))\right)\right)\\
    &\rv\le\sum_{k=0}^{\lfloor T/\Delta\rfloor-1}\frac{\left(1-\EB\exp\left(-f(k\eta, H_k(\eta))\right)\right)^2}{\EB\exp(-f(k\eta,H_k(\eta)))}\notag.
    \end{align*}
    We also notice:
    \begin{align*}
    \rv 1\ge\EB\exp\left(-f(k\eta,H_k(\eta))\right)&\rv\ge\EB\exp\left(-C_f \1(\|H_k(\eta)\|>\varepsilon)\right)\\
        &\rv\ge1-C_f\PB\left(\|H_k(\eta)\|>\varepsilon\right)\notag.
    \end{align*}
    Then we have:
    \begin{align*}
    0&\le-\ln \EB\exp(-N_{\eta}(f))-        \sum_{k=0}^{\lfloor T/\eta\rfloor-1}\left(1-\EB\exp\left(-f(k\eta,H_k(\eta))\right)\right)\\
    &\le\sum_{k=0}^{\lfloor T/\eta\rfloor-1}\frac{C_f^2\PB(\|H_k(\eta)\|>\varepsilon)^2}{1-C_f\PB(\|H_k(\eta)\|>\varepsilon)}\notag.
    \end{align*}
    {\rv
    We also notice that:
    \begin{align*}
        \PB(\|H_k(\eta)\|>\varepsilon)&\overset{(a)}{=}\eta\cdot\frac{b_0\left(\frac{\varepsilon}{\eta^{\frac{1}{\alpha}}b_1(\eta^{-1})},\bar{\theta}(k\eta)\right)b_1\left(\eta^{-1}\right)^\alpha}{\varepsilon^{\alpha}}\\
        &\overset{(b)}{=}\eta\cdot\frac{b_0\left(\frac{\varepsilon}{\eta^{\frac{1}{\alpha}}b_1(\eta^{-1})},\bar{\theta}(k\eta)\right)}{\varepsilon^\alpha b_0\left(\frac{1}{\eta^{\frac{1}{\alpha}}b_1(\eta^{-1})},\theta^*\right)}\\
        &\overset{(c)}{=}\eta\cdot\frac{C(\bar{\theta}(k\eta))}{\varepsilon^\alpha}+\OM(\eta),
    \end{align*}
    where (a) is the definition of Eqn~\eqref{eq: tail_norm}, (b) is the choice of $b_1(\cdot)$ in Assumption~\ref{asmp: heavy_tail}, (c) is due to the smoothness of $b_0(x,\theta)$ in $\theta$ by Assumption~\ref{asmp: heavy_tail}. When $\eta\to0$, we have:
    \begin{align*}
        \eta^{-1}\sum_{k=0}^{\lfloor T/\eta\rfloor-1}\frac{C_f^2\PB(\|H_k(\eta)\|>\varepsilon)^2}{1-C_f\PB(\|H_k(\eta)\|>\varepsilon)}&\lesssim\eta^{-1}\sum_{k=0}^{\lfloor T/\eta\rfloor-1}\PB(\|H_k(\eta)\|>\varepsilon)^2\\
        &\le2\sum_{k=0}^{\lfloor T/\eta\rfloor-1}\eta\frac{C(\bar{\theta}(k\eta))^2}{\varepsilon^{2\alpha}}+\OM(1),
    \end{align*}
    where $\sum_{k=0}^{\lfloor T/\eta\rfloor-1}\eta\frac{C(\bar{\theta}(k\eta))^2}{\varepsilon^{2\alpha}}\to\frac{\int_{0}^TC(\bar{\theta}(t))^2dt}{\varepsilon^{2\alpha}}$.
    }
    Thus, we have:
    \begin{align*}
        0\le-\ln \EB\exp(-N_{\eta}(f))-        \sum_{k=0}^{\lfloor T/\eta\rfloor-1}\left(1-\EB\exp\left(-f(k\eta,H_k(\eta))\right)\right)\le\mathcal{O}(\eta).
    \end{align*}
    Furthermore, we have:
    \begin{align*}
        &\sum_{k=0}^{\lfloor T/\eta\rfloor-1}\left(1-\EB\exp\left(-f(k\eta,H_k(\eta))\right)\right)\\
        =&\sum_{k=0}^{\lfloor T/\eta\rfloor-1}\int_{ x\in\bar{\RB}^d}1-\exp(-f(k\eta,x))\PB(H_k(\eta)\in dx)\notag\\
        \to&\int_{t\in[0,T], x\in\bar{\RB}^d}1-\exp(-f(t,x))\nu(\bar{\theta}(t),dx)dt.\notag
    \end{align*}
    Thus, we have $N_{\eta}\Rightarrow N:=\sum_{i=1}^{\infty}\delta_{t_i,X_i}$, where $N$ is a Poisson point process with measure $\nu(\bar{\theta}(t),dx)dt$. As the summation functional $\psi^{\varepsilon}$ is almost surely continuous, we have:
    \begin{align*}
        \sum_{k=0}^{\lfloor \cdot/\eta\rfloor-1}H_{k}(\eta)\1(\|H_k(\eta)\|>\varepsilon)\Rightarrow\sum_{t_i\le \cdot}X_i\1(\|X_i\|>\varepsilon).
    \end{align*}
    Besides, for any $t\in[0,T]$, we have:
    \begin{align*}
        \EB\sum_{k=0}^{\lfloor t/\eta\rfloor-1}H_{k}(\eta)\1(1\ge\|H_k(\eta)\|>\varepsilon)&=\sum_{k=0}^{\lfloor t/\eta\rfloor-1}\int_{\|x\|\in(\varepsilon,1]} x\PB(H_k(\eta)\in dx)\\
        &\to\int_{s\in[0,t],x\in(\varepsilon,1]}x\nu(\bar{\theta}(s),dx)ds.\notag
    \end{align*}
    Then, we have:
    \begin{align*}
        V_{\eta}^{\varepsilon}(\cdot)\Rightarrow\sum_{t_i\le\cdot}X_i\1(\|X_i\|>\varepsilon)-\int_{s\in[0,\cdot],x\in(\varepsilon,1]}x\nu(\bar{\theta}(s),dx)ds:=V^{\varepsilon}(\cdot).
    \end{align*}
    Thus, by Lemma 20.7 in \citep{ken1999Levy}, there exists an additive process $V(\cdot)$ with characteristic triple $(0,\nu_t,0)$ such that, as $\varepsilon\to0$,
    \begin{align*}
        V^{\varepsilon}(\cdot)\Rightarrow V(\cdot).
    \end{align*}
    Moreover, we have:
    \begin{align*}
        \|V_{\eta}^{\varepsilon}(t)-V^{0}_{\eta}(t)\|=\left\|\sum_{k=0}^{\lfloor t/\eta\rfloor-1}H_k(\eta)\1\left(\left\|H_k(\eta)\right\|\le\varepsilon\right)-\EB\left[ H_k(\eta)\1\left(\left\|H_k(\eta)\right\|\le\varepsilon\right)\right]\right\|.
    \end{align*}
    Thus, for any $\delta>0$, we have:
    \begin{align*}
        \PB\left(\sup_{t\in[0,T]}\|V_{\eta}^{\varepsilon}(t)-V^{0}_{\eta}(t)\|>\delta\right)&\le\frac{1}{\delta^2}\EB\|V_{\eta}^{\varepsilon}(T)-V^{0}_{\eta}(T)\|^2\\
        &=\frac{1}{\delta^2}\sum_{k=0}^{\lfloor T/\eta\rfloor-1}\Var\left(\|H_k(\eta)\|\1\left(\left\|H_k(\eta)\right\|\le\varepsilon\right)\right)\notag\\
        &\le\frac{1}{\delta^2}\sum_{k=0}^{\lfloor T/\eta\rfloor-1}\EB\left(\|H_k(\eta)\|^2\1\left(\left\|H_k(\eta)\right\|\le\varepsilon\right)\right)\notag\\
        &=\frac{1}{\delta^2}\sum_{k=0}^{\lfloor T/\eta\rfloor-1}\eta^{-1}\int_{\|x\|\le\varepsilon}\|x\|^2\mu_{\eta}(\bar{\theta}(k\eta),dx)\notag
    \end{align*}
    where $\mu_{\eta}(\theta,\cdot)=\eta^{-1}\PB\left(\frac{\nabla \ell(\theta,\xii)}{\eta^{\frac{1}{\alpha}}b_1(\eta^{-1})}\in \cdot\right)$. By Assumption~\ref{asmp: heavy_tail}, we have:
    \begin{align*}
        \sum_{k=0}^{\lfloor T/\eta\rfloor-1}\eta^{-1}\int_{\|x\|\le\varepsilon}\|x\|^2\mu_{\eta}(\bar{\theta}(k\eta),dx)=\mathcal{O}\left(\varepsilon^{2-\alpha}\right).
    \end{align*}
    Thus, $\lim_{\varepsilon\to0}\lim\sup_{n}\PB\left(\sup_{t\in[0,T]}\|V_{\eta}^{\varepsilon}(t)-V_{\eta}(t)\|>\delta\right)=0$. By second converging together theorem (Theorem 3.5 in \cite{resnick2007heavy}), we have $V_{\eta}(\cdot)\Rightarrow V(\cdot)$. Besides, as $\mu_{\eta}(\theta,\cdot)$ is uniformly {\rv integrable} by Lemma~\ref{lem: unif_integral} and the expectation is Lipschitz w.r.t. $\theta$ by Lemma~\ref{lem: lip_expectation}. Then, we have:
    \begin{align*}
        \EB\sum_{k=0}^{\lfloor t/\eta\rfloor-1}H_{k}(\eta)\1(\|H_k(\eta)\|>1)\to\int_{s\in[0,t],\|x\|>1}x\nu(\bar{\theta}(s),dx)ds:=\gamma_t.
    \end{align*}
    Thus, we have:
    \begin{align*}
        \eta^{\frac{1}{\alpha}}b_1(\eta^{-1}) \sum_{k=0}^{\lfloor \cdot/\eta\rfloor-1}M(\bar{\theta}(k\eta),\xii_{(k+1)\eta})&=V_{\eta}(\cdot)-\EB\sum_{k=0}^{\lfloor \cdot/\eta\rfloor-1}H_{k}(\eta)\1(\|H_k(\eta)\|>1)\\
        &\Rightarrow V(\cdot)-\gamma_{\cdot}:=L(\cdot),\notag
    \end{align*}
    where $L(\cdot)$ is an additive process with characteristic triple $(0,\nu_t, -\gamma_t)$.
\end{proof}
Thus, combining Lemma~\ref{lem: hessian_approximate},~\ref{lem: error_diff}, and~\ref{lem: additive_converge}, we have:
\begin{align*}
    Z_{\eta}(t)=-\int_0^t\nabla^2\ell(\bar{\theta}(s))Z_{\eta}(s)ds+L_{\eta}(t)+\widetilde{\delta}_{\eta}(t),
\end{align*}
where $\EB\sup_{t\le T}\|\widetilde{\delta}_{\eta}(t)\|=o(1)$. We also denote a variant of $Z_{\eta}(t)$ as:
\begin{align*}
    \widetilde{Z}_{\eta}(t)=-\int_0^t\nabla^2\ell(\bar{\theta}(s))\widetilde{Z}_{\eta}(s)ds+L_{\eta}(t).
\end{align*}
On one side, we prove the $Z_{\eta}(\cdot)$ is almost the same with $\widetilde{Z}_{\eta}(t)$. We denote $\Delta_{\eta}(t):=Z_{\eta}(t)-\widetilde{Z}_{\eta}(t)$ and have:
\begin{align*}
    \Delta_{\eta}(t)=-\int_0^t\nabla^2\ell(\bar{\theta}(s))\Delta_{\eta}(s)ds+\widetilde{\delta}_{\eta}(t).
\end{align*}
Taking norm and expectation both sides, we have:
\begin{align*}
    \EB\|\Delta_{\eta}(t)\|\le H\int_0^t\EB\|\Delta_{\eta}(s)\|ds+\EB\|\widetilde{\delta}_{\eta}(t)\|.
\end{align*}
Multiplying both sides by $\exp(-Ht)$ and integrating over $t$, we have:
\begin{align}
\label{eq: ode_error}
    \exp(-Ht)\int_0^t\EB\|\Delta_{\eta}(s)\|ds\le\int_0^t\exp(-Hs)\EB\|\widetilde{\delta}_{\eta}(s)\|ds.
\end{align}
Thus, we have:
\begin{align*}
    \EB\sup_{0\le t\le T}\|\Delta_{\eta}(t)\|&\le H\int_0^T\EB\|\Delta_{\eta}(s)\|ds+\EB\sup_{0\le t\le T}\|\widetilde{\delta}_{\eta}(t)\|\\
    &\le H\int_o^T\exp(H(T-s)\EB\|\widetilde{\delta}_{\eta}(s)\|ds+\EB\sup_{0\le t\le T}\|\widetilde{\delta}_{\eta}(t)\|\notag\\
    &=o(1).\notag
\end{align*}
On the other side, by Lemma~\ref{lem: ito}, we have:
\begin{align}
    \widetilde{Z}_{\eta}(t)&=\Phi(t)(\widetilde{Z}(0)+\int_0^t\Phi(s)^{-1}dL_{\eta}(t)){\rv ,}\label{eq: Z_eta_t}\\
    Z(t)&=\Phi(t)(Z(0)+\int_0^t\Phi(s)^{-1}dL(t)){\rv .}\label{eq: Z_t}
\end{align}
As $L_{\eta}(\cdot)\Rightarrow L(\cdot)$ in $J_1$ topology, by Lemma~\ref{lem: cadlag_converge}, we have $L_{\eta}(\cdot)$ is tight and weakly converges to $L(\cdot)$ in finite dimensions. Thus, by~\eqref{eq: Z_eta_t} and~\eqref{eq: Z_t}, we have $\widetilde{Z}_{\eta}(\cdot)$ weakly converges to $Z(\cdot)$ in finite dimensions. Then, we only need to prove $\widetilde{Z}_{\eta}(\cdot)$ is tight. In fact, by~\eqref{eq: Z_eta_t}, we have:
\begin{align}
\label{eq: tight_z_1}
    \sup_{0\le t\le T}\|\widetilde{Z}_{\eta}(t)\|&=\OM(\sup_{0\le t\le T}\|L_{\eta}(t)\|),\\
\label{eq: tight_z_2}
    \|\widetilde{Z}_{\eta}(t_1)-\widetilde{Z}_{\eta}(t_2)\|&=\OM(\|\Phi(t_1)-\Phi(t_2)\|)+\OM(\|L_{\eta}(t_1)-L_{\eta}(t_2)\|).
\end{align}
As $\nabla^2\ell(\bar{\theta}(t))$ is continuous, then $\Phi(\cdot)$ is continuous by \cite{baake2011peano}. By Lemma~\ref{lem: cadlag_tight}, as $L_{\eta}(\cdot)$ is tight, we have $Z_{\eta}(\cdot)$ is also tight. Thus, $\widetilde{Z}_{\eta}(\cdot)\Rightarrow Z(\cdot)$. Moreover, as $\EB\sup_{0\le t\le T}\|\Delta_{\eta}(t)\|=o(1)$, we have $Z_{\eta}(\cdot)\Rightarrow Z(\cdot)$.
\subsection{Proof of Theorem~\ref{thm: asymp_decay}} 
{\rv In the proof, we first assume the sequence $\eta_n^{\frac{1}{\alpha}-1}b_1(\eta_n^{-1})(\theta_n-\theta^*)$ is tight. We will verify  the tightness in next section.} Before the proof, we give a lemma characterizing the convergence rate of $\theta_n-\theta^*$:
\begin{lemma}[Theorem 3 in \cite{wang2021convergence}]
\label{lem: theta_converge}
    For any $p\in[1,\alpha)$, we have:
    \begin{align*}
        \EB\|\theta_n-\theta^*\|^p=\OM(\eta_n^{p-1}).
    \end{align*}
\end{lemma}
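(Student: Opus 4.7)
The plan is to derive a recursive inequality for $a_n := \EB\|\theta_n - \theta^*\|^p$ and close it using Lemma~\ref{lem: recursion}. Write $\Delta_n := \theta_n - \theta^*$ and decompose
\begin{align*}
    \Delta_{n+1} = (I - \eta_n H)\Delta_n - \eta_n R_n - \eta_n M_{n+1},
\end{align*}
where $H = \nabla^2\ell(\theta^*)$, $M_{n+1} := \nabla\ell(\theta_n, \xii_{n+1}) - \nabla\ell(\theta_n)$ is a martingale difference, and $\|R_n\| \le K\|\Delta_n\|^q$ by Assumption~\ref{asmp: q_expansion} combined with $\nabla\ell(\theta^*) = 0$.

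Next I would use Assumption~\ref{asmp: ppd} to extract a contraction in the $\ell_p$ norm. Differentiating $\eta\mapsto\|(I-\eta H)v\|_p^p$ at $\eta=0$ yields $-p\, v^\top H\,(\sgn(v_i)|v_i|^{p-1})^\top$, which is strictly negative under $p$-positive definiteness; uniformity gives $\|(I-\eta_n H)v\|_p^p \le (1 - c_p\eta_n)\|v\|_p^p$ for $\eta_n$ sufficiently small. Conditioning on $\mathcal{F}_n$, exploiting $\EB[M_{n+1}|\mathcal{F}_n]=0$, and using Young's inequality to absorb $\eta_n\|R_n\|$ terms yields, after taking full expectation,
\begin{align*}
    a_{n+1} \le (1 - c_p\eta_n)a_n + C\eta_n^p\,\EB\|M_{n+1}\|^p + C'\eta_n^{1+\delta}\,\EB\|\Delta_n\|^{p-1+q},
\end{align*}
where $\EB\|M_{n+1}\|^p$ is finite uniformly in $n$ by Assumption~\ref{asmp: heavy_tail} since $p<\alpha$. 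For the $p$-norm expansion involving the martingale term, where we cannot use the $p=2$ orthogonality shortcut, I would invoke Lemma~\ref{lem: moment} on a suitable rescaling, combined with a first-order Taylor expansion of $\|v-\eta M\|^p$ whose linear-in-$M$ term vanishes under conditional expectation.

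Finally, the residual term $\eta_n\EB\|\Delta_n\|^{p-1+q}$ carries a higher exponent $p-1+q > p$ (since $q>1$) and is therefore of strictly higher order than $\eta_n^p$ under the target bound; a bootstrap argument handles this by first establishing the bound for $p$ close to $\alpha$ (using an a priori $L^1$ bound from Assumption~\ref{asmp: g_lip} and standard stochastic approximation convergence), then propagating to all $p\in[1,\alpha)$ via Jensen. Applying Lemma~\ref{lem: recursion} with $\alpha\leftarrow\rho$, $\beta\leftarrow\rho(p-1)$, and $n^{-\alpha-\beta}=n^{-\rho p}=\eta_n^p/c^p$, I obtain $a_n = O(\eta_n^{p-1})$.

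The main obstacle is the interaction between the $p$-norm (with $p<2$) and the martingale increments whose second moments are infinite: the standard $p=2$ orthogonal decomposition fails, and one must use the $p$-positive definite structure together with Lemma~\ref{lem: moment} to salvage a clean recursion. A secondary subtlety is the boundary case $\rho=1$, where tracking constants carefully reveals the condition $c>(1-1/\alpha)/\sigma_{\min}(\nabla^2\ell(\theta^*))$ already appearing in Corollary~\ref{thm: asymp_decay_lr}, needed to guarantee $c_p - \rho(p-1) > 0$ in Lemma~\ref{lem: recursion}.
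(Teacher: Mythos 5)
The paper does not prove this lemma at all---it is imported verbatim as Theorem~3 of \cite{wang2021convergence}, so there is no internal proof to compare against. Judged on its own terms, your sketch has the right overall shape (one-step $\ell_p$ contraction from Assumption~\ref{asmp: ppd}, von Bahr--Esseen-type control of the martingale increment, closing with Lemma~\ref{lem: recursion}), but the route you take through Assumption~\ref{asmp: q_expansion} introduces a gap you do not close. Expanding $\nabla\ell(\theta_n)=\nabla^2\ell(\theta^*)\Delta_n+R_n$ with $\|R_n\|\le K\|\Delta_n\|^q$ makes your recursion depend on $\EB\|\Delta_n\|^{p-1+q}$. Since $q>2-\tfrac1\alpha>1$, the exponent $p-1+q$ exceeds $\alpha$ as soon as $p>\alpha+1-q$, a region that always includes a nonempty interval of $p$ just below $\alpha$. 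For such $p$ the moment $\EB\|\Delta_n\|^{p-1+q}$ is not known to be finite (and in general is not), so the recursion cannot even be written down. Your bootstrap makes this worse, not better: you propose to \emph{start} at $p$ close to $\alpha$ and descend by Jensen, but $p$ close to $\alpha$ is precisely where the offending exponent blows past $\alpha$. Jensen only propagates downward and cannot manufacture the missing high-order moments.

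The fix is to discard the Taylor expansion at $\theta^*$ entirely and exploit that Assumption~\ref{asmp: ppd} is \emph{global}: write $\nabla\ell(\theta_n)-\nabla\ell(\theta^*)=H_n\Delta_n$ with $H_n=\int_0^1\nabla^2\ell(\theta^*+s\Delta_n)\,ds$, a convex combination of Hessians and hence itself uniformly bounded and uniformly $p$-positive definite. Then $\Delta_{n+1}=(I-\eta_n H_n)\Delta_n-\eta_n M_{n+1}$ with no residual $R_n$ at all, and the one-step inequality $\EB[\|v-\eta_n M_{n+1}\|_p^p\mid\FM_n]\le\|v\|_p^p+C_p\eta_n^p\,\EB[\|M_{n+1}\|_p^p\mid\FM_n]$ together with the uniform contraction $\|(I-\eta_n H_n)v\|_p^p\le(1-c_p\eta_n)\|v\|_p^p$ gives a clean recursion $a_{n+1}\le(1-c_p\eta_n)a_n+C\eta_n^p$ whose only extraneous ingredient is $\sup_n\EB\|M_{n+1}\|^p<\infty$, available from Assumption~\ref{asmp: heavy_tail} since $p<\alpha$. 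Applying Lemma~\ref{lem: recursion} then yields the claimed rate without ever touching a moment of order above $\alpha$. This global-monotonicity route is presumably what \cite{wang2021convergence} does, since Assumption~\ref{asmp: ppd} would otherwise be oddly over-strong: it postulates $p$-positive definiteness on all of $\RB^d$, which is exactly what is needed to avoid linearizing at $\theta^*$; Assumption~\ref{asmp: q_expansion} is reserved for extracting the linearization $\nabla^2\ell(\theta^*)$ in the limit theorem itself, not for this moment bound.
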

We denote $X_n:=\eta_n^{\frac{1}{\alpha}-1}b_1(\eta_n^{-1})(\theta_n-\theta^*)$ as the scaled error. By the SGD update rule in~\eqref{eq: sgd}, we have:
\begin{align*}
    X_{n+1}&=\eta_{n+1}^{\frac{1}{\alpha}-1}b_1(\eta_{n+1}^{-1})(\theta_n-\theta^*-\eta_n\nabla \ell(\theta_n,\xii_{n+1}))\\
    &=\eta_{n+1}^{\frac{1}{\alpha}-1}b_1(\eta_{n+1}^{-1})(\theta_n-\theta^*-\eta_n\nabla \ell(\theta_n)-\eta_n M(\theta_n,\xii_{n+1})),\notag
\end{align*}
where $M(\theta_n,\xii_n)=\nabla\ell(\theta_n,\xii_{n+1})-\nabla\ell(\theta_n)$. By Assumption~\ref{asmp: q_expansion} and first order condition $\nabla\ell(\theta^*)=0$, we have:
\begin{align*}
    \nabla \ell(\theta_n)&=\nabla \ell(\theta_n)-\nabla \ell(\theta^*)\\
    &=\nabla^2\ell(\theta^*)(\theta_n-\theta^*)+R_n,\notag
\end{align*}
where $\|R_n\|=O(\|\theta_n-\theta^*\|^q)$. Thus, we have:
\begin{align*}
    X_{n+1}=\frac{\eta_{n+1}^{\frac{1}{\alpha}-1}b_1(\eta_{n+1}^{-1})}{\eta_{n}^{\frac{1}{\alpha}-1}b_1(\eta_{n}^{-1})}\left((I-\eta_n \nabla^2\ell(\theta^*))X_n+\eta_n^{\frac{1}{\alpha}}b_1(\eta_n^{-1})M(\theta_n,\xii_{n+1})+\eta_n^{\frac{1}{\alpha}}b_1(\eta_n^{-1})R_n\right).
\end{align*}
By Lemma~\ref{lem: eta_expansion}, we have:
\begin{align}
\label{eq: Z}
    X_{n+1}=(1+\OM(n^{-1}))\left((I-\eta_n \nabla^2\ell(\theta^*))X_n+\eta_n^{\frac{1}{\alpha}}b_1(\eta_n^{-1})M(\theta_n,\xii_{n+1})+\eta_n^{\frac{1}{\alpha}}b_1(\eta_n^{-1})R_n\right).
\end{align}
Thus, we construct a similar sequence $\{\widetilde{X}_n\}_{n\ge1}$ satisfying:
\begin{align}
\label{eq: tilde_Z}
    \widetilde{X}_{n+1}=(I-\eta_n \nabla^2\ell(\theta^*))\widetilde{X}_n+\eta_n^{\frac{1}{\alpha}}b_1(\eta_n^{-1})M(\theta^*,\xii_{n+1}).
\end{align}
The following lemma makes sure weakly convergence of $\{\widetilde{X}_n\}_{n\ge1}$ is the same with weakly convergence of $\{X_n\}_{n\ge1}$.

\begin{lemma}
\label{lem: X_equi}
    If the sequence $\widetilde{X}_n$ in~\eqref{eq: tilde_Z} weakly converges to $Z$, then $X_n$ in~\eqref{eq: Z} also weakly converges to $Z$.
\end{lemma}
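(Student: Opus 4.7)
I will show the stronger statement that $D_n := X_n - \widetilde X_n \to 0$ in probability under the natural coupling in which both sequences share the noise $\{\xii_n\}$ and the initial state; by Slutsky's theorem this, together with the hypothesis $\widetilde X_n \Rightarrow Z$, yields $X_n \Rightarrow Z$.

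\textbf{Recursion and error terms.} Subtracting \eqref{eq: tilde_Z} from \eqref{eq: Z}, and writing $\epsilon_n := \OM(n^{-1})$ for the multiplicative error of Lemma~\ref{lem: eta_expansion}, I obtain
\begin{align*}
    D_{n+1} = (I-\eta_n\nabla^2\ell(\theta^*))D_n + E_n^{(1)} + E_n^{(2)} + E_n^{(3)} + E_n^{(4)},
\end{align*}
where $E_n^{(1)} = \epsilon_n(I-\eta_n\nabla^2\ell(\theta^*))X_n$ (scaling mismatch on $X_n$), $E_n^{(2)} = (1+\epsilon_n)\eta_n^{1/\alpha}b_1(\eta_n^{-1})(M(\theta_n,\xii_{n+1}) - M(\theta^*,\xii_{n+1}))$ (parameter-dependence of the noise), $E_n^{(3)} = \epsilon_n\eta_n^{1/\alpha}b_1(\eta_n^{-1})M(\theta^*,\xii_{n+1})$ (scaling mismatch on the frozen noise), and $E_n^{(4)} = (1+\epsilon_n)\eta_n^{1/\alpha}b_1(\eta_n^{-1})R_n$ (Taylor remainder).

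\textbf{Bounding the error terms.} Fix an exponent $p \in (2\alpha/(1+\alpha),\alpha)$; this range is non-empty since $\alpha > 1$. Using Assumption~\ref{asmp: g_lip} and Lemma~\ref{lem: theta_converge}, $\EB\|E_n^{(2)}\|^p \lesssim \eta_n^{p/\alpha+p-1}b_1(\eta_n^{-1})^p$, which is $o(\eta_n)$ up to slowly varying factors by the choice of $p$. Since $M(\theta^*,\xii)$ has finite $p$-th moment, $\EB\|E_n^{(3)}\|^p \lesssim n^{-p}\eta_n^{p/\alpha}b_1(\eta_n^{-1})^p$. Combining Assumption~\ref{asmp: q_expansion} with Lemma~\ref{lem: theta_converge}, $\EB\|E_n^{(4)}\|^p \lesssim \eta_n^{p/\alpha+p(q-1)}b_1(\eta_n^{-1})^p$, and this is $o(\eta_n)$ precisely because $q > 2 - 1/\alpha$. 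The term $E_n^{(1)}$ is more delicate, because $X_n$ is only tight (and need not have a $p$-th moment bounded uniformly in $n$); I would handle its iterated contribution by a truncation argument, splitting on $\{\|X_k\| \le M\}$ versus its complement and letting $M \to \infty$ after passing to the $n\to\infty$ limit.

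\textbf{Closing the argument and main obstacle.} Taking $p$-th moments of the recursion, invoking the contraction $\|(I-\eta_n\nabla^2\ell(\theta^*))v\| \le (1 - c\eta_n)\|v\|$ in the appropriate $\ell_p$-type norm (which follows from the uniform $p$-positive definiteness in Assumption~\ref{asmp: ppd}), and applying Lemma~\ref{lem: moment} to the conditionally centered parts $E_n^{(2)}$ and $E_n^{(3)}$, I arrive at
\begin{align*}
    \EB\|D_{n+1}\|^p \le (1-c'\eta_n)\EB\|D_n\|^p + C\eta_n^{1+\delta}
\end{align*}
for some $c',\delta>0$ (after the truncation). Lemma~\ref{lem: recursion} then yields $\EB\|D_n\|^p \to 0$, hence $D_n \to 0$ in probability. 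The hard part is the handling of $E_n^{(1)}$: the truncation-plus-tightness remedy is standard in stochastic approximation, but must be interleaved carefully with the recursion on $D_n$, and this is also where the joint calibration of the exponents $p$, $q$, and $\delta$ becomes the most delicate bookkeeping in the proof.
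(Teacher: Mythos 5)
Your overall plan -- couple the two recursions on the same noise, show $D_n := X_n - \widetilde X_n \to 0$ in probability, conclude by Slutsky -- is exactly the paper's, and your error decomposition is arithmetically equivalent to theirs. The bounds you state for $E_n^{(2)}$ (the noise-difference martingale term) and $E_n^{(4)}$ (the Taylor remainder) match, in spirit, the paper's treatment of its $\Delta_n^{(1)}$ and $\Delta_n^{(2)}$; those steps are fine.

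Where you diverge is in the handling of the $\OM(n^{-1})$-scaled pieces, and this is precisely where the paper is slicker and where your proposed detour is unnecessary. The paper observes that $E_n^{(1)} + E_n^{(3)}$ together with the $\OM(n^{-1})$-parts of $E_n^{(2)}$ and $E_n^{(4)}$ equal
\begin{equation*}
\OM(n^{-1})\Bigl[(I-\eta_n\nabla^2\ell(\theta^*))X_n + \eta_n^{1/\alpha}b_1(\eta_n^{-1})M(\theta_n,\xii_{n+1}) + \eta_n^{1/\alpha}b_1(\eta_n^{-1})R_n\Bigr]
= \OM(n^{-1})\,\eta_n^{\frac{1}{\alpha}-1}b_1(\eta_n^{-1})(\theta_{n+1}-\theta^*),
\end{equation*}
because that bracket is exactly $X_{n+1}/(1+\OM(n^{-1}))$ by Eqn~\eqref{eq: Z}. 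The paper collects this as a single term $\Delta_n^{(3)}$ and then bounds it \emph{in first moment} using Lemma~\ref{lem: theta_converge}: $\EB\|\theta_{n+1}-\theta^*\| = \OM(\eta_{n+1}^{1-1/p})$, which gives a forcing term $\OM(\eta_n^{1/\rho + 1/\alpha - 1/p} b_1(\eta_n^{-1}))$ with exponent exceeding $1$ whenever $\alpha > p > 1/(\tfrac{1}{\rho}+\tfrac{1}{\alpha}-1)$. Lemma~\ref{lem: recursion} then closes the argument. No $p$-th moment of $X_n$, and a fortiori no truncation, is needed.

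The ``main obstacle'' you flag for $E_n^{(1)}$ is in fact not an obstacle. You are right that $\EB\|X_n\|^p$ is not uniformly bounded -- from Lemma~\ref{lem: theta_converge} one gets $\EB\|X_n\|^p = \OM(\eta_n^{p/\alpha - 1}b_1(\eta_n^{-1})^p)$, which diverges since $p<\alpha$ -- but the $\epsilon_n^p = \OM(\eta_n^{p/\rho})$ factor in front of $E_n^{(1)}$ more than compensates: the product is $\OM(\eta_n^{p/\rho + p/\alpha - 1}b_1(\eta_n^{-1})^p)$, and since $\alpha>\rho$ one can choose $p$ close to $\alpha$ so that $p(\tfrac{1}{\rho}+\tfrac{1}{\alpha})>2$, making the exponent strictly greater than $1$. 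So a direct moment bound works; the truncation argument you sketch, while standard, would only complicate the bookkeeping without being necessary. If you want to avoid $p$-th moments of $X_n$ altogether, do what the paper does and bound that whole cluster of terms through $\theta_{n+1}-\theta^*$ and a first moment.
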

\begin{proof}
    We denote $\Delta_n=X_n-\widetilde{X}_n$, it satisfies:
    \begin{align}
    \label{eq: Delta_n}
        \Delta_{n+1} = &(I-\eta_n\nabla^2\ell(\theta^*))\Delta_n+\eta_n^{\frac{1}{\alpha}}b_1(\eta_n^{-1})\left(M(\theta_n,\xii_{n+1})-M(\theta^*,\xii_{n+1})\right)+\eta_n^{\frac{1}{\alpha}}b_1(\eta_n^{-1})R_n\\
        &+\OM(n^{-1})\left((I-\eta_n \nabla^2\ell(\theta^*))X_n+\eta_n^{\frac{1}{\alpha}}b_1(\eta_n^{-1})M(\theta_n,\xii_{n+1})+\eta_n^{\frac{1}{\alpha}}b_1(\eta_n^{-1})R_n\right)\notag\\
        =&(I-\eta_n\nabla^2\ell(\theta^*))\Delta_n+\eta_n^{\frac{1}{\alpha}}b_1(\eta_n^{-1})\left(M(\theta_n,\xii_{n+1})-M(\theta^*,\xii_{n+1})\right)+\eta_n^{\frac{1}{\alpha}}b_1(\eta_n^{-1})R_n\notag\\
        &+\OM(n^{-1})\eta_{n}^{\frac{1}{\alpha}-1}b_1(\eta_{n}^{-1})(\theta_{n+1}-\theta^*).\notag
    \end{align}
    Thus, we decompose $\Delta_n=\Delta_n^{(1)}+\Delta_n^{(2)}+\Delta_n^{(3)}$, where:
    \begin{align*}
        \Delta_{n+1}^{(1)} = &(I-\eta_n\nabla^2\ell(\theta^*))\Delta_n^{(1)}+\eta_n^{\frac{1}{\alpha}}b_1(\eta_n^{-1})\left(M(\theta_n,\xii_{n+1})-M(\theta^*,\xii_{n+1})\right),\\
        \Delta_{n+1}^{(2)} = &(I-\eta_n\nabla^2\ell(\theta^*))\Delta_n^{(2)}+\eta_n^{\frac{1}{\alpha}}b_1(\eta_n^{-1})R_n,\\
        \Delta_{n+1}^{(3)} = &(I-\eta_n\nabla^2\ell(\theta^*))\Delta_n^{(3)}+\OM(n^{-1})\eta_{n}^{\frac{1}{\alpha}-1}b_1(\eta_{n}^{-1})(\theta_{n+1}-\theta^*).
    \end{align*}
    For the $\Delta_n^{(1)}$, noticing that $\EB[M(\theta_n,\xii_{n+1})-M(\theta^*,\xii_{n+1})|\FM_n]=0$ and $\xii_{n+1}$ is independent with $\FM_n$. Thus, for any $\frac{2\alpha}{\alpha+1}<p<\alpha$ we have:
    \begin{align*}
        \EB\|\Delta_{n+1}^{(1)}\|^p&\le \|I-\eta_n\nabla^2\ell(\theta^*)\|^p\EB\|\Delta_n^{(1)}\|^p+\OM(\eta_n^{\frac{p}{\alpha}}b_1^p(\eta_n^{-1})\EB\|M(\theta_n,\xii_{n+1})-M(\theta^*,\xii_{n+1})\|^p)\\
        &\le (1-L\eta_n)\EB\|\Delta_n^{(1)}\|^p+\OM(\eta_n^{\frac{p}{\alpha}}b_1^p(\eta_n^{-1})\EB\|\theta_n-\theta^*\|^p).\notag
    \end{align*}
    By Lemma~\ref{lem: theta_converge}, we have $\EB\|\theta_{n+1}-\theta^*\|^p=\OM(\eta_{n+1}^{p-1})$. As $\frac{2\alpha}{\alpha+1}<p<\alpha$, we have:
    \begin{align*}
        \EB\|\Delta_{n+1}^{(1)}\|^p&\le (1-L\eta_n)\EB\|\Delta_n^{(1)}\|^p+\OM(\eta_n^{\frac{p}{\alpha}+p-1}b_1^p(\eta_n^{-1}))\\
        &\le(1-L\eta_n)\EB\|\Delta_n^{(1)}\|^p+\eta_n o\left(\eta_n^{\frac{\frac{p}{\alpha}+p-2}{2}}\right).\notag
    \end{align*}
    Thus, by Lemma~\ref{lem: recursion}, we have $\lim_{n\to\infty}\EB\|\Delta_{n}^{(1)}\|^p=0$. Thus, $\Delta_n^{(1)}$ converges to 0 in probability.

    \noindent For the $\Delta_n^{(2)}$, as $R_n=O(\|\theta_n-\theta^*\|^q)$, taking norm and expectation both sides, we have:
    \begin{align*}
        \EB\|\Delta_{n+1}^{(2)}\|\le(1-\sigma_{\min}(\nabla^2\ell(\theta^*))\eta_n)\EB\|\Delta_n^{(2)}\|+\OM(\eta_n^{\frac{1}{\alpha}}b_1(\eta_n^{-1})\EB\|\theta_n-\theta^*\|^q).
    \end{align*}
    By Lemma~\ref{lem: theta_converge}, we have $\EB\|\theta_{n+1}-\theta^*\|^q=\OM(\eta_{n+1}^{q-1})$ as $q<\alpha$. Then,
    \begin{align*}
        \EB\|\Delta_{n+1}^{(2)}\|\le(1-\sigma_{\min}(\nabla^2\ell(\theta^*))\eta_n)\EB\|\Delta_n^{(2)}\|+\OM(\eta_n^{\frac{1}{\alpha}}b_1(\eta_n^{-1})\eta_n^{q-1}).
    \end{align*}
    Moreover, as $q>2-\frac{1}{\alpha}$, we have:
    \begin{align*}
        \EB\|\Delta_{n+1}^{(2)}\|\le(1-\sigma_{\min}(\nabla^2\ell(\theta^*))\eta_n)\EB\|\Delta_n^{(2)}\|+\eta_n o\left(\eta_n^{\frac{\frac{1}{\alpha}+q-2}{2}}\right).
    \end{align*}
    Thus, by Lemma~\ref{lem: recursion}, we have $\lim_{n\to\infty}\EB\|\Delta_{n}^{(2)}\|=0$. Thus, $\Delta_n^{(2)}$ converges to 0 in probability.
    
    \noindent By Lemma~\ref{lem: theta_converge}, we have $\EB\|\theta_{n+1}-\theta^*\|\le(\EB\|\theta_{n+1}-\theta^*\|^p)^{\frac{1}{p}} =\OM(\eta_{n+1}^{1-\frac{1}{p}})$, where $p\in[1,\alpha)$. Thus, taking norm and expectation both sides in~\eqref{eq: Delta_n}, we have:
    \begin{align*}
        \EB\|\Delta_{n+1}^{(3)}\|\le (1-\sigma_{\min}(\nabla^2\ell(\theta^*))\eta_n)\EB\|\Delta_n^{(3)}\|+\OM\left(\eta_n^{\frac{1}{\rho}+\frac{1}{\alpha}-\frac{1}{p}}b_1(\eta_n^{-1})\right).
    \end{align*}
    As $\rho<1$, we have $\frac{1}{\rho}+\frac{1}{\alpha}-\frac{1}{p}-1>0$ as long as $\alpha>p>\frac{1}{\frac{1}{\rho}+\frac{1}{\alpha}-1}$. Thus, we have:
    \begin{align*}
        \EB\|\Delta_{n+1}^{(3)}\|\le (1-\sigma_{\min}(\nabla^2\ell(\theta^*))\eta_n)\EB\|\Delta_n^{(3)}\|+\eta_n o\left(\eta_n^{\frac{\frac{1}{\rho}+\frac{1}{\alpha}-\frac{1}{p}-1}{2}}\right).
    \end{align*}
    Then, by Lemma~\ref{lem: recursion}, we have $\lim_{n\to\infty}\EB\|\Delta_{n}^{(3)}\|=0$. Thus, $\Delta_n^{(3)}$ converges to 0 in probability. Finally, as $\Delta_n^{(1)}$, $\Delta_n^{(2)}$, and $\Delta_n^{(3)}$ all converges to 0 in probability, we have $\Delta_n$ converges to 0 in probability.
\end{proof}
Thus, we only need to study the convergence of $\{\widetilde{X}_n\}_{n\ge 1}$. In our conjecture, $X_n$ will weakly converge to a stationary distribution of s.d.e. $dZ_t = -\nabla^2\ell(\theta^*)Z_t dt+dL_t$, where $L_t$ is a $\cadlag$ stochastic process. To translate the discrete sequence to a continuous time, we introduce a new $\cadlag$ stochastic process $(Y_t)_{t\ge0}$ such that:
\begin{align}
    \label{eq: Y_t}
    Y_{t} = \widetilde{X}_n-(t-s_{n-1})\nabla^2\ell(\theta^*)\widetilde{X}_n, \text{ for $t\in[s_{n-1},s_n)$},
\end{align}
where $s_n=\sum_{k=0}^{n}\eta_k$ and $s_{-1}:=0$. The motivation is to interpolate the points $\{\widetilde{X}_n\}_{n\ge1}$ with a jump in Figure~\ref{fig: sto_process}. And we denote a family of stochastic process $(Y^{u}_{\cdot})_{u\ge0}$ where $Y_t^u:=Y_{t+u}$. 
\begin{figure}[htbp!]
    \centering
    \includegraphics[width=0.8\textwidth]{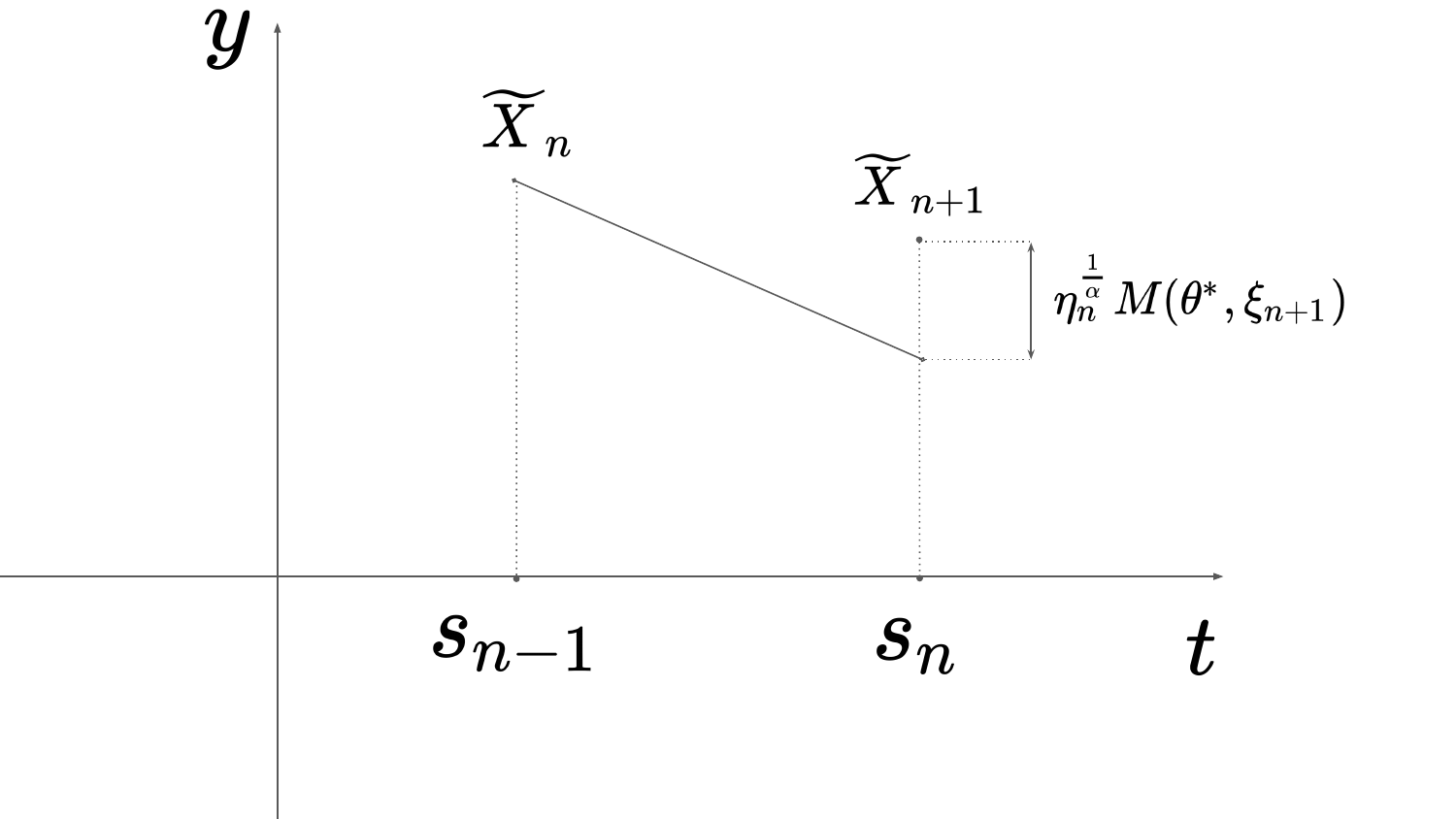}
    \caption{Illustration of stochastic process $(Y_t)_{t\ge0}$.}
    \label{fig: sto_process}
\end{figure}

\noindent For any $t\in[s_{n-1},s_n)$, we can further decompose the $Y_t$ into:
\begin{align*}
    Y_t = \widetilde{X}_0+H_t+M_t,
\end{align*}
where $H_t=H_{s_{n-1}}-(t-s_{n-1})\nabla^2\ell(\theta^*)\widetilde{X}_n$ is an iterative representation of $H_{\cdot}$ ($H_0=0$), and $M_t= \sum_{k=0}^{n-1}\eta_k^{\frac{1}{\alpha}}b_1(\eta_k^{-1})M(\theta^*,\xii_{k+1})$. 
And we denote a family of stochastic process $(Y^{u}_{\cdot})_{u\ge0}$ where $Y_t^u:=Y_{t+u}$. For any $u,t$, we have:
\begin{align*}
    Y_t^u-Y_0^u = H_{t+u}-H_u+M_{t+u}-M_u:=H_t^u-H_0^u+M_t^u-M_0^u.
\end{align*}
Then, we study the approximation of $H_t^u-H_0^u$ and $M_t^u-M_0^u$ separately.
\begin{lemma}
\label{lem: H_Y}
    For any $T$, we have:
    \begin{align*}
        \lim_{u\to\infty}\EB\sup_{t\le T}\left\|H_{t}^u-H_0^u+\int_0^T\nabla^2\ell(\theta^*)Y_s^uds\right\|=0.
    \end{align*}
\end{lemma}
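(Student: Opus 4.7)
The plan is to exploit the fact that both $H_s$ and $Y_s$ are piecewise affine on each mesh interval $[s_{k-1}, s_k)$, so the telescoping of $H$ on the mesh and the Riemann integral of $\nabla^2\ell(\theta^*) Y_\cdot$ cancel to leading order and disagree only by a quadratic-in-$\eta_k$ residual. Summing these residuals over the $O(T/\eta_{k_u})$ intervals contained in $[u, u+T]$ (where $k_u$ denotes the mesh index with $s_{k_u} \approx u$) produces an overall prefactor of $T\eta_{k_u}$ in front of $\EB\|\widetilde{X}_k\|$; the remaining task is a moment estimate ensuring that this product vanishes as $u \to \infty$. I read the integrand in the statement as $\int_0^t$ rather than $\int_0^T$, since this is the natural drift-increment pairing.

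The key structural identity comes from the affine forms $Y_s = \widetilde{X}_k - (s - s_{k-1}) \nabla^2\ell(\theta^*) \widetilde{X}_k$ and $H_s - H_{s_{k-1}} = -(s - s_{k-1}) \nabla^2\ell(\theta^*) \widetilde{X}_k$ on $[s_{k-1}, s_k)$. A direct integration yields
\begin{align*}
(H_{s_k} - H_{s_{k-1}}) + \int_{s_{k-1}}^{s_k} \nabla^2\ell(\theta^*) Y_s \, ds = -\tfrac{\eta_k^2}{2} \bigl(\nabla^2\ell(\theta^*)\bigr)^2 \widetilde{X}_k .
\end{align*}
Telescoping this identity over the intervals in $[u, u+t]$, and treating the two partial intervals at the endpoints by the same computation (each contributing at most $O(\eta_k^2 \|\widetilde{X}_k\|)$), I obtain the uniform-in-$t$ bound
\begin{align*}
\sup_{t \le T} \Bigl\| H_t^u - H_0^u + \int_0^t \nabla^2\ell(\theta^*) Y_s^u \, ds \Bigr\| \,\le\, C \sum_{k \,:\, s_{k-1} \in [u, u+T]} \eta_k^2\, \|\widetilde{X}_k\|,
\end{align*}
with $C$ depending only on the operator norm of $\nabla^2\ell(\theta^*)$.

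Taking expectations, I would invoke Lemma~\ref{lem: X_equi} to replace $\widetilde{X}_k$ by $X_k$ up to an $o(1)$ term, and then control $\EB\|X_k\| = \eta_k^{1/\alpha - 1} b_1(\eta_k^{-1})\, \EB\|\theta_k - \theta^*\|$. Combining Lemma~\ref{lem: theta_converge} with Jensen's inequality gives, for any $p \in (1, \alpha)$, $\EB\|\theta_k - \theta^*\| \le (\EB\|\theta_k - \theta^*\|^p)^{1/p} = O(\eta_k^{1 - 1/p})$, hence $\EB\|\widetilde{X}_k\| = O(\eta_k^{1/\alpha - 1/p} b_1(\eta_k^{-1}))$. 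Since the mesh contains $O(T/\eta_{k_u})$ points in $[u, u+T]$, the expectation of the sum above is $O\bigl(T \eta_{k_u}^{1 + 1/\alpha - 1/p} b_1(\eta_{k_u}^{-1})\bigr)$, which tends to zero provided $p$ is chosen close enough to $\alpha$ that $\beta := 1 + \tfrac{1}{\alpha} - \tfrac{1}{p} > 0$; any $p \in (\alpha/(\alpha+1), \alpha)$ works. The main obstacle is exactly this moment control: no uniform bound on $\EB\|\widetilde{X}_k\|$ is available in the infinite-variance regime, so one must balance the $\eta_k$-factor from the interval length against the $\eta_k^{1/\alpha - 1}$ blow-up from the scaling, and it is precisely the freedom to pick $p$ strictly less than $\alpha$ that makes this work.
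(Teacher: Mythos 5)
Your proof is correct, and the structural first half is the same as the paper's: the displayed integral should indeed read $\int_0^t$ rather than $\int_0^T$ (the paper's own proof works with $\int_u^{u+t}$), and both you and the paper reduce the problem to $\sup_{t\le T}\|\cdots\| = \OM\bigl(\sum_{i:\, s_i^u\in(0,T]} \eta_i^2\|\widetilde{X}_i\|\bigr)$ via the piecewise-affine relation $\widetilde{X}_s - Y_s = (s-s_{i-1})\nabla^2\ell(\theta^*)\widetilde{X}_i$ on each mesh interval. Where you diverge is the moment bound on $\EB\|\widetilde{X}_i\|$. The paper stays entirely inside the $\widetilde{X}$-recursion: it iterates $\eta_{n+1}^2\EB\|\widetilde{X}_{n+1}\|\le(1-L\eta_n)\eta_n^2\EB\|\widetilde{X}_n\| + \OM(\eta_n^{\frac{1}{\alpha}+2}b_1(\eta_n^{-1}))$ and invokes Lemma~\ref{lem: recursion} to get $\eta_n^2\EB\|\widetilde{X}_n\| = o(\eta_n^{\frac{1}{2\alpha}+1})$, then sums using $\sum\eta_i=T$. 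You instead route through $X_n$: bound $\EB\|\theta_k-\theta^*\|\le(\EB\|\theta_k-\theta^*\|^p)^{1/p}=\OM(\eta_k^{1-1/p})$ via Lemma~\ref{lem: theta_converge} and Jensen, yielding $\EB\|X_k\| = \OM(\eta_k^{1/\alpha - 1/p}b_1(\eta_k^{-1}))$, and then transfer to $\widetilde{X}_k$. Both estimates close the gap; with $p$ near $\alpha$ yours even gives a sharper exponent. The tradeoff is that the paper's route is self-contained in the $\widetilde{X}$-dynamics (which only sees $M(\theta^*,\xii)$, a genuinely i.i.d.\ martingale difference), while yours depends on the convergence-rate result for the original iterates.

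One imprecision to fix: you ``invoke Lemma~\ref{lem: X_equi}'' to transfer from $X_k$ to $\widetilde{X}_k$, but that lemma's \emph{statement} only asserts weak convergence transfers, not moment bounds. What you actually need is the content of its \emph{proof} — namely that $\EB\|\Delta_n^{(1)}\|^p\to0$ (hence $\EB\|\Delta_n^{(1)}\|\to0$ by Jensen), $\EB\|\Delta_n^{(2)}\|\to0$, and $\EB\|\Delta_n^{(3)}\|\to0$, so $\EB\|X_n-\widetilde{X}_n\|\to0$. Since $\eta_k^{1/\alpha - 1/p}b_1(\eta_k^{-1})\to\infty$, this $o(1)$ correction is absorbed and $\EB\|\widetilde{X}_k\|=\OM(\eta_k^{1/\alpha - 1/p}b_1(\eta_k^{-1}))$ does follow, but you should cite the expectation estimates inside the proof of Lemma~\ref{lem: X_equi} rather than the lemma as stated. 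Alternatively, cut this dependency entirely by bounding $\EB\|\widetilde{X}_k\|$ directly from its own recursion as the paper does — $\widetilde{X}_n$ is driven by the fixed-parameter noise $M(\theta^*,\xii_{n+1})$, whose first moment is finite since $\alpha>1$, so that route is both simpler and avoids potential concerns about which lemma depends on which.
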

\begin{proof}
    We denote $s_n^u=s_n-u$. Indeed, we have:
    \begin{align*}
        H_t^u-H_0^u+\int_{0}^t\nabla^2\ell(\theta^*)Y_s^uds=- \int_{u}^{u+t}\nabla h(\theta^*)\left(\widetilde{X}_s-Y_s\right)ds,
    \end{align*}
    where $\widetilde{X}_s = \widetilde{X}_n$ when $s\in[s_{n-1}^u,s^u_{n})$. Thus, we have:
    \begin{align*}
        &\sup_{t\le T}\left\|H_t^u-H_0^u+\int_{0}^t\nabla^2\ell(\theta^*)Y_s^uds\right\|\\
        \le&\|\nabla h(\theta^*)\|\int_{u}^{u+T}\left\|\widetilde{X}_s-Y_s\right\|ds\notag\\
        =&\|\nabla^2\ell(\theta^*)\|\left(\sum_{m\le i\le n-1}\int_{s_i}^{s_{i+1}}\left\|\widetilde{X}_s-Y_s\right\|ds+\int_{u}^{s_m}\left\|\widetilde{X}_s-Y_s\right\|ds+\int_{s_{n-1}}^{u+t}\left\|\widetilde{X}_s-Y_s\right\|ds\right)\notag
    \end{align*}
    where $m,n$ satisfies $u\in[s_{m-1},s_m)$ and $u+T\in[s_{n-1},s_n)$. By definition of $Y$, for $s\in[s_{i-1},s_i)$, we have:
    \begin{align*}
        \widetilde{X}_{s}-Y_s= (s-s_{i-1})\nabla^2\ell(\theta^*)\widetilde{X}_{i}.
    \end{align*}
    Thus, we have:
    \begin{align*}
        \EB\sup_{t\le T}\left\|H_t^u-\int_{0}^t\nabla h(\theta^*)Y_s^uds\right\|=\OM\left(\sum_{i:0<s_i^u\le T}\eta_i^2\EB\|\widetilde{X}_i\|\right).
    \end{align*}
    By~\eqref{eq: tilde_Z}, we have:
    \begin{align*}
        \eta_{n+1}^{2}\EB\|\widetilde{X}_{n+1}\|\le(1-L\eta_n)\eta_n^{2}\EB\|\widetilde{X}_n\|+\OM\left(\eta_n^{\frac{1}{\alpha}+2}b_1(\eta_n^{-1})\right).
    \end{align*}
    Thus, we have:
    \begin{align*}
        \eta_n^2\EB\|\widetilde{X}_{n}\|\le o\left(\eta_n^{\frac{1}{2\alpha}+1}\right).
    \end{align*}
    Then, we have:
    \begin{align*}
        \EB\sup_{t\le T}\left\|H_t^u-H_0^u-\int_{0}^t\nabla^2\ell(\theta^*)Y_s^uds\right\|=o\left(\sum_{i:0<s_i^u\le T}\eta_i^{\frac{1}{2\alpha}+1}\right)=o\left(\eta_m^{\frac{1}{2\alpha}}\right),
    \end{align*}
    where the last step we use the fact that $\sum_{i:0<s_i^u\le T}\eta_i=T$. Letting $u\to\infty$, then $m\to\infty$, and our result is concluded.
\end{proof}
By Lemma~\ref{lem: H_Y}, we can study the following stochastic process directly:
\begin{align*}
    \widetilde{Y}_t^u= \widetilde{Y}_0^u-\int_0^t\nabla^2\ell(\theta^*) \widetilde{Y}_s^uds+M_t^u-M_0^u.
\end{align*}
We also have $\widetilde{Y}_t^u=\widetilde{Y}_{t+u}$. In fact, we have:
\begin{align*}
    Y_t^u-\widetilde{Y}_t^u &= -\int_0^t\nabla^2\ell(\theta^*)(Y_s^u-\widetilde{Y}_s^u)ds+H_t^u-H_0^u+\int_0^t\nabla^2\ell(\theta^*)Y_s^uds\\
    &:=-\int_0^t\nabla^2\ell(\theta^*)(Y_s^u-\widetilde{Y}_s^u)ds+\delta_t^u.\notag
\end{align*}
Taking norm and expectation both sides, we have:
\begin{align*}
    \EB\|Y_t^u-\widetilde{Y}_t^u\|\le C\int_0^t\EB\|Y_s^u-\widetilde{Y}_s^u\|ds+\|\delta_t^u\|.
\end{align*}
By a similar argument in~\eqref{eq: ode_error} and Lemma~\ref{lem: H_Y}, we have $Y_t^u-\widetilde{Y}_t^u$ converges to 0 in probability. Next lemma provides the weak convergence of $M_t^u-M_0^u$.
\begin{lemma}
        \label{lem: Mu_L\'evy}
    $(M^u_{\cdot}-M_0^u)_{u\ge0}$ weakly converges to a L\'evy process $L_t$ with characteristics $(0,\nu(\theta^*,\cdot),-\gamma)$ satisfying:
    \begin{align*}
        &\gamma = \int_{\|x\|>1}x\nu(\theta^*,dx).
    \end{align*}
\end{lemma}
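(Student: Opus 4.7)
The plan is to mimic the point-process approach used in the proof of Lemma~\ref{lem: additive_converge}, now exploiting the fact that the noise $M(\theta^*,\xii_{k+1})$ is evaluated at the fixed point $\theta^*$ rather than along a moving trajectory. This time-homogeneity of the noise is precisely what upgrades the limit from an inhomogeneous additive process to a homogeneous L\'evy process. First I would define, for each $u>0$, the shifted point process
\begin{align*}
N^u=\sum_{k:s_{k-1}\ge u}\delta_{(s_{k-1}-u,\,\eta_k^{\frac{1}{\alpha}}b_1(\eta_k^{-1})M(\theta^*,\xii_{k+1}))}
\end{align*}
on $[0,T]\times\bar{\RB}^d_{\varepsilon}$, and compute its Laplace functional using the same $x-1-(x-1)^2/x\le\ln x\le x-1$ bound as before.

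By the choice of $b_1$ in Assumption~\ref{asmp: heavy_tail} applied at $\theta^*$, we have $\eta_k^{-1}\PB(\eta_k^{\frac{1}{\alpha}}b_1(\eta_k^{-1})\nabla\ell(\theta^*,\xii)\in\cdot)\overset{v}{\to}\nu(\theta^*,\cdot)$, so the Laplace exponent becomes
\begin{align*}
-\ln\EB e^{-N^u(f)}=\sum_{k:u\le s_{k-1}<u+T}\eta_k\int(1-e^{-f(s_{k-1}-u,x)})\nu(\theta^*,dx)+o(1).
\end{align*}
Since $s_k-s_{k-1}=\eta_k$, the partition $\{s_{k-1}-u\}_k$ telescopes so that $\sum_{k:u\le s_{k-1}<u+T}\eta_k=T+o(1)$; combined with the step-ratio estimate $\eta_{n+1}/\eta_n=1+o(1)$ from Lemma~\ref{lem: eta_expansion} and the slow variation of $b_1$, the mesh of the partition vanishes as $u\to\infty$, and a Riemann-sum argument yields convergence of the right-hand side to $\int_0^T\int(1-e^{-f(s,x)})\nu(\theta^*,dx)ds$. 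Thus $N^u$ converges vaguely to a Poisson point process with intensity $ds\otimes\nu(\theta^*,dx)$, and the almost surely continuous summation functional gives weak convergence (in $J_1$) of the $\varepsilon$-truncated sums minus the compensator over $\{\varepsilon<\|x\|\le 1\}$ to a L\'evy process with jump measure $\nu(\theta^*,\cdot)\mathbf{1}_{\{\|x\|>\varepsilon\}}$.

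The truncation at $\varepsilon$ is removed exactly as in Lemma~\ref{lem: additive_converge}: a second-moment bound together with Assumption~\ref{asmp: heavy_tail} shows the small-jump remainder is uniformly negligible as $\varepsilon\to 0$, and the second converging-together theorem (Theorem~3.5 in \cite{resnick2007heavy}) delivers weak convergence of the untruncated centered sum. The remaining centering of the large jumps, $\sum_{k:u\le s_{k-1}<u+t}\EB[\eta_k^{\frac{1}{\alpha}}b_1(\eta_k^{-1})M(\theta^*,\xii_{k+1})\1(\|\cdot\|>1)]$, is handled by the uniform integrability in Lemma~\ref{lem: unif_integral} (applied at $\theta^*$) combined with the same Riemann-sum argument, producing the limit $t\int_{\|x\|>1}x\,\nu(\theta^*,dx)=t\gamma$. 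The L\'evy--Khintchine normalization with truncation at $\1(\|x\|<1)$ then identifies the characteristic triple as $(0,\nu(\theta^*,\cdot),-\gamma)$.

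The main obstacle will be verifying the Riemann-sum convergence uniformly enough to interchange with the limit inside the Laplace functional. Unlike in Lemma~\ref{lem: additive_converge}, the partition mesh is not a constant $\eta$ but the variable sequence $\eta_k$, and one must check that $\max_{k:u\le s_{k-1}<u+T}\eta_k\to 0$ with uniform control of the residuals as $u\to\infty$; this again rests on Lemma~\ref{lem: eta_expansion} and the slow variation of $b_1$. Everything else is a faithful translation of the Resnick-style argument already executed in the constant-step case, so no genuinely new probabilistic ingredient is required beyond the time-translation accounting.
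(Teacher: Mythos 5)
Your proposal is correct and follows essentially the same route as the paper: a shifted point process, the Laplace-functional computation via the $\log$-bound, a Riemann-sum argument exploiting $s_n - s_{n-1} = \eta_n$ and the vague convergence of the scaled law of $\nabla\ell(\theta^*,\xii)$, the continuous summation functional, truncation removal via the second converging-together theorem, and identification of the drift via Lemma~\ref{lem: unif_integral}. The paper compresses most of these steps by referring back to the proof of Lemma~\ref{lem: additive_converge}; your version simply makes explicit what is there left implicit, including the correct observation that the variable mesh $\eta_k$ must be controlled via $\max_{k:u\le s_{k-1}<u+T}\eta_k\to 0$ as $u\to\infty$.
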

\begin{proof}
    Indeed, we can write $M^u_t-M_0^u$ as:
    \begin{align*}
        M^u_t-M_0^u = \sum_{n: 0 <s_{n}^u\le t} \eta_n^{\frac{1}{\alpha}}b_1(\eta_n^{-1})M(\theta^*,\xii_{n+1}),
    \end{align*}
    where $M(\theta^*,\xii_{n})$ is a sequence of i.i.d. random variables and $s_n^u:=s_n-u$. For any fixed $u$, $M^u_{\cdot}$ is a martingale. We restrict our analysis on $[0,T]$. We define a point process as:
    \begin{align*}
        N_u(dt,dx)=\sum_{s_n^u>0}\delta_{s_n^u,\eta_n^{\frac{1}{\alpha}}b_1(\eta_n^{-1})\nabla\ell(\theta^*,\xii_{n+1})}(dt,dx).
    \end{align*}
    For any bounded function $f$ with compact support on $[0,T]\times \RB^d$, by a similar argument in the proof of Lemma~\ref{lem: additive_converge}, the laplacian functional of $N_u$ is:
    \begin{align*}
        -\log \EB\exp(-N_u(f))&{\rv =}\sum_{0<s_n^u\le T}\eta_n\int\left(1-\exp\left(-f(s_n^u,x)\right)\right)\mu_n(\theta^*,dx) {\rv +o(1)},\\
        &=\sum_{0<s_n^u\le T}(s_n^u-s_{n-1}^u)\int\left(1-\exp\left(-f(s_n^u,x)\right)\right)\mu_n(\theta^*,dx){\rv +o(1)}\notag\\
        &\rightarrow \int_{0}^T\int\left(1-\exp\left(-f(t,x)\right)\right)\nu(\theta^*,dx)dt.\notag
    \end{align*}
    where $\mu_n(\theta^*,dx)=\eta_n^{-1}\PB(\eta_n^{\frac{1}{\alpha}}b_1(\eta_n^{-1})\nabla\ell(\theta^*,\xii)\in dx)$.
    Thus, $N_u(\cdot)\Rightarrow N$, where $N$ is a Poisson point process with measure $\nu(\theta^*, dx)dt$. As we have:
    \begin{align*}
        M_T^u-M_0^u=\int_{t\le T}xN_u(dt,dx)-\int_{t\le T}x\EB N_u(dt,dx),
    \end{align*}
    and summation functional is continuous on $J_1$ topology. We conclude $M^u_{\cdot}-M_0^u\Rightarrow L_{\cdot}$, where $L_{\cdot}$ is a L\'evy process with characteristics $(0, \nu(\theta^*,\cdot),-\gamma)$, where
    \begin{align*}
        \gamma = \int_{\|x\|>1}x\mu(\theta^*,dx).
    \end{align*}
\end{proof}
By Lemma~\ref{lem: ito}, we have:
\begin{align*}
    \widetilde{Y}_t^u = \exp(-\nabla^2\ell(\theta^*)t)\left(\widetilde{Y}_0^u+\int_0^t\exp(\nabla^2\ell(\theta^*)s)d(M_s^u-M_0^u)\right).
\end{align*}
We also denote:
\begin{align*}
    Z_t=\exp(-\nabla^2\ell(\theta^*)t)\left(Z_0+\int_0^t\exp(\nabla^2\ell(\theta^*)s)dL_s\right).
\end{align*}
By a similar argument in~\eqref{eq: tight_z_1} and~\eqref{eq: tight_z_2}, given $\widetilde{Y}_0^u=Z_0=x$, we have $\widetilde{Y}_{\cdot}^u\Rightarrow Z_{\cdot}$.
As $\{X_n\}$ is tight, then $\{\widetilde{X}_n\}$ is tight, and then $\{Y_t\}_{t\ge1}$ is tight. By Prohorov’s Theorem, we can consider a closure point $\nu$ of $\widetilde{Y}_u$ such that there exists a sequence $u(n)$ satisfying $\PB^{-1}(\widetilde{Y}_{u(n)})\Rightarrow\nu$. Furthermore, fix a $t$, consider a closure point $\nu_t$ of $\widetilde{Y}_{u(n)-t}$ such that a subsequence $w(n)$ of $u(n)$ satisfying $\widetilde{Y}_{w(n)-t}\Rightarrow \nu_t$. And we denote $\mu$ as the stationary distribution of $Z_t$. Then, for any continuous and bounded function $\psi$, we have:
\begin{align*}
    |\EB_{X\sim\nu}\psi(X)-\EB_{X\sim\mu}\psi(X)|\le|\EB_{X\sim\nu}\psi(X)-\EB\psi(\widetilde{Y}_{w(n)})|+|\EB\psi(\widetilde{Y}_{w(n)})-\EB_{X\sim\mu}\psi(X)|.
\end{align*}
The first term on RHS is arbitrary small as $\PB^{-1}(\widetilde{Y}_{w(n)})\Rightarrow\nu$. By $\mu$ is the stationary distribution of $Z_t$, the second term can be written as:
\begin{align*}
    &|\EB\psi(\widetilde{Y}_{w(n)})-\EB_{X\sim\mu}\psi(X)|\\
    =&|\EB\psi(\widetilde{Y}^{w(n)-t}_t)-\EB_{Z_0\sim\mu}\psi(Z_t)|\notag\\
    \le&|\EB_{\widetilde{Y}^{w(n)-t}_0\sim\nu_t}\psi(\widetilde{Y}^{w(n)-t}_t)-\EB_{Z_0\sim\nu_t}\psi(Z_t)|+|\EB_{Z_0\sim\nu_t}\psi(Z_t)-\EB_{Z_0\sim\mu}\psi(Z_t)|.\notag
\end{align*}
As given $\widetilde{Y}_0^u=Z_0$, we have $\widetilde{Y}_{\cdot}^u\Rightarrow Z_{\cdot}$. We also have $Z_t$ is fast mixing \cite{lee2012exponential}. Thus, let $n,t\to\infty$, we have $\nu=\mu$. Thus, $X_n\Rightarrow\mu$.

\subsection{\rv Proof of Tightness of sequence $\eta_n^{\frac{1}{\alpha}-1}b_1(\eta_n^{-1})(\theta_n-\theta^*)$}
\label{sec: tight}
By Lemma~\ref{lem: X_equi}, we study the following recursion directly.
\begin{align}
\label{eq: X_recur}
    X_{n+1}=(I-\eta_n \nabla^2\ell(\theta^*))X_n+\eta_n^{\frac{1}{\alpha}}b_1(\eta_n^{-1})M(\theta^*,\xii_{n+1}).
\end{align}
Consider the orthogonal transformation $I-\eta_n \nabla^2\ell(\theta^*) = \Gamma (I-\eta\Sigma)\Gamma^{-1}$, where $\Sigma$ is a diagonal matrix and $\Gamma$ is the orthogonal matrix. Thus, we can write~\eqref{eq: X_recur} as:
\begin{align*}
    \Gamma^{-1}X_{n+1}=(I-\eta_n\Sigma)\Gamma^{-1}X_n+\eta_n^{\frac{1}{\alpha}}b_1(\eta_n^{-1})\Gamma^{-1}M(\theta^*,\xii_{n+1})
\end{align*}
Since $\Gamma^{-1}$ is the orthogonal matrix, the asymptotic Radon measure of $\Gamma^{-1}M(\theta^*,\xii_{n+1})$ will becomes:
    \begin{align*}
       x\PB\left(\frac{\Gamma^{-1}\nabla\ell(\theta^*,\xii)}{x^{\frac{1}{\alpha}}b_1(x)}\in\cdot\right)\overset{v}{\to}\nu_{\Gamma}(\theta^*,\cdot),\text{ $x\to+\infty$},
    \end{align*}
    where $\nu_{\Gamma}(\theta,A)=\nu(\theta,\Gamma A)$ for any Borel set $A$.
    Thus, for any $C>0$, we have:
    \begin{align*}
        \left\{\left\|X_{n}\right\|>C\right\}=\left\{\left\|\Gamma^{-1}X_{n}\right\|>C\right\}\subseteq\cup_{k=1}^d\left\{\left|(\Gamma^{-1}X_n)^{(k)}\right|>\frac{C}{d}\right\},
    \end{align*}
    where $x^{(k)}$ is the $k$-th entry of $x$. Thus, we only need to prove the recursion~\eqref{eq: X_recur} is tight in one dimension case. And also, we could maintain the slowly varying function by Lemma~\ref{lem: dominate_sl}. To ease the notation, we use $M_n$ to replace with $M(\theta^*,\xii_{n+1})$.

\begin{lemma}
\label{lem: sym_tight}
    Suppose $\{\varepsilon_n\}$ is an i.i.d. sequence with symmetric distributions satisfying $\PB(\varepsilon_i>x)=b_0(x)x^{-\alpha}$, then the following sequence is tight:
    \begin{align*}
        \widetilde{X}_{n+1}=(1-\eta_n)\widetilde{X}_n+\eta_n^{\frac{1}{\alpha}} b_1(\eta_n^{-1})\varepsilon_n,
    \end{align*}
    where $b_1(t)$ satisfies $\PB(|\varepsilon_n|>t^{\frac{1}{\alpha}}b_1(t)^{-1})=t^{-1}$.
\end{lemma}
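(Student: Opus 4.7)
The plan is to exploit the symmetry of the noise by analyzing the characteristic function of $\widetilde{X}_n$. Unrolling the linear recursion gives the explicit representation
\begin{align*}
\widetilde{X}_n \;=\; \sum_{k=0}^{n-1} a_{n,k}\,\varepsilon_k, \qquad a_{n,k} \;:=\; \eta_k^{1/\alpha} b_1(\eta_k^{-1}) \prod_{j=k+1}^{n-1}(1-\eta_j),
\end{align*}
so, by independence and symmetry of $\{\varepsilon_k\}$, the characteristic function factorizes as $\phi_{\widetilde{X}_n}(t) = \prod_{k=0}^{n-1}\phi_\varepsilon(t a_{n,k})$, with $\phi_\varepsilon$ real-valued and even. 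By L\'evy's continuity theorem, tightness of $\{\widetilde{X}_n\}$ reduces to showing $\lim_{t\to 0}\liminf_n \phi_{\widetilde{X}_n}(t)=1$, or equivalently that $\sum_{k}\bigl(1-\phi_\varepsilon(t a_{n,k})\bigr)$ is bounded uniformly in $n$ for $t$ in a neighborhood of zero.

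First, I would record the classical Tauberian fact that, for a symmetric distribution with $\PB(|\varepsilon|>x) = 2 b_0(x) x^{-\alpha}$ and $\alpha\in(1,2)$,
\begin{align*}
1-\phi_\varepsilon(s) \;\sim\; C_\alpha\, |s|^\alpha\, b_0(|s|^{-1}) \qquad (s\to 0),
\end{align*}
with $C_\alpha=2\int_0^\infty(1-\cos u)\alpha u^{-\alpha-1}du$, and in particular $1-\phi_\varepsilon(s) \le C'|s|^\alpha b_0(|s|^{-1})$ for all small $s$. Then, using $-\log(1-x)\le 2x$ on $[0,1/2]$, it suffices to bound
\begin{align*}
\Sigma_n(t) \;:=\; |t|^\alpha \sum_{k=0}^{n-1} \eta_k\, b_1(\eta_k^{-1})^\alpha\, \Bigl(\prod_{j=k+1}^{n-1}(1-\eta_j)\Bigr)^{\!\alpha} b_0\bigl(|t a_{n,k}|^{-1}\bigr)
\end{align*}
uniformly in $n$ for small $|t|$. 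The normalization $b_1(x)^\alpha b_0(x^{1/\alpha}b_1(x)^{-1})=1$ from Assumption~\ref{asmp: heavy_tail} rewrites $\eta_k b_1(\eta_k^{-1})^\alpha b_0(\eta_k^{-1/\alpha}b_1(\eta_k^{-1})^{-1}) = \eta_k$, so after accounting for the shift from $\eta_k^{-1/\alpha}b_1(\eta_k^{-1})^{-1}$ to $|ta_{n,k}|^{-1}$ via a slowly varying ratio, the sum is essentially geometric.

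The key technical step—and the main obstacle—is controlling the slowly varying factor $b_0(|ta_{n,k}|^{-1})/b_0(\eta_k^{-1/\alpha}b_1(\eta_k^{-1})^{-1})$, because the relative argument $\prod_{j=k+1}^{n-1}(1-\eta_j)$ can be exponentially small in $n-k$. Here I would invoke Potter's bound: for every $\delta>0$ there is $c_\delta<\infty$ such that the above ratio is at most $c_\delta \bigl(\prod_{j=k+1}^{n-1}(1-\eta_j)\bigr)^{-\delta}$ for all large enough $n-k$. Choosing $\delta<\alpha-1$ and absorbing the $-\delta$ exponent into $(1-\eta_j)^\alpha$ leaves a factor $(1-\eta_j)^{\alpha-\delta}\le \exp\bigl(-(\alpha-\delta)\eta_j\bigr)$, so writing $S_m = \sum_{j=0}^m \eta_j$ we arrive at the bound
\begin{align*}
\Sigma_n(t) \;\le\; C\,|t|^\alpha\, b_0(|t|^{-1})\sum_{k=0}^{n-1} \eta_k\, e^{-(\alpha-\delta)(S_{n-1}-S_k)}.
\end{align*}

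The final step is a Riemann-sum comparison: since $\sum_k \eta_k e^{-\beta(S_{n-1}-S_k)}\le \int_0^{S_{n-1}} e^{-\beta(S_{n-1}-s)}ds + o(1) = \beta^{-1}(1-e^{-\beta S_{n-1}})+o(1)$, this is uniformly bounded in $n$ (using $S_n\to\infty$ for $\rho<1$, or a suitable lower bound on $c$ if $\rho=1$). Combined with $|t|^\alpha b_0(|t|^{-1})\to 0$ as $t\to 0$, we obtain $\sup_n\bigl(1-\phi_{\widetilde{X}_n}(t)\bigr)\to 0$, and L\'evy's continuity theorem completes the proof. A separate initial bookkeeping argument handles the finitely many indices $k$ for which $|t a_{n,k}|$ is not yet small enough for the Tauberian asymptotic, using that $|\phi_\varepsilon|\le 1$ so such terms contribute at most $O(1)$ per index and their number is bounded independently of $n$.
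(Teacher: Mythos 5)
Your proposal is correct and takes essentially the same route as the paper: both unroll the recursion, exploit symmetry so the characteristic function is real and factorizes, invoke a Tauberian estimate for $1-\phi_{\varepsilon}(s)$ (the paper phrases this as Pitman's Tauberian theorem applied to $-\log\phi_{\varepsilon}$), apply Potter's bound to handle the slowly varying factor, and conclude by showing $\sum_{i}\omega_{i,n}^{\alpha-\delta}\eta_i=\OM(1)$. The only cosmetic differences are that the paper starts from the Esseen/truncation inequality $\PB(|\widetilde{X}_n|>t)\le 7t\int_0^{1/t}(1-\text{Re}\,\phi_{\widetilde{X}_n}(u))\,du$ plus Karamata's theorem (rather than stating the tightness criterion via equicontinuity of characteristic functions directly, as you do) and closes the weighted sum with the Stolz--Ces\`aro theorem (rather than your Riemann-sum comparison); these are interchangeable.
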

\begin{proof}
The expression of $\widetilde{X}_n$ is:
\begin{align*}
    \widetilde{X}_n = \frac{\sum_{i=1}^{n-1}\prod_{j=1}^i (1-\eta_j\sigma)^{-1}\eta_i^{\frac{1}{\alpha}}b_1(\eta_i^{-1})\varepsilon_i}{\prod_{i=1}^{n-1}(1-\eta_i\sigma)^{-1}}:=\sum_{i=1}^{n-1}\omega_{i,n-1}\eta_i^{\frac{1}{\alpha}}b_1(\eta_i^{-1})\varepsilon_i.
\end{align*}
Then, the tail of $\widetilde{X}_n$ satisfies:
\begin{align*}
    \PB(|\widetilde{X}_n|>t)\le7t\int_0^{1/t}\left(1-\text{Re}(\phi_{\widetilde{X}_n}(u))\right)du,
\end{align*}
where $\phi_X(t)$ is the characteristic function of $X$ {\rv and this inequality follows by 3.2.a in \cite{lin2010probability}}. Moreover, we have:
\begin{align*}
    \log \phi_{\widetilde{X}_n}(t) = \sum_{i=1}^{n-1}\log\phi_{\varepsilon}(\omega_{i,n-1}\eta_i^{\frac{1}{\alpha}}b_1(\eta_i^{-1})t).
\end{align*}
By symmetry of $\varepsilon$, we have:
\begin{align*}
    \PB(|\widetilde{X}_n|>t)&\le\sum_{i=1}^{n-1}7t\int_0^{1/t}-\log\phi_{\varepsilon}(\omega_{i,n-1}\eta_i^{\frac{1}{\alpha}}b_1(\eta_i^{-1})u)du\\
    &\le\sum_{i=1}^{n-1}7Ct\int_0^{1/t}(\omega_{i,n-1}\eta_i^{\frac{1}{\alpha}}b_1(\eta_i^{-1})u)^\alpha b_0\left(\frac{1}{\omega_{i,n}\eta_i^{\frac{1}{\alpha}}b_1(\eta_i^{-1})u}\right)du\notag\\
    &=\sum_{i=1}^{n-1}\frac{7Ct}{\omega_{i,n-1}\eta_i^{\frac{1}{\alpha}}b_1(\eta_i^{-1})}\int_{\frac{t}{\omega_{i,n-1}\eta_i^{\frac{1}{\alpha}}b_1(\eta_i^{-1})}}^{\infty}x^{-(\alpha+2)}b_0(x)dx,\notag
\end{align*}
where in the second inequality, we use the Pitman's Tauberian theorem. By  Karamata’s theorem, we have:
\begin{align}
    \label{eqn: potter}
    \PB(|\widetilde{X}_n|>t)&\le\sum_{i=1}^n 7C\left(\frac{\omega_{i,n-1}\eta_i^{\frac{1}{\alpha}}b_1(\eta_i^{-1})}{t}\right)^{\alpha}b_0\left(\frac{t}{\omega_{i,n-1}\eta_i^{\frac{1}{\alpha}}b_1(\eta_i^{-1})}\right)\\
    &=\frac{7C}{t^\alpha}\sum_{i=1}^n \omega_{i,n-1}^\alpha\eta_i\frac{b_0\left(\frac{t}{\omega_{i,n-1}\eta_i^{\frac{1}{\alpha}}b_1(\eta_i^{-1})}\right)}{b_0\left(\frac{1}{\eta_i^{\frac{1}{\alpha}}b_1(\eta_i^{-1})}\right)}\notag\\
    &\le\frac{7CA}{t^{\alpha-\delta}}\sum_{i=1}^n\omega_{i,n-1}^{\alpha-\delta}\eta_i,\notag
\end{align}
where the last inequality is due to Potter bounds. Thus, if $\delta<\alpha$, by Stolze's theorem, $\sum_{i=1}^{n-1}\omega_{i,n-1}^{\alpha-\delta}\eta_i=\OM(1)$. Then we have $\widetilde{X}_n$ is tight.
\end{proof}
We decompose the random variable $M_n$ in \eqref{eq: X_recur} into $M_n=M_n^+-M_n^-$, where the tails are:
\begin{align*}
    &\PB(M_n^+>x)=b_0^+(x)x^{-\alpha},\\
    &\PB(M_n^->x)=b_0^-(x)x^{-\alpha}.
\end{align*}
We further assume there exists $x_0$, such that $b_0^+(x)\ge b_0^-(x)$ for any $x\ge x_0$. Then we chose $b_1^+(x)$ satisfying $\PB(M_n^+>x^{\frac{1}{\alpha}}b_1^+(x))=x^{-1}$ and $b_1^-(x)$ satisfying $\PB(M_n^->x^{\frac{1}{\alpha}}b_1^-(x))=x^{-1}$, where $b_1^+(x)\le b_1^-(x)$ as $b_0^+(x)\ge b_0^-(x)$. Consider the following recursions:
\begin{align*}
    &Y_{n+1}=(1-\eta_n)Y_n+\eta_n^{\frac{1}{\alpha}}b_1^+(\eta_n^{-1}))\widetilde{M}_n^+=\sum_{i=1}^{n}\omega_{i,n}\eta_i^{\frac{1}{\alpha}}b_1^+(\eta_i^{-1})\widetilde{M}_n^+,\\
    &Z_{n+1}=(1-\eta_n)Z_n+\eta_n^{\frac{1}{\alpha}}b_1^-(\eta_n^{-1}))\widetilde{M}_i^-=\sum_{i=1}^{n}\omega_{i,n}\eta_i^{\frac{1}{\alpha}}b_1^-(\eta_i^{-1})\widetilde{M}_i^-,
\end{align*}
where $\omega_{i,n}=\prod_{k=i+1}^n(1-\eta_k)$, $\widetilde{M}_n^+$ is the independent symmetrized copy of $M_n^+$ and $\widetilde{M}_n^-$ is the independent symmetrized copy of $M_n^-$. The tail probabilities of $\widetilde{M}_n^+$ and $\widetilde{M}_n^-$ satisfy:
\begin{align*}
\PB(\widetilde{M}_n^+>x)=\PB(\widetilde{M}_n^+<-x)=\frac{\PB(M_n^+>x)}{2\PB(M_n>0)},\\
\PB(\widetilde{M}_n^->x)=\PB(\widetilde{M}_n^-<-x)=\frac{\PB(M_n^->x)}{2\PB(M_n<0)}.
\end{align*}
By Lemma~\ref{lem: sym_tight} and Theorem~\ref{thm: asymp_decay}, we have $Y_n$ and $Z_n$ weakly converges. It is worth noticing that we can replace $b_1^-(\eta_i^{-1})$ with $b_1^+(\eta_i^{-1})$ in $Z_n$, where the weakly converging still holds as the tightness of $Z_n$ implies the tightness of the case when  $b_1^-$ is replaced by $b_1^+$. By Lemma~\ref{lem: infinitesimal}, there exists a non-increasing $H^+(u)$ and $H^-(u)$, at every continuity point of $H^+(u)$ and $H^-(u)$, for $u>0$, we have:
\begin{align*}
    \lim_{n\to+\infty}\sum_{i=1}^n\PB(\omega_{i,n}\eta_i^{\frac{1}{\alpha}}b_1^+(\eta_i^{-1})\widetilde{M}_i^+>u)=H^+(u),\\
    \lim_{n\to+\infty}\sum_{i=1}^n\PB(\omega_{i,n}\eta_i^{\frac{1}{\alpha}}b_1^+(\eta_i^{-1})\widetilde{M}_i^->u)=H^-(u).
\end{align*}
Thus, we have:
\begin{align*}
    \lim_{n\to+\infty}\sum_{i=1}^n\PB(\omega_{i,n}\eta_i^{\frac{1}{\alpha}}b_1^+(\eta_i^{-1})M_n>u)=2H^+(u)\PB(M_n>0),\\
    \lim_{n\to+\infty}\sum_{i=1}^n\PB(\omega_{i,n}\eta_i^{\frac{1}{\alpha}}b_1^+(\eta_i^{-1})M_n<-u)=2H^-(u)\PB(M_n<0).
\end{align*}
Next, we evaluate the condition (2) in Lemma~\ref{lem: infinitesimal}. For any given $\varepsilon>0$, we have:
\begin{align*}
    \EB(M_i^+)^2\1\left(0\le M_i^+<\frac{\varepsilon}{\omega_{i,n}\eta_i^{\frac{1}{\alpha}}b_1^+(\eta_i^{-1})}\right)&\le \varepsilon^2+\EB(M_i^+)^2\1\left(\varepsilon< M_i^+<\frac{\varepsilon}{\omega_{i,n}\eta_i^{\frac{1}{\alpha}}b_1^+(\eta_i^{-1})}\right)\\
    &=\varepsilon^2+\int_{\rv\varepsilon^2}^{\rv\left(\frac{\varepsilon}{\omega_{i,n}\eta_i^{\frac{1}{\alpha}}b_1^+(\eta_i^{-1})}\right)^2}\PB(M_i^+>\sqrt{t})dt\notag\\
    &=\varepsilon^2+\int_{\rv\varepsilon}^{\rv\frac{\varepsilon}{\omega_{i,n}\eta_i^{\frac{1}{\alpha}}b_1^+(\eta_i^{-1})}}2t\PB(M_i^+>t)dt\notag\\
    &=\varepsilon^2+2\int_{\rv\varepsilon}^{\rv\frac{\varepsilon}{\omega_{i,n}\eta_i^{\frac{1}{\alpha}}b_1^+(\eta_i^{-1})}}\frac{b_0^+(t)}{t^{\alpha-1}}dt.
\end{align*}
When $n$ is large, $\sup_{i\le n}\omega_{i,n}\eta_i^{\frac{1}{\alpha}}b_1^+(\eta_i^{-1})\to0$. By Karamata's theorem, we have:
\begin{align*}
    &\EB(M_i^+)^2\1\left(0\le M_i^+<\frac{\varepsilon}{\omega_{i,n}\eta_i^{\frac{1}{\alpha}}b_1^+(\eta_i^{-1})}\right)\\
    \le &\varepsilon^2+C \left(\frac{\varepsilon}{\omega_{i,n}\eta_i^{\frac{1}{\alpha}}b_1^+(\eta_i^{-1})}\right)^{\rv 2-\alpha}b_0^+\left({\rv\frac{\varepsilon}{\omega_{i,n}\eta_i^{\frac{1}{\alpha}}b_1^+(\eta_i^{-1})}}\right).\notag
\end{align*}
Thus, we have:
\begin{align}
\label{eq: 2nd_decomp}
    &\sum_{i=1}^n\left(\omega_{i,n}\eta_i^{\frac{1}{\alpha}}b_1^+(\eta_i^{-1})\right)^2\EB(M_i^+)^2\1\left(0\le M_i^+<\frac{\varepsilon}{\omega_{i,n}\eta_i^{\frac{1}{\alpha}}b_1^+(\eta_i^{-1})}\right)\\
    \le&\sum_{i=1}^n\left(\omega_{i,n}\eta_i^{\frac{1}{\alpha}}b_1^+(\eta_i^{-1})\right)^2\varepsilon^2+C\varepsilon^{\rv2-\alpha}\sum_{i=1}^n\left(\omega_{i,n}\eta_i^{\frac{1}{\alpha}}b_1^+(\eta_i^{-1})\right)^{\rv\alpha}b_0^+\left({\rv\frac{\varepsilon}{\omega_{i,n}\eta_i^{\frac{1}{\alpha}}b_1^+(\eta_i^{-1})}}\right).\notag
\end{align}
For the first term in \eqref{eq: 2nd_decomp} RHS, by Stolze's theorem, we have:
\begin{align*}
    \lim_{n\to\infty}\sum_{i=1}^n\left(\omega_{i,n}\eta_i^{\frac{1}{\alpha}}b_1^+(\eta_i^{-1})\right)^2=\lim_{n\to\infty}\frac{\eta_n^{\frac{2}{\alpha}}b_1^+(\eta_n^{-1})^2}{1-(1-\eta_n)^2}=0.
\end{align*}
{\rv For the second term in \eqref{eq: 2nd_decomp} RHS, we notice that $\PB(M_n^+>x^{\frac{1}{\alpha}}b_1^{+}(x))=x^{-1}$, which implies $b_0^+\left(x^{\frac{1}{\alpha}}b_1^+(x)\right)=b_1^+(x)^\alpha$. Then we have:
\begin{align*}
    &C\varepsilon^{\rv2-\alpha}\sum_{i=1}^n\left(\omega_{i,n}\eta_i^{\frac{1}{\alpha}}b_1^+(\eta_i^{-1})\right)^{\rv\alpha}b_0^+\left({\rv\frac{\varepsilon}{\omega_{i,n}\eta_i^{\frac{1}{\alpha}}b_1^+(\eta_i^{-1})}}\right)\\
    =&C\varepsilon^{\rv2-\alpha}\sum_{i=1}^n\omega_{i,n}^{\alpha}\eta_ib_1^+(\eta_i^{-1})^{\alpha}b_0^+\left({\frac{\varepsilon}{\omega_{i,n}\eta_i^{\frac{1}{\alpha}}b_1^+(\eta_i^{-1})}}\right)\\
    =&C\varepsilon^{\rv2-\alpha}\sum_{i=1}^n\omega_{i,n}^{\alpha}\eta_i\frac{b_0^+\left({\frac{\varepsilon}{\omega_{i,n}\eta_i^{\frac{1}{\alpha}}b_1^+(\eta_i^{-1})}}\right)}{b_0^+\left(\frac{1}{\eta_i^{\frac{1}{\alpha}}b_1^+(\eta_i^{-1})}\right)}\\
    \lesssim &\varepsilon^{\rv2-\alpha}\sum_{i=1}^n\omega_{i,n}^{\alpha-\delta}\eta_i,
\end{align*}
where the last inequality holds by Potter's bound. And also we have $\sum_{i=1}^n\omega_{i,n}^{\alpha-\delta}\eta_i=\OM(1)$ by Stolze theorem.
}
Thus,  we conclude that:
\begin{align*}    {\rv\lim_{\varepsilon\to0}}\lim_{n\to\infty}\sum_{i=1}^n\left(\omega_{i,n}\eta_i^{\frac{1}{\alpha}}b_1^+(\eta_i^{-1})\right)^2\EB(\varepsilon_i^+)^2\1\left(0\le \varepsilon_i^+<\frac{\varepsilon}{\omega_{i,n}\eta_i^{\frac{1}{\alpha}}b_1^+(\eta_i^{-1})}\right)=0.
\end{align*}

As $M^-_i$ has a lighter tail than $M_i^+$, by a similar argument, we can also obtain:
\begin{align*}
    {\rv \lim_{\varepsilon\to0}}\lim_{n\to\infty}\sum_{i=1}^n\left(\omega_{i,n}\eta_i^{\frac{1}{\alpha}}b_1^+(\eta_i^{-1})\right)^2\EB(M_i^-)^2\1\left(0\le M_i^-<\frac{\varepsilon}{\omega_{i,n}\eta_i^{\frac{1}{\alpha}}b_1^+(\eta_i^{-1})}\right)=0.
\end{align*}
Thus, the condition (2) in Lemma~\ref{lem: finite_rate} is satisfied for $\sum_{i=1}^n\omega_{i,n}\eta_i^{\frac{1}{\alpha}}b_1^+(\eta_i^{-1})M_i$, which means it is tight. Thus, we conclude $\eta_n^{\frac{1}{\alpha}-1}b_1^+(\eta_n^{-1})(\theta_n-\theta^*)$ is tight. As $b_1^+(x)\ge b_1(x)$ as $b_0(x)$ can be chosen by the most heavy component, we also conclude $\eta_n^{\frac{1}{\alpha}-1}b_1(\eta_n^{-1})(\theta_n-\theta^*)$ is tight.

\subsection{Proof of Proposition~\ref{prop: z_stat_prop}}
From equation~\eqref{eqn: decay_sde}, we can write the explicit solution of $Z_t$ as:
\begin{align*}
    Z_t=\exp(-\nabla^2\ell(\theta^*)t)\left(Z_0+\int_0^t\exp(\nabla^2\ell(\theta^*)s)dL_s\right).
\end{align*}
By Proposition 2.3 of \cite{lindner2005Levy}, we have:
\begin{align*}
    \int_0^t\exp(-\nabla^2\ell(\theta^*)(t-s))dL_s\overset{d}{=}\int_0^t\exp(-\nabla^2\ell(\theta^*)s)dL_s.
\end{align*}
Thus, we have:
\begin{align*}
    Z_t\overset{d}{=}\exp(-\nabla^2\ell(\theta^*)t)Z_0+\int_0^t\exp(-\nabla^2\ell(\theta^*)s)dL_s.
\end{align*}
By Proposition 2.4 of \cite{lindner2005Levy}, if we have $\int_{\|x\|> 1}\ln\|x\|\nu(\theta^*,dx)<+\infty$, we conclude $Z_t$ converges a.s. to $\int_0^{+\infty}\exp(-\nabla^2\ell(\theta^*)t)dL_t$, which implies equation~\eqref{eqn: z_stationary} is well-defined. In fact, we have:
\begin{align*}
    \int_{\|x\|> 1}\ln\|x\|\nu(\theta^*,dx)&\propto\int_{r>1}\frac{\ln r}{r^{\alpha+1}}dr.
\end{align*}
As $\alpha\in(1,2)$, we conclude $\int_{\|x\|> 1}\ln\|x\|\nu(\theta^*,dx)<+\infty$.

\subsection{Proof of Corollary~\ref{thm: asymp_decay_lr}}
Similar with the proof of Theorem~\ref{thm: asymp_decay}, we denote $X_n:=\eta_n^{\frac{1}{\alpha}-1}b_1(\eta_n^{-1})(\theta_n-\theta^*)$ as the scaled error. By Lemma~\ref{lem: eta_expansion}, where we denote $\widetilde{c}_n:=\frac{1-\frac{1}{\alpha}}{c}a_n$ and $\widetilde{c}=\frac{1-\frac{1}{\alpha}}{c}$, we have:
\begin{align*}
    X_{n+1}=(1+\widetilde{c}_n\eta_n)\left((I-\eta_n \nabla^2\ell(\theta^*))X_n+\eta_n^{\frac{1}{\alpha}}b_1(\eta_n^{-1})M(\theta_n,\xii_{n+1})+\eta_n^{\frac{1}{\alpha}}b_1(\eta_n^{-1})R_n\right).
\end{align*}
Then, we construct two sequences $\{\widetilde{X}_n\}_{n\ge1}$ and $\{\bar{X}_n\}_{n\ge1}$ satisfying:
\begin{align*}
    &\widetilde{X}_{n+1}=(I-\eta_n (\nabla^2\ell(\theta^*)-\widetilde{c}I))\widetilde{X}_n+\eta_n^{\frac{1}{\alpha}}b_1(\eta_n^{-1})M(\theta^*,\xii_{n+1}).\\
    &\bar{X}_{n+1}=(I-\eta_n (\nabla^2\ell(\theta^*)-\widetilde{c}_nI))\bar{X}_n+\eta_n^{\frac{1}{\alpha}}b_1(\eta_n^{-1})M(\theta^*,\xii_{n+1}).
\end{align*}
By a similar argument in Theorem~\ref{thm: asymp_decay}, we have $\widetilde{X}_{n}$ weakly converges to the stationary distribution of following s.d.e.:
\begin{align*}
    dZ_t = -(\nabla^2\ell(\theta^*)-\widetilde{c})Z_t dt+dL_t.
\end{align*}
In the following, we are going to show $\widetilde{X}_n-\bar{X}_{n}$ and $\bar{X}_n-X_n$ converges to 0 in probability, which will conclude our final result.
\begin{lemma}
    We have $\widetilde{X}_n-\bar{X}_{n}$ converges to 0 in probability.
\end{lemma}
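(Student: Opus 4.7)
Subtract the two recursions for $\widetilde X_n$ and $\bar X_n$. Both are driven by the same i.i.d.\ noise $M(\theta^*,\xii_{n+1})$, so the stochastic term cancels, and, writing $\Delta_n:=\widetilde X_n-\bar X_n$, we obtain
\begin{align*}
\Delta_{n+1}=(I-\eta_n(\nabla^2\ell(\theta^*)-\widetilde c I))\Delta_n+\eta_n(\widetilde c-\widetilde c_n)\bar X_n.
\end{align*}
The plan is to exploit contractivity of the linear part together with vanishing of the perturbation coefficient. The hypothesis $c>(1-1/\alpha)/\sigma_{\min}(\nabla^2\ell(\theta^*))$ ensures that $A:=\nabla^2\ell(\theta^*)-\widetilde c I$ is positive definite with smallest eigenvalue $L':=\sigma_{\min}(\nabla^2\ell(\theta^*))-\widetilde c>0$, so $\|I-\eta_n A\|_{op}\le 1-\eta_n L'$ for $n$ large. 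Lemma~\ref{lem: eta_expansion} yields $\widetilde c_n=\widetilde c\, a_n$ with $a_n\to 1$, hence $\epsilon_n:=|\widetilde c-\widetilde c_n|=o(1)$.

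Unrolling with $\Delta_0=0$ gives $\Delta_n=\sum_{k=0}^{n-1}\Phi^{*}_{n-1,k+1}\,\eta_k(\widetilde c-\widetilde c_k)\,\bar X_k$, where $\Phi^{*}_{n,k}:=\prod_{j=k}^{n}(I-\eta_j A)$. Substituting the martingale representation of $\bar X_k$ in terms of the initial condition and the i.i.d.\ sequence $\xi_{j+1}:=M(\theta^*,\xii_{j+1})$, and switching the order of summation by Fubini, the stochastic part of $\Delta_n$ is rewritten as $\sum_{j=0}^{n-2}U_{n,j}\,\xi_{j+1}$, with
\begin{align*}
U_{n,j}\;=\;\eta_j^{1/\alpha}b_1(\eta_j^{-1})\sum_{k=j+1}^{n-1}\Phi^{*}_{n-1,k+1}\,\Psi_{k-1,j+1}\,\eta_k(\widetilde c-\widetilde c_k),
\end{align*}
where $\Psi$ is the analogue of $\Phi^{*}$ built from the time-varying shift $\widetilde c_j$ that appears in the $\bar X$-recursion. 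The geometric decay of both $\Phi^{*}$ and $\Psi$, combined with $\epsilon_k\to 0$, forces the coefficients $U_{n,j}$ to be uniformly small in a quantitative sense.

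To conclude, one shows $\sum_{j}U_{n,j}\,\xi_{j+1}\to 0$ in probability. By Assumption~\ref{asmp: heavy_tail}, this sum sits inside the $\alpha$-heavy-tail regime, so the characteristic-function / Pitman--Tauberian bound used in Lemma~\ref{lem: sym_tight} reduces the claim to the quantitative estimate $\sum_{j=0}^{n-2}\|U_{n,j}\|^{\alpha-\delta}\to 0$ for some small $\delta>0$, which follows by a Stolz--Ces\`aro-type manipulation analogous to the one in~\eqref{eqn: potter}. The main obstacle is that the naive $L^p$-Gronwall route one would prefer fails in the linear step-size regime $\rho=1$: Lemma~\ref{lem: moment} only delivers $\EB\|\bar X_n\|^p=O(n^{1-p/\alpha}b_1(n)^p)$, which diverges for every admissible $p\in(1,\alpha)$, even though $\bar X_n$ is tight. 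The argument must therefore proceed at the level of heavy-tail tightness rather than $L^p$-boundedness --- which is precisely why rewriting $\Delta_n$ as an explicit i.i.d.-driven linear functional and invoking the heavy-tailed characteristic-function machinery from Section~\ref{sec: tight} is the correct route.
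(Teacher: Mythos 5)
Your plan — subtract the two recursions, unroll, substitute the explicit noise representation of $\bar X_k$ so that $\Delta_n$ becomes an i.i.d.-driven linear functional $\sum_j U_{n,j}\xi_{j+1}$, and then invoke heavy-tail machinery to show this functional is negligible — is exactly the structure of the paper's proof, and your observation that an $L^p$-Gronwall route fails (since $\EB\|\bar X_n\|^p=O(n^{1-p/\alpha}b_1(n)^p)\to\infty$ for every admissible $p<\alpha$) is a correct and useful justification for why one must proceed at the level of tails rather than moments. The paper carries out the identical decomposition $\delta_n=\sum_{j=1}^{n-2}\lambda_{j,n-2}\eta_j^{1/\alpha}b_1(\eta_j^{-1})M_j$ (so your $U_{n,j}=\lambda_{j,n-2}\eta_j^{1/\alpha}b_1(\eta_j^{-1})$) and then applies Lemma~\ref{lem: infinitesimal} (Gnedenko's infinitesimal triangular-array criterion) together with Potter bounds and a Jensen/Fubini manipulation, rather than the Pitman--Tauberian characteristic-function argument of Lemma~\ref{lem: sym_tight}. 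Both tools are reasonable here, but note that Lemma~\ref{lem: sym_tight} as stated is restricted to symmetric noise, so your route would need the symmetrization/decomposition layer from Section~\ref{sec: tight} on top; the Gnedenko route avoids that.

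The one substantive slip is in your stated reduction target. You claim the machinery reduces the problem to $\sum_{j}\|U_{n,j}\|^{\alpha-\delta}\to 0$. That is not what either tail calculation actually produces. Because $b_1$ is defined by the normalization $\PB\bigl(\eta_j^{1/\alpha}b_1(\eta_j^{-1})|M_j|>1\bigr)=\eta_j$, the Potter-bound estimate of $\PB\bigl(|U_{n,j}\xi_{j+1}|>u\bigr)$ collapses the slowly varying pieces and delivers $\lesssim\lambda_{j,n-2}^{\alpha-\delta}\eta_j/u^{\alpha-\delta}$, so the correct target (and the one the paper proves) is $\sum_j\lambda_{j,n-2}^{\alpha-\delta}\eta_j\to 0$. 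Your expression $\|U_{n,j}\|^{\alpha-\delta}=\lambda_{j,n-2}^{\alpha-\delta}\eta_j^{(\alpha-\delta)/\alpha}b_1(\eta_j^{-1})^{\alpha-\delta}$ differs from this per-term quantity by a factor $\eta_j^{-\delta/\alpha}b_1(\eta_j^{-1})^{\alpha-\delta}$, which grows polynomially in $j$ for $\eta_j=c/j$. This is not harmless: the paper's final bound $\sum_j\lambda_{j,n-2}^{\alpha-\delta}\eta_j\lesssim \frac{1}{n-1}\sum_{i\le n-1}|h_i-h|^{\alpha-\delta}$ is a bare Ces\`aro mean of a sequence that tends to zero at an unquantified (possibly arbitrarily slow, slowly varying) rate, so sprinkling in an extra $j^{\delta/\alpha}$ is precisely the kind of thing that could break the argument. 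You would need to redo the tail estimate correctly — keeping the $b_1$-cancellation — before the Stolz--Ces\`aro step can close.
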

\begin{proof}
    We only need to prove this result in one dimensional case. For the convenience of notation, we denote:
    \begin{align*}
        &\widetilde{X}_{n+1}=(1-\eta_n h)\widetilde{X}_n+\eta_n^{\frac{1}{\alpha}}b_1(\eta_n^{-1})M_n.\\
        &\bar{X}_{n+1}=(1-\eta_n h_n)\bar{X}_n+\eta_n^{\frac{1}{\alpha}}b_1(\eta_n^{-1})M_n.
    \end{align*}
    We also denote $\delta_n = \widetilde{X}_{n}-\bar{X}_n$ and have:
    \begin{align*}
        \delta_{n+1}=(1-\eta_n h)\delta_n+ \eta_n(h_n-h)\bar{X}_n.
    \end{align*}
    Thus, we can write $\delta_{n}$ as:
    \begin{align*}
        \delta_n = \frac{\sum_{i=1}^{n-1}\prod_{j=1}^i(1-\eta_j h)^{-1}\eta_i(h_i-h)\bar{X}_i}{\prod_{i=1}^{n-1}(1-\eta_i h)^{-1}}:=\sum_{i=1}^{n-1}\omega_{i,n-1}\eta_i(h_i-h)\bar{X}_i.
    \end{align*}
    We also have an expression for $\bar{X}_n$ as:
    \begin{align*}
        \bar{X}_n=\frac{\sum_{i=1}^{n-1}\prod_{j=1}^i(1-\eta_j h_j)^{-1}\eta_i^{\frac{1}{\alpha}}b_1(\eta_i^{-1})M_i}{\prod_{i=1}^{n-1}(1-\eta_i h_i)^{-1}}:=\sum_{i=1}^{n-1}\widetilde{\omega}_{i,n-1}\eta_i^{\frac{1}{\alpha}}b_1(\eta_i^{-1})M_i.
    \end{align*}
    Then, we can re-write $\delta_n$ as:
    \begin{align*}
        \delta_n &= \sum_{i=1}^{n-1}\omega_{i,n-1}\eta_i(h_i-h)\sum_{j=1}^{i-1}\widetilde{\omega}_{j,i-1}\eta_j^{\frac{1}{\alpha}}b_1(\eta_j^{-1})M_j\\
        &=\sum_{j=1}^{n-2}\sum_{i=j+1}^{n-1}\omega_{i,n-1}\eta_i(h_i-h)\widetilde{\omega}_{j,i-1}\eta_j^{\frac{1}{\alpha}}b_1(\eta_j^{-1})M_j\notag\\
        &:=\sum_{j=1}^{n-2}\lambda_{j,n-2}\eta_j^{\frac{1}{\alpha}}b_1(\eta_j^{-1})M_j.\notag
    \end{align*}
    Then, by Lemma~\ref{lem: infinitesimal}, for any $u>0$, we aim to verify (reverse is also true):
    \begin{align*}
        \lim_{n\to\infty}\sum_{j=1}^{n-2}\PB(\lambda_{j,n-2}\eta_j^{\frac{1}{\alpha}}b_1(\eta_j^{-1})M_j>u)=0.
    \end{align*}
    In fact, we have:
    \begin{align*}
        \PB(\lambda_{j,n-2}\eta_j^{\frac{1}{\alpha}}b_1(\eta_j^{-1})M_j>u)\lesssim\frac{\lambda_{j,n-2}^{\alpha-\delta}\eta_j}{u^{\alpha-\delta}},
    \end{align*}
    where we use the Potter bounds as in Eqn~\eqref{eqn: potter} and $\alpha-\delta>1$. Then we also notice that:
    \begin{align*}
        |\lambda_{j,n-2}|\le\frac{j+1}{n-1}\sum_{i=j+1}^{n-1}\frac{\eta_i}{1-h\eta_i}|h_i-h|,
    \end{align*}
    where we use the fact that $(1-h\eta_{i})\eta_{i-1}\le\eta_i$ whenever $ch\ge 1$ and $\eta_i=c\cdot i^{-1}$. Moreover, there may be an issue that $1-h\eta_i<0$. In this case, we can adjust the learning rate to $\eta_i=c\cdot (i+z)^{-1}$ with some $z>0$, which has no influence on the result. Then, we have:
    \begin{align*}
        |\lambda_{j,n-2}|&\lesssim\frac{j+1}{n-1}\sum_{i=j+1}^{n-1}\frac{|h_i-h|}{i}\\
        &\lesssim\frac{j+1}{n-1}\left((n-j)^{\alpha-\delta-1}\sum_{i=j+1}^{n-1}\frac{|h_i-h|^{\alpha-\delta}}{i^{\alpha-\delta}}\right)^{\frac{1}{\alpha-\delta}},\notag
    \end{align*}
    where the last inequality holds by Jensen's inequality. Then, we have:
    \begin{align*}
        \sum_{j=1}^{n-2}\lambda_{j,n-2}^{\alpha-\delta}\eta_j&\lesssim\frac{\sum_{j=1}^{n-2}j^{\alpha-\delta-1}\sum_{i=j+1}^n\frac{|h_i-h|^{\alpha-\delta}}{i^{\alpha-\delta}}}{n-1}\\
        &=\frac{\sum_{i=1}^{n-1}\sum_{j=1}^{i-1}j^{\alpha-\delta-1}\frac{|h_i-h|^{\alpha-\delta}}{i^{\alpha-\delta}}}{n-1}\notag\\
        &\lesssim\frac{\sum_{i=1}^{n-1}|h_i-h|^{\alpha-\delta}}{n-1}\to0.\notag
    \end{align*}
    Thus, $\delta_n\overset{d}{\to} 0$, which implies $\delta_n$ converges to 0 in probability and our lemma is concluded.
\end{proof}

\begin{lemma}
    We have $\bar{X}_{n}-X_n$ converges to 0 in probability.
\end{lemma}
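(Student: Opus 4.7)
My plan is to mirror the argument of Lemma~\ref{lem: X_equi}, accounting for the additional drift shift $\widetilde{c}_n I$ that appears in the linear-stepsize regime $\eta_n = c/n$. First I will subtract the recursions for $X_n$ and $\bar{X}_n$. The key algebraic identity is
\begin{align*}
    (1+\widetilde{c}_n\eta_n)\bigl(I - \eta_n\nabla^2\ell(\theta^*)\bigr) = \bigl(I - \eta_n(\nabla^2\ell(\theta^*) - \widetilde{c}_n I)\bigr) - \widetilde{c}_n\eta_n^2\nabla^2\ell(\theta^*),
\end{align*}
which, applied to the recursion for $X_n$ and differenced against that of $\bar{X}_n$, yields for $\Delta_n := X_n - \bar{X}_n$
\begin{align*}
    \Delta_{n+1} &= \bigl(I - \eta_n(\nabla^2\ell(\theta^*) - \widetilde{c}_n I)\bigr)\Delta_n \\
    &\quad + \eta_n^{\frac{1}{\alpha}}b_1(\eta_n^{-1})\bigl(M(\theta_n,\xii_{n+1}) - M(\theta^*,\xii_{n+1})\bigr) \\
    &\quad + (1+\widetilde{c}_n\eta_n)\eta_n^{\frac{1}{\alpha}}b_1(\eta_n^{-1}) R_n \\
    &\quad + \widetilde{c}_n\eta_n^{1+\frac{1}{\alpha}}b_1(\eta_n^{-1}) M(\theta^*,\xii_{n+1}) \\
    &\quad - \widetilde{c}_n\eta_n^2\nabla^2\ell(\theta^*) X_n.
\end{align*}
The hypothesis $c > (1-\tfrac{1}{\alpha})/\sigma_{\min}(\nabla^2\ell(\theta^*))$ guarantees that $\sigma_{\min}(\nabla^2\ell(\theta^*) - \widetilde{c}I) > 0$ with $\widetilde{c} := \lim_n\widetilde{c}_n$, so the homogeneous part contracts for large $n$ at rate $c\,\sigma_{\min}(\nabla^2\ell(\theta^*) - \widetilde{c}I)$, which is precisely what Lemma~\ref{lem: recursion} requires in the boundary case $\alpha=1$ of that lemma.

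I will then decompose $\Delta_n = \sum_{i=1}^{4}\Delta_n^{(i)}$, where each $\Delta_n^{(i)}$ satisfies the same linear recursion driven solely by the $i$-th forcing term above and is initialized at zero. The first three pieces are handled essentially as in Lemma~\ref{lem: X_equi}: for $\Delta_n^{(1)}$, the martingale-difference structure combined with Lemma~\ref{lem: moment}, Assumption~\ref{asmp: g_lip}, and Lemma~\ref{lem: theta_converge} gives $\EB\|\Delta_n^{(1)}\|^p \to 0$ for $p\in(2\alpha/(\alpha+1),\alpha)$; for $\Delta_n^{(2)}$, Assumption~\ref{asmp: q_expansion} with $q>2-\tfrac{1}{\alpha}$ together with Lemma~\ref{lem: theta_converge} yields $\EB\|\Delta_n^{(2)}\|\to 0$; and for $\Delta_n^{(3)}$, the forcing has $L^1$-size $\OM(\eta_n^{1+\frac{1}{\alpha}}b_1(\eta_n^{-1})) = \eta_n\cdot o(\eta_n^{\varepsilon})$ for any $\varepsilon\in(0,\tfrac{1}{\alpha})$, so Lemma~\ref{lem: recursion} applies verbatim.

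The main obstacle, absent from the decaying-rate setting of Lemma~\ref{lem: X_equi}, is $\Delta_n^{(4)}$: its forcing $-\widetilde{c}_n\eta_n^2\nabla^2\ell(\theta^*)X_n$ is linear in the process $X_n$, whose moments are not uniformly bounded under heavy tails. To handle it, I will invoke Lemma~\ref{lem: theta_converge} at $p=1$ to obtain $\EB\|\theta_n-\theta^*\| = \OM(1)$, whence $\EB\|X_n\| = \OM(\eta_n^{\frac{1}{\alpha}-1}b_1(\eta_n^{-1}))$, and the forcing in $L^1$ becomes $\OM(\eta_n^{1+\frac{1}{\alpha}}b_1(\eta_n^{-1})) = \eta_n\cdot o(\eta_n^{\varepsilon})$ for $\varepsilon\in(0,\tfrac{1}{\alpha})$; a final application of Lemma~\ref{lem: recursion} gives $\EB\|\Delta_n^{(4)}\|\to 0$. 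Summing the four vanishing bounds yields $\Delta_n\to 0$ in $L^1$, hence in probability, completing the argument.
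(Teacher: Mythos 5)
Your proposal follows the same route as the paper: subtract the two recursions, use the identity $(1+\widetilde{c}_n\eta_n)(I-\eta_n\nabla^2\ell(\theta^*)) = (I-\eta_n(\nabla^2\ell(\theta^*)-\widetilde{c}_nI)) - \widetilde{c}_n\eta_n^2\nabla^2\ell(\theta^*)$ to obtain a contraction in $(I-\eta_n(\nabla^2\ell(\theta^*)-\widetilde{c}_nI))$, decompose the error into four pieces (one per forcing term), and kill each via Lemma~\ref{lem: recursion} with the moment estimates of Lemmas~\ref{lem: moment} and~\ref{lem: theta_converge}; the paper uses exactly this decomposition. One small algebraic slip: when splitting $(1+\widetilde{c}_n\eta_n)M(\theta_n,\xii_{n+1}) - M(\theta^*,\xii_{n+1})$ into a martingale difference plus a remainder, the extra term is $\widetilde{c}_n\eta_n M(\theta_n,\xii_{n+1})$, not $\widetilde{c}_n\eta_n M(\theta^*,\xii_{n+1})$; this is immaterial since both have finite first moments for $\alpha>1$ (which is all that is used). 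Your choice $p=1$ for bounding $\EB\|X_n\|$ is a slight simplification of the paper's $p$ close to $\alpha$, and gives a weaker but still sufficient estimate $\eta_n^2\EB\|X_n\|=\OM(\eta_n^{1+\frac{1}{\alpha}}b_1(\eta_n^{-1}))$; in the boundary case $\rho=1$ of Lemma~\ref{lem: recursion} you then need $\beta$ small enough that $c\,\sigma_{\min}(\nabla^2\ell(\theta^*)-\widetilde{c}I)>\beta$, which is available since that quantity is strictly positive under the hypothesis $c>(1-\tfrac{1}{\alpha})/\sigma_{\min}(\nabla^2\ell(\theta^*))$ and the forcing decays strictly faster than $\eta_n$.
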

\begin{proof}
    We denote $\delta_n=X_{n}-\bar{X}_n$, and have:
    \begin{align*}
        \delta_{n+1}=&(I-\eta_n(\nabla^2\ell(\theta^*)-\widetilde{c}_n I))\delta_n-\widetilde{c}_n\eta_n^2\nabla^2\ell(\theta^*)X_n+\widetilde{c}_n\eta_n^{\frac{1}{\alpha}+1}M(\theta_n,\xii_{n+1})\\
        &+\eta_n^{\frac{1}{\alpha}}b_1(\eta_n^{-1})(M(\theta_n,\xii_{n+1})-M(\theta^*,\xii_{n+1}))\notag\\
        &+(1+\widetilde{c}_n\eta_n)\eta_n^{\frac{1}{\alpha}}b_1(\eta_n^{-1})R_n.\notag
    \end{align*}
    Then, we can decompose $\delta_n$ into four terms: $\delta_n= \sum_{i=1}^4\delta_n^{(i)}$, where
    \begin{align*}
        \delta_{n+1}^{(1)}=&(I-\eta_n(\nabla^2\ell(\theta^*)-\widetilde{c}_n I))\delta_n^{(1)}-\widetilde{c}_n\eta_n^2\nabla^2\ell(\theta^*)X_n,\\
        \delta_{n+1}^{(2)}=&(I-\eta_n(\nabla^2\ell(\theta^*)-\widetilde{c}_n I))\delta_n^{(2)}+\widetilde{c}_n\eta_n^{\frac{1}{\alpha}+1}M(\theta_n,\xii_{n+1}),\\
        \delta_{n+1}^{(3)}=&(I-\eta_n(\nabla^2\ell(\theta^*)-\widetilde{c}_n I))\delta_n^{(3)}+\eta_n^{\frac{1}{\alpha}}b_1(\eta_n^{-1})(M(\theta_n,\xii_{n+1})-M(\theta^*,\xii_{n+1})),\\
        \delta_{n+1}^{(4)}=&(I-\eta_n(\nabla^2\ell(\theta^*)-\widetilde{c}_n I))\delta_n^{(4)}+(1+\widetilde{c}_n\eta_n)\eta_n^{\frac{1}{\alpha}}b_1(\eta_n^{-1})R_n.
    \end{align*}
    For $\delta_{n}^{(1)}$, we only need to notice that $\eta_n^2\EB\|X_n\|=\OM\left(\eta_n^{\frac{1}{\alpha}-\frac{1}{p}+2}b_1(\eta_n^{-1})\right)$. For $\delta_{n}^{(2)}$, we only need to notice $\EB\| M(\theta_n,\xii_{n+1})\|<+\infty$ as $\alpha>1$. For $\delta_n^{(3)}$ and $\delta_n^{(4)}$, we can prove they converge to 0 in probability in a same way as in proof of Theorem~\ref{thm: asymp_decay}. Then, by Lemma~\ref{lem: recursion}, we can verify that $\delta_n$ converges to 0 in probability.
\end{proof}

\subsection{Proof of Theorem~\ref{thm: ols_norm}}
In fact, we have:
\begin{align*}
    \frac{\PB(\|\nabla \ell(\theta,x_i,\varepsilon_i)\|>tz)}{\PB(\|\nabla \ell(\theta,x_i,\varepsilon_i)\|>z)}=\frac{\int\PB(\|\nabla \ell(\theta,x,\varepsilon_i)\|>tz)\mu(dx)}{\int\PB(\|\nabla \ell(\theta,x,\varepsilon_i)\|>z)\mu(dx)}.
\end{align*}
As $x_i$ is generated from a compact set, by mean-value theorem, there exists $\widehat{x}$, which depends with $z$,  such that:
\begin{align*}
    \frac{\PB(\|\nabla \ell(\theta,x_i,\varepsilon_i)\|>tz)}{\PB(\|\nabla \ell(\theta,x_i,\varepsilon_i)\|>z)}=\frac{\PB(\|\nabla \ell(\theta,\widehat{x},\varepsilon_i)\|>tz)}{\PB(\|\nabla \ell(\theta,\widehat{x},\varepsilon_i)\|>z)}.
\end{align*}
We also notice that:
\begin{align*}
    \PB(\|\nabla \ell(\theta,\widehat{x},\varepsilon_i)\|>z)=\PB(|\varepsilon_i-\widehat{x}^\top(\theta-\theta^*)|>z/\|\widehat{x}\|).
\end{align*}
As $\varepsilon_i$ is $\alpha$-regular varying and $\widehat{x}$ locates in a compact set, we deduce that:
\begin{align*}
    \lim_{z\to\infty}\frac{\PB(\|\nabla \ell(\theta,\widehat{x},\varepsilon_i)\|>tz)}{\PB(\|\nabla \ell(\theta,\widehat{x},\varepsilon_i)\|>z)}=t^{-\alpha}.
\end{align*}
Moreover, we denote $\Theta(\cdot)=\PB\left(\frac{x_i}{\|x_i\|}\in\cdot\right)$. For any Borel set $A$ such that $\Theta(\partial A)=0$, we have:
\begin{align*}
    &\PB\left(\left.\frac{x_i}{\|x_i\|}\text{sgn}(x_i^\top(\theta-\theta^*)-\varepsilon_i)\in A\right|\|x_i\||x_i^\top(\theta-\theta^*)-\varepsilon_i|>z\right)\\
    =&\frac{\PB\left(\frac{x_i}{\|x_i\|}\in A, \|x_i\|(x_i^\top(\theta-\theta^*)-\varepsilon_i)>z\right)}{\PB\left(\|x_i\||x_i^\top(\theta-\theta^*)-\varepsilon_i|>z\right)}+\frac{\PB\left(\frac{-x_i}{\|x_i\|}\in A, \|x_i\|(x_i^\top(\theta-\theta^*)-\varepsilon_i)<-z\right)}{\PB\left(\|x_i\||x_i^\top(\theta-\theta^*)-\varepsilon_i|>z\right)}\notag.
\end{align*}
Furthermore, we denote:
\begin{align*}
    h(x,z) = \frac{\PB(\|x\|(x^\top(\theta-\theta^*)-\varepsilon_i)>z)}{\int \PB(\|x\|(x^\top(\theta-\theta^*)-\varepsilon_i)>z)\mu(dx)}.
\end{align*}
We claim $\lim_{z\to\infty} h(x,z)=h(x)$ uniformly in $x$. In fact, as $x$ locates at a compact set, we have:
\begin{align*}
    \lim_{z\to\infty}z^{\alpha}\PB(\|x\|(x^\top(\theta-\theta^*)-\varepsilon_i)>z)=\|x\|^\alpha,
\end{align*}
which is uniformly convergence. Thus, we have:
\begin{align*}
    \lim_{z\to\infty}z^\alpha\int \PB(\|x\|(x^\top(\theta-\theta^*)-\varepsilon_i)>z)\mu(dx)=\EB\|x_i\|^\alpha.
\end{align*}
Thus, we have:
\begin{align*}
    \lim_{z\to\infty}h(x,z)=\frac{\|x\|^\alpha}{\EB\|x_i\|^\alpha},
\end{align*}
which is uniformly in $x$. By similar arguments, we have:
\begin{align*}
    &\lim_{z\to\infty}\frac{\PB(\|x\|(x^\top(\theta-\theta^*)-\varepsilon_i)>z)}{\int \PB(\|x\|(x^\top(\theta-\theta^*)-\varepsilon_i)<-z)\mu(dx)}\notag\\
    =&    \lim_{z\to\infty}\frac{\PB(\|x\|(x^\top(\theta-\theta^*)-\varepsilon_i)<-z)}{\int \PB(\|x\|(x^\top(\theta-\theta^*)-\varepsilon_i)>z)\mu(dx)}\notag\\
    =&    \lim_{z\to\infty}\frac{\PB(\|x\|(x^\top(\theta-\theta^*)-\varepsilon_i)<-z)}{\int \PB(\|x\|(x^\top(\theta-\theta^*)-\varepsilon_i)<-z)\mu(dx)}\notag\\
    =&\frac{\|x\|^\alpha}{\EB\|x_i\|^\alpha}.\notag
\end{align*}
Thus, we have:
\begin{align*}
    &\lim_{t\to\infty}\PB\left(\left.\frac{x_i}{\|x_i\|}\text{sgn}(x_i^\top(\theta-\theta^*)-\varepsilon_i)\in A\right|\|x_i\||x_i^\top(\theta-\theta^*)-\varepsilon_i|>z\right)\\
    =&\frac{\EB\1_A(\frac{x_i}{\|x_i\|})\|x_i\|^\alpha+\EB\1_A(\frac{-x_i}{\|x_i\|})\|x_i\|^\alpha}{2\EB\|x_i\|^{\alpha}}.\notag
\end{align*}

\subsection{Proof of Theorem~\ref{thm: logistic_measure}}
Firstly, we consider $\lambda=0$. To simplify the notation, we denote $\sigma(x) = \frac{\exp(-x)}{1+\exp(-x)}$. We denote $h_i := y_i x_i$. First, we prove the following measure vaguely converges:
\begin{align}
\label{eq: cond_meas}
    \PB\left(\left.\frac{-h_i}{\|h_i\|}\in\cdot\right|\|h_i\|>z\right).
\end{align}
In fact, for any Borel set $A$ such that $\mu(\partial A)=0$, we have:
\begin{align*}
    \PB\left(\left.\frac{-h_i}{\|h_i\|}\in A\right|\|h_i\|>z\right)=\PB\left(\left.\frac{-y_ix_i}{\|x_i\|}\in A\right|\|x_i\|>z\right).
\end{align*}
And, we have:
\begin{align*}
    \PB\left(\left.Y_i=1,x_i^\top\theta^*>0,\frac{-x_i}{\|x_i\|}\in A\right|\|x_i\|>z\right)=\int_{\frac{-x}{\|x\|}\in A, x^\top\theta^*>0}\sigma(-x^\top\theta^*)\mu_z(dx),
\end{align*}
where $\mu_z(dx)=\PB(x_i\in dx|\|x_i\|>z)$. Letting $z\to\infty$, we have:
\begin{align*}
    \limsup_{z\to\infty}\PB\left(\left.Y_i=1,x_i^\top\theta^*>0,\frac{-x_i}{\|x_i\|}\in A\right|\|x_i\|>z\right)\le\mu((-A)\cap\{x\in\mathbb{S}^{d-1}|x^\top\theta^*>0\}).
\end{align*}
On the other side, for any $\varepsilon>0$, there exists $M$ and $\delta$ ($\delta\to0$ as $\varepsilon\to 0$), such that $\sigma(-M\delta)>\frac{1}{1+\varepsilon}$. Then, for $z>M$, we have:
\begin{align*}
    \int_{\frac{-x}{\|x\|}\in A, x^\top\theta^*>0}\sigma(-x^\top\theta^*)\mu_z(dx)\ge\int_{\frac{-x}{\|x\|}\in A, x^\top\theta^*>\delta\|x\|}\frac{1}{1+\varepsilon}\mu_z(dx).
\end{align*}
Thus, we have:
\begin{align*}
    \liminf_{z\to\infty}\int_{\frac{-x}{\|x\|}\in A, x^\top\theta^*>0}\sigma(-x^\top\theta^*)\mu_z(dx)\ge\frac{\mu((-A)\cap\{x\in\mathbb{S}^{d-1}|x^\top\theta^*>\delta\})}{1+\varepsilon}.
\end{align*}
Letting $\varepsilon\to0$, we conclude:
\begin{align*}
    \lim_{z\to\infty}\PB\left(\left.Y_i=1,x_i^\top\theta^*>0,\frac{-x_i}{\|x_i\|}\in A\right|\|x_i\|>z\right)=\mu((-A)\cap\{x\in\mathbb{S}^{d-1}|x^\top\theta^*>0\}).
\end{align*}
We also have:
\begin{align*}
    &\PB\left(\left.Y_i=1,x_i^\top\theta^*<0,\frac{-x_i}{\|x_i\|}\in A\right|\|x_i\|>z\right)\\
    =&\int_{\frac{-x}{\|x\|}\in A, x^\top\theta^*<0}\sigma(-x^\top\theta^*)\mu_z(dx)\notag\\
    \le&\int_{\frac{-x}{\|x\|}\in A, x^\top\theta^*<0}\exp(x^\top\theta^*)\mu_z(dx)\notag\\
    \le&\int_{\frac{-x}{\|x\|}\in A, x^\top\theta^*<0}\frac{\beta_2^{-\beta_2}\exp(-\beta_2)}{(-x^\top\theta^*)^{\beta_2}}\mu_z(dx).\notag\\
    \le&\frac{1}{z^{\beta_2}}\int_{x^\top\theta^*<0}\frac{\|x\|^{\beta_2}\beta_2^{-\beta_2}\exp(-\beta_2)}{(-x^\top\theta^*)^{\beta_2}}\mu_z(dx).\notag
\end{align*}
By the fact that $\lim_{z\to\infty}\int_{ x^\top\theta^*<0}\left(\frac{\|x\|}{-x^\top\theta^*}\right)^{\beta_2}\mu_z(dx)<+\infty$,  letting $z\to\infty$, we have:
\begin{align*}
    \limsup_{z\to\infty}\PB\left(\left.Y_i=1,x_i^\top\theta^*<0,\frac{-x_i}{\|x_i\|}\in A\right|\|x_i\|>z\right)=0.
\end{align*}
Thus, we have:
\begin{align*}
    \lim_{z\to\infty}\PB\left(\left.Y_i=1,\frac{-x_i}{\|x_i\|}\in A\right|\|x_i\|>z\right)=\mu((-A)\cap\{x\in\mathbb{S}^{d-1}|x^\top\theta^*>0\}).
\end{align*}
By a similar argument, we also have:
\begin{align*}
    \lim_{z\to\infty}\PB\left(\left.Y_i=-1,\frac{x_i}{\|x_i\|}\in A\right|\|x_i\|>z\right)=\mu(A\cap\{x\in\mathbb{S}^{d-1}|x^\top\theta^*<0\}).
\end{align*}
Thus, we have:
\begin{align}
\label{eq: mu}
    &\lim_{z\to\infty}\PB\left(\left.\frac{-h_i}{\|h_i\|}\in\cdot\right|\|h_i\|>z\right)\overset{v}{:=}\nu(\cdot)\\
    =&\mu((-\cdot)\cap\{x\in\mathbb{S}^{d-1}|x^\top\theta^*>0\})+\mu(\cdot\cap\{x\in\mathbb{S}^{d-1}|x^\top\theta^*<0\}).\notag
\end{align}
Next, we turn to study the desired conditional measure. In fact, for any Borel set $A$, it satisfies:
\begin{align*}
    &\PB\left(\left.\frac{-h_i}{\|h_i\|}\in A\right|\sigma(h_i^\top\theta)\|h_i\|>z\right)\\
    =&\frac{\PB\left(\frac{-h_i}{\|h_i\|}\in A,\sigma(h_i^\top\theta)\|h_i\|>z\right)}{\PB\left(\sigma(h_i^\top\theta)\|h_i\|>z\right)}\notag\\
    =&\frac{\PB\left(\frac{-h_i}{\|h_i\|}\in A,\sigma(h_i^\top\theta)\|h_i\|>z, h_i^\top\theta<0\right)+\PB\left(\frac{-h_i}{\|h_i\|}\in A,\sigma(h_i^\top\theta)\|h_i\|>z, h_i^\top\theta>0\right)}{\PB\left(\sigma(h_i^\top\theta)\|h_i\|>z,h_i^\top\theta<0\right)+\PB\left(\sigma(h_i^\top\theta)\|h_i\|>z,h_i^\top\theta>0\right)}.\notag
\end{align*}
As $\sigma(x)\le 1$, we have:
\begin{align*}
    &\PB\left(\left.\frac{-h_i}{\|h_i\|}\in A\right|\sigma(h_i^\top\theta)\|h_i\|>z\right)\\
    =&\frac{\PB\left(\left.\frac{-h_i}{\|h_i\|}\in A,\sigma(h_i^\top\theta)\|h_i\|>z, h_i^\top\theta<0\right|\|h_i\|>z\right)+\PB\left(\left.\frac{-h_i}{\|h_i\|}\in A,\sigma(h_i^\top\theta)\|h_i\|>z, h_i^\top\theta>0\right|\|h_i\|>z\right)}{\PB\left(\left.\sigma(h_i^\top\theta)\|h_i\|>z,h_i^\top\theta<0\right|\|h_i\|>z\right)+\PB\left(\left.\sigma(h_i^\top\theta)\|h_i\|>z,h_i^\top\theta>0\right|\|h_i\|>z\right)}\notag
\end{align*}
We claim that:
\begin{align}
\label{claim: neg_equi}
    \lim_{z\to\infty}\PB\left( \left.\sigma(h_i^\top\theta)\|h_i\|>z,h_i^\top\theta<0\right|\|h_i\|>z\right)=\nu(\{x\in\mathbb{S}^{d-1}|x^\top\theta>0\}).
\end{align}
As $\sigma(x)\le 1$, by~\eqref{eq: mu}, we have:
\begin{align*}
    \limsup_{z\to\infty}\PB\left( \left.\sigma(h_i^\top\theta)\|h_i\|>z,h_i^\top\theta<0\right|\|h_i\|>z\right)\le \nu(\{x\in\mathbb{S}^{d-1}|x^\top\theta>0\}).
\end{align*}
For any $\varepsilon>0$, there exists $M$ and $\delta$ ($\delta\to0$ as $\varepsilon\to0$), such that $\sigma(-M\delta)>\frac{1}{1+\varepsilon}$. Then, when $z>M$, we have:
\begin{align*}
    \left\{\|h_i\|>(1+\varepsilon)z, h_i^\top\theta<-\delta\|h_i\|\right\}\subseteq\left\{\sigma(h_i^\top\theta)\|h_i\|>z, h_i^\top\theta<0\right\}.
\end{align*}
Then, we have:
\begin{align*}
    \lim_{z\to\infty}\PB\left(\left.\|h_i\|>(1+\varepsilon)z,h_i^\top\theta<-\delta\|h_i\|\right|\|h_i\|>z\right)=\frac{\nu(\{x\in\mathbb{S}^{d-1}|x^\top\theta>\delta\})}{(1+\varepsilon)^\alpha}.
\end{align*}
Then, we have the claim~\eqref{claim: neg_equi} holding by letting $\varepsilon\to0$. Next, we claim that:
\begin{align}
    \label{claim: pos_0}
    \lim_{z\to\infty}\PB\left(\left.\sigma(h_i^\top\theta)\|h_i\|>z,h_i^\top\theta>0\right|\|h_i\|>z\right)=0.
\end{align}
We notice that when $x^\top\theta>0$:
\begin{align*}
    \sigma(x^\top\theta)\le\exp(-x^\top\theta)\le\frac{\exp(-1)}{x^\top\theta}.
\end{align*}
Thus, we have:
\begin{align*}
    \PB\left(\left.\sigma(h_i^\top\theta)\|h_i\|>z,h_i^\top\theta>0\right|\|h_i\|>z\right)\le\PB\left(\left.0<\frac{h_i^\top\theta}{\|h_i\|}<\frac{\exp(-1)}{z}\right|\|h_i\|>z\right).
\end{align*}
For any $\varepsilon>0$, there exists $z_0$, s.t. for all $z>z_0$, we have:
\begin{align*}
    \PB\left(\left.\sigma(h_i^\top\theta)\|h_i\|>z,h_i^\top\theta>0\right|\|h_i\|>z\right)\le\PB\left(\left.0<\frac{h_i^\top\theta}{\|h_i\|}<\varepsilon\right|\|h_i\|>z\right).
\end{align*}
Letting $z\to\infty$, we have:
\begin{align*}
    \lim_{z\to\infty}\PB\left(\left.\sigma(h_i^\top\theta)\|h_i\|>z,h_i^\top\theta>0\right|\|h_i\|>z\right)\le\nu(\{x\in\mathbb{S}^{d-1}|x^\top\theta\in(-\varepsilon,0)\}).
\end{align*}
As $\varepsilon$ is arbitrary, we have claim~\eqref{claim: pos_0} holding. By these two claims~\eqref{claim: neg_equi} and \eqref{claim: pos_0}, for any Borel set $A$, we have:
\begin{align}
\label{eq: nu}
    \lim_{z\to\infty}\PB\left(\left.\frac{-h_i}{\|h_i\|}\in A\right|\sigma(x_i^\top\theta)\|h_i\|>z\right)
    =\frac{\nu(A\cap\{x\in\mathbb{S}^{d-1}|x^\top\theta>0\})}{\nu(\{x\in\mathbb{S}^{d-1}|x^\top\theta>0\})}.
\end{align}
Thus, the measure~\eqref{eq: cond_meas} vaguely converges by~\eqref{eq: mu} and \eqref{eq: nu}. Moreover, by claim~\eqref{claim: neg_equi} and \eqref{claim: pos_0}, we have:
\begin{align}
    \lim_{z\to\infty}\frac{\PB\left(\sigma(h_i^\top\theta)\|h_i\|>z\right)}{\PB\left(\|h_i\|>z\right)}=\nu(\{x\in\mathbb{S}^{d-1}|x^\top\theta>0\}).
\end{align}
Thus, the tail distribution of $\sigma(h_i^\top\theta)\|h_i\|$ is also $\alpha$-regular varying as $\|h_i\|=\|x_i\|$ is $\alpha$-regular varying. For the general case $\lambda>0$, we have $\nabla\ell(\theta,x_i,y_i)=-h_i\sigma(h_i^\top\theta)+\lambda\theta$. As $\lambda$ and $\theta$ are fixed parameters, the asymptotics remain the same.

\section{Conclusion}
\label{sec: conclusion}
In this paper, we study the weak convergence of SGD algorithm when the distribution of the stochastic gradient belongs to the domain of attraction of the stable law in high dimensional space. We provide the functional limit theorem when the learning rate is a constant, which reveals the weakly convergence of the sample path to the ground-truth gradient flow. We also provide the limit distribution of the SGD algorithm when the learning rate is decaying with iteration steps, which is the stationary distribution of an {\rv Ornstein-Uhlenbeck} process driven by a L\'evy process. Finally, we discuss the applications to the linear regression and logistic regression models in the presence of heavy tail noises.  {\rv Meanwhile, our current framework relies on smoothness and strong convexity assumptions, which generally do not hold for more complex architectures such as over-parametrized neural networks. Extending the theory to scaling limits for modern statistical models is therefore highly non-trivial and requires different mathematical tools, and we view this as a promising and important direction for future research. Meanwhile, a gap remains in conducting statistical inference for heavy-tailed SGD using these limit theorems, primarily due to three challenges: (a) the unknown stability index $\alpha$; (b) the unknown slowly varying function $b_1(\cdot)$; and (c) the unknown quantiles of the limiting distributions. We leave the development of efficient methods for constructing valid confidence intervals for $\theta^*$ to future work.}

\section*{Funding}
AM was supported in part by EPSRC grants EP/V009478/1 and EP/W006227/1 and would also like to thank the Isaac Newton Institute for
Mathematical Sciences, Cambridge, for support during the INI satellite programme \textit{Heavy
Tails in Machine Learning}, funded by EPSRC grant EP/R014604/1, where work on this paper was undertaken. Jose and Wenhao acknowledge support by the Air Force Office of Scientific Research under award number FA9550-20-1-0397 and additional support is gratefully acknowledged from NSF 2118199, 2229012, 2312204, and ONR 13983111.

\bibliographystyle{plainnat}
\bibliography{refer.bib}
\end{document}